%% file: arxiv_main.tex
\documentclass[11pt]{article}

\usepackage[margin=1in]{geometry}
\usepackage[utf8]{inputenc}
\usepackage[T1]{fontenc}
\usepackage{lmodern}
\usepackage{graphicx}
\usepackage{url}
\usepackage{booktabs}
\usepackage{amsmath,amssymb,amsfonts,amsthm}
\usepackage{mathtools}
\usepackage{enumitem}
\usepackage{array}
\usepackage{tabularx}
\usepackage{multirow}
\usepackage{float}
\usepackage{longtable}
\usepackage{wrapfig}
\usepackage{nicefrac}
\usepackage{microtype}
\usepackage[table]{xcolor}
\usepackage{titletoc}
\usepackage[most]{tcolorbox}
\usepackage[numbers,sort&compress]{natbib}
\usepackage[colorlinks=true,linkcolor=blue,citecolor=blue,urlcolor=blue]{hyperref}
\usepackage[nameinlink,capitalise,noabbrev]{cleveref}

\newtheorem{theorem}{Theorem}
\newtheorem{lemma}{Lemma}
\newtheorem{proposition}{Proposition}
\newtheorem{remark}{Remark}

\newtheorem{corollary}{Corollary}

\newcommand{\sphere}{\mathcal S^{d-1}}
\newcommand{\R}{\mathbb R}
\newcommand{\E}{\mathbb E}
\newcommand{\abs}[1]{\left|#1\right|}
\newcommand{\ip}[2]{\left\langle #1, #2 \right\rangle}
\newcommand{\norm}[1]{\left\lVert #1 \right\rVert}
\newcommand{\N}{\mathcal N}

\newcolumntype{Y}{>{\raggedright\arraybackslash}X}

\crefname{theorem}{Thm.}{Thms.}
\Crefname{theorem}{Thm.}{Thms.}
\crefname{lemma}{Lem.}{Lems.}
\Crefname{lemma}{Lem.}{Lems.}
\crefname{proposition}{Prop.}{Props.}
\Crefname{proposition}{Prop.}{Props.}
\crefname{corollary}{Cor.}{Cors.}
\Crefname{corollary}{Cor.}{Cors.}
\crefname{definition}{Def.}{Defs.}
\Crefname{definition}{Def.}{Defs.}
\crefname{remark}{Rem.}{Rems.}
\Crefname{remark}{Rem.}{Rems.}
\crefname{equation}{Eq.}{Eqs.}
\Crefname{equation}{Eq.}{Eqs.}
\crefname{figure}{Fig.}{Figs.}
\Crefname{figure}{Fig.}{Figs.}
\crefname{table}{Tab.}{Tabs.}
\Crefname{table}{Tab.}{Tabs.}
\crefname{section}{Sec.}{Secs.}
\Crefname{section}{Sec.}{Secs.}

\title{Contrastive Neural Algorithmic Reasoning for Graph Coloring}

\author{%
  Thien Le\\
  Harvard University\\ SEAS\\
  \texttt{thien\_le@seas.harvard.edu}
\and 
  Tianyu Zhao\\
  Harvard University\\T.H. Chan School of Public Health
\\
  \texttt{tzhao1@hsph.harvard.edu}
\and
  Melanie Weber\\
  Harvard University\\SEAS\\
  \texttt{mweber@g.harvard.edu}
}

\date{}

\begin{document}

\maketitle

\begin{abstract}
Graph coloring seeks to assigns colors to a graph's nodes so that adjacent nodes receive different colors, using as few colors as possible. Here, we study approximate $k$-coloring, where the goal is to use at most $k$ colors while minimizing the number of monochromatic edges. This problem is central to graph theory and has applications in areas such as scheduling and resource allocation. Recent unsupervised GNN approaches optimize each instance directly, precluding generalization across graph sizes and distributions. We instead propose a contrastive learning framework that learns transferable coloring geometry where the embeddings of same-color nodes align, while adjacent nodes' representations are pushed toward distinct directions. We analyze the resulting population objective over bounded-size graphs. For unit-norm embeddings, we show that its optima have a line-prototype structure: Representations of nodes of the same color collapse to a shared one-dimensional subspace, and edges connect orthogonal subspaces. This geometry yields stationarity conditions in the supervised setting and is preserved by projected subgradient dynamics under a balanced-coloring assumption. In an unnormalized variant, gradient descent has a max-margin bias governed by a quotient-graph hard-margin problem. Experiments on synthetic and real-world graphs show that contrastive GNN encoders generalize effectively and produce low-conflict colorings, matching and sometimes improving on greedy approaches.
\end{abstract}

\input{sections/introduction}
\input{sections/setup}
\input{sections/absolute_value_contrastive_loss}
\input{sections/supervised_training_with_optimal_colorings}
\input{sections/implicit_bias_toward_orthogonal_line_prototypes}
\input{sections/lovasz_theta_certificates}
\input{sections/experiments}
\input{sections/limitations}

\section*{Acknowledgments}
This research was developed with funding from the Defense Advanced Research Projects Agency
(DARPA) under agreements no.
HR0011-25-3-0205 and HR0011262E027. The views, opinions, and/or findings expressed
are those of the authors and should not be interpreted as representing the official views or policies
of the Department of Defense or the U.S. Government. MW acknowledges partial support from an
Alfred P. Sloan Fellowship in Mathematics and the AI2050 program at Schmidt Sciences (Grant
G-25-69786).

\bibliographystyle{plainnat}
\bibliography{references}

\newpage
\startcontents[appendix]

\appendix

\addcontentsline{toc}{section}{Appendix Table of Contents}
\printcontents[appendix]{}{1}{\setcounter{tocdepth}{2}}

\input{appendix/refresher}
\input{appendix/proofs_absolute_value_loss_appendix}
\input{appendix/proofs_supervised_training_appendix}
\input{appendix/proofs_implicit_bias_appendix}

\input{appendix/proofs_lovasz}
\input{appendix/cora_ablation_experiments_appendix}

\input{appendix/main_citation_experiments}

\input{appendix/main_color_experiments}
\input{appendix/main_cycle_experiments}

\end{document}

%% file: sections/introduction.tex
\section{Introduction}
Graph coloring is a canonical problem in discrete optimization. Given a graph $G=(V,E)$, the goal is to assign colors to vertices such that adjacent vertices receive distinct colors, while minimizing the total number of colors used. This minimum is known as the chromatic number. Classical hardness results show that, for any $\epsilon>0$, approximating the chromatic number within a factor of $O(n^{1-\epsilon})$ is NP-hard~\citep{Wigderson1983,karger98colorsdp,feigekilian1998zeroknowledge,dinur2009coloringugc}. Despite this worst-case intractability, graph coloring has received sustained interest because of its broad range of applications, such as resource allocation and scheduling, where colors correspond to resources and edges to conflicts in resource usage. In practice, heuristics such as greedy algorithms~\citep{daniel1979dsatur} and SAT-based solvers~\citep{selman1992newmethod,vangelder2008coloring} are used to solve real-world instances. More recently, unsupervised machine learning approaches have been introduced~\citep{schuetz2022pignn,vanderbush2026warmstart}, typically learning color logits through a physics-inspired loss.

While these methods apply to arbitrary graphs, they either offer limited interpretability (greedy/ SAT-based solvers) or lack a mechanism for incorporating prior knowledge about the test graphs (unsupervised learning). As a result, it is difficult \textit{a priori} to characterize the graph classes on which they are likely to perform well, making it challenging for practitioners working with specific graph families to assess their effectiveness. Moreover, SAT-based solvers and unsupervised learning methods typically optimize each graph instance independently, which can be computationally expensive. We propose a pipeline that addresses these limitations along two dimensions: \emph{interpretability}, by leveraging the geometry of node representations, and \emph{scalability and domain-knowledge incorporation}, by adopting a supervised learning framework. These goals are closely connected. Since optimal coloring is computationally intractable on unrestricted inputs, one should not expect a single method to solve the problem in full generality. Instead, it is natural to exploit the simplifying geometric structure present in node embeddings for restricted graph families of interest. 


Our approach leverages contrastive learning to provide a simple representation-level view of graph coloring. Given a coloring, vertices assigned the same color are treated as positive pairs, while adjacent vertices are treated as negative pairs. We train a GNN encoder using an absolute-value variant of InfoNCE: rather than aligning directed vectors, the objective aligns unoriented lines. Under this geometry, each color is represented by a \emph{prototype}, corresponding to a one-dimensional subspace, and vertices of that color may be mapped to either direction along the same line. Decoding then consists of canonicalizing each prototype line to a point on the sphere, followed by clustering the resulting embeddings to determine vertex colors. {Our proposed algorithms is schematically shown in Fig.~\ref{fig:schematic}.} 


\subsection{Contribution of this paper}
As far as the authors are aware, this paper proposes the first neural supervised learning approach to graph coloring that comes with a colorability certificate. 
More formally, we study graph coloring {with at most $k_{\max}$ colors} via a population objective over a distribution $\mathcal D_{\mathcal G}$ supported on graphs of bounded size. For each graph $G$, a coloring rule $c_G$ supplies the positive pairs, and a node embedding function {$f(G) \in \mathcal{S}^{d-1}$} maps vertices to the unit sphere. Our main theoretical question is as follows:

\begin{tcolorbox}
    Given a proper (or optimal) coloring rule, what embedding geometry is enforced by a 
    contrastive objective, and how does optimization select among the resulting prototype configurations?
\end{tcolorbox}

Our answer to this question is three-fold:
\begin{enumerate}[leftmargin=1.5em]
    \item We prove a sharp lower bound for the absolute-value InfoNCE objective and characterize every optimizer that attains it (\Cref{thm:abs-lower-bound,thm:abs-characterization}). Under the assumption that $d\ge k_{\max}$, collapsed orthogonal color prototypes attain the bound. Conversely, equality forces same-color vertices onto common lines and edge-adjacent vertices into orthogonal directions. For optimal colorings, this ensures graph-wise line collapse by chromatic color class (\Cref{cor:optimal-coloring-collapse}).
    \item We analyze supervised training when graph-wise optimal colorings are provided. The population objective decomposes over graphs (\Cref{lem:graphwise-decomposition}), and the collapsed-prototype restriction gives explicit {Clarke-stationary equations}~\citep{rockafellar1998variational,li2020nonsmooth} (also see Appendix, \Cref{app:nonsmooth} for a refresher) governed by the coloring's weighted quotient graph (\Cref{thm:prototype-stationary}). These equations bridge the loss geometry with the graph structure. 
    \item Third, we study optimization and certification. Under an equitable optimal-coloring assumption, projected subgradient dynamics preserve the collapsed prototype manifold 
    (\Cref{thm:invariance}). In an unnormalized homogeneous prototype model, gradient descent maximizes the minimum absolute contrastive margin on the active quotient graph in the max-margin limit (\Cref{thm:abs-max-margin}). Approximate edgewise orthogonality also yields a Lov\'asz $\vartheta(\overline G)$ certificate (\Cref{lemma:lovasz-certificate,cor:lovasz-prototype-certificate}). 
\end{enumerate}

We complement our theoretical analysis with experiments on synthetic combinatorial instances and real-world datasets, showing that our approach supports size generalization and out-of-distribution transfer across graph families (\Cref{sec:citation-experiments}). Well-tuned message-passing and GPS-style encoders match, and sometimes outperform, greedy coloring baselines.

Together, these results provide a theoretical and empirical account of what coloring-aware contrastive training can certify, through a precise geometric mechanism by which color partitions become line prototypes and graph edges become orthogonality constraints.


\begin{wrapfigure}{r}{0.48\textwidth}
    \centering
    \vspace{-1.0em}
    \includegraphics[width=0.46\textwidth]{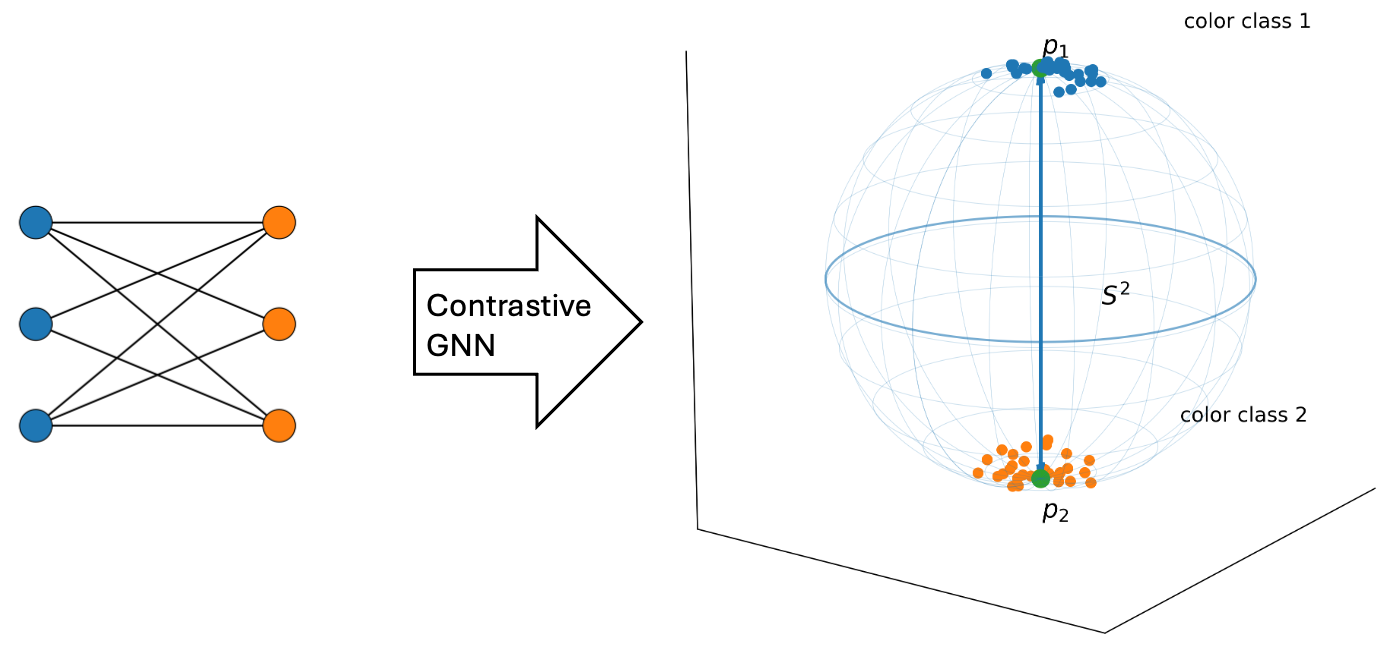}
    \caption{Supervised contrastive coloring framework. A graph encoder maps graph vertices to unit-norm embeddings. At training, the InfoNCE loss (or its variants) learns to push neighboring vertices' representation away from each other and attracts same-color vertices' representation. At testing, a clustering algorithm is run to cluster test graph vertices' embeddings.}
    \label{fig:schematic}
    \vspace{-1.0em}
\end{wrapfigure}

\subsection{Related works}

Graph coloring has long served as a central test case for the limits of efficient optimization. This history motivates relaxations and certificates, not exact recovery alone. Our work is closest in spirit to geometric relaxations of coloring, including vector colorings, semidefinite programming relaxations, and Lov\'asz's theta function \citep{karger98colorsdp,lovasz1979shannon}. We differ in the starting point: instead of solving an SDP, we ask when a learned contrastive representation produces the relevant orthogonality geometry.

Although our focus is on supervised learning of graph coloring, there exists a prolific line of SAT solvers \citep{selman1992newmethod} and neural algorithmic reasoning to tackle unsupervised coloring \citep{schuetz2022pignn,vanderbush2026warmstart}. While the performances of the unsupervised approaches are not affected by the choice of training datasets, at inference time, they run much more slowly as they attempt to solve the coloring problem from scratch without any data priors.  


The line-collapse phenomenon is related to neural collapse in supervised classification, where within-class features collapse and class means often approach a simplex ETF geometry \citep{papyan2020prevalence}. Our setting differs in that stationary equations depend on the weighted {quotient graph} rather than only on class balance. 

%% file: sections/setup.tex
\section{Setup}
In this work we study graph coloring from the perspective of representation geometry rather than combinatorial recovery. We now describe our approach more formally by fixing the graph distribution, the coloring rule that supplies positives, and the two contrastive objectives whose learned representation geometry we compare.

We write $\mathrm{Im}$ for the image of a function. Let $\mathcal G_{\le n_{\max}}$ be a finite family of simple graphs, each written as $G=([n_G],E_G)$ with $1\le n_G\le n_{\max}$. Let $\mathcal D_{\mathcal G}$ be a distribution supported on $\mathcal G_{\le n_{\max}}$. A proper coloring rule is a family $c=\{c_G\}_{G\in \mathrm{supp}(\mathcal D_{\mathcal G})}$ where each
$c_G:[n_G]\to [k_{\max}]
$ is proper on $G$, and
$k_{\max}:=\max_{G\in \mathrm{supp}(\mathcal D_{\mathcal G})} |{\mathrm{Im}}(c_G)|
$.
An embedding function is a family $f=\{f(G)\}_{G\in \mathrm{supp}(\mathcal D_{\mathcal G})}$ with
$f(G)=([f(G)]_1,\dots,[f(G)]_{n_G})\in (\sphere)^{n_G}
$
unless we explicitly drop the unit-norm constraint. For a graph $G$, we write $V_i(G):=c_G^{-1}(i)$ and $\N_G(v)$ for the neighbor set of $v$ in $G$.

\textbf{(Color-)quotient graph.} The coloring rule induces a quotient graph: for each $G$, 
    $Q_c(G) = \bigl(\mathrm{Im}(c_G), E_{Q_c(G)}\bigr)$, 
where distinct colors $i,j$ are adjacent in $Q_c(G)$ iff there exists an edge $(u,v)\in E_G$ with $c_G(u)=i$ and $c_G(v)=j$. \\

\textbf{Signed InfoNCE.}
For a fixed graph $G$ and coloring $c_G$, the local signed contrastive objective is:
\begin{equation}
\ell_{\mathrm{InfoNCE},\tau}^{(G)}(h,c_G)
:=
\frac{1}{n_G}\sum_{v=1}^{n_G}\frac{1}{|V_{c_G(v)}(G)|}
\sum_{w\in V_{c_G(v)}(G)}
\left[
-
\frac{e^{\ip{h_v}{h_w}/\tau}}
{e^{\ip{h_v}{h_w}/\tau}+\sum_{u\in \N_G(v)} e^{\ip{h_v}{h_u}/\tau}}
\right].
\end{equation}

In effect, this is the standard InfoNCE loss with the vertices of $G$ as data points. Positive pairs, which should be mapped to nearby points in representation space, are vertices assigned the same color by $c_G$, while negative pairs, which are mapped to distant points, correspond to edges in the graph, intuitively, making each color forms its own cluster. The population objective is
$\ell_{\mathrm{InfoNCE},\tau}(f,c):=
\E_{G\sim \mathcal D_{\mathcal G}}
\left[
\ell_{\mathrm{InfoNCE},\tau}^{(G)}(f(G),c_G)
\right]$.

\textbf{Absolute-value InfoNCE.}
It turns out that a slight modification of the standard InfoNCE loss can give us both a clean theoretical understanding and a certificate of colorability. We introduce the absolute-value variant of the InfoNCE loss replacing $\ip{h_v}{h_w}$ by $\abs{\ip{h_v}{h_w}}$ for all positive and negative pairs $(v,w)$. For a fixed graph $G$, define:
\begin{align}
\ell_{\mathrm{abs},\tau}^{(G)}(h,c_G)
&:=
\frac{1}{n_G}\sum_{v=1}^{n_G}\frac{1}{|V_{c_G(v)}(G)|}
\sum_{w\in V_{c_G(v)}(G)}
\left[
-
\frac{e^{\abs{\ip{h_v}{h_w}}/\tau}}
{e^{\abs{\ip{h_v}{h_w}}/\tau}+\sum_{u\in \N_G(v)} e^{\abs{\ip{h_v}{h_u}}/\tau}}
\right],\\
\ell_{\mathrm{abs},\tau}(f,c) &:=
\E_{G\sim \mathcal D_{\mathcal G}}
\left[
\ell_{\mathrm{abs},\tau}^{(G)}(f(G),c_G)
\right].
\end{align}
\begin{remark}
 Compared to the standard InfoNCE loss, this objective does not encourage neighboring colors to be antipodal. Instead, it favors embeddings in which same-color vertices lie on a common unoriented line, while edge-adjacent vertices lie in orthogonal directions. We will later use this form of color separation to obtain colorability certificates via Lov\'asz $\vartheta$ bounds~\citep{lovasz1979shannon}.
\end{remark}

A simple fact serves as sanity check for our candidate loss functions:
\begin{proposition}\label{prop:existence-signed}
Fix a proper coloring rule $c$ on $\mathrm{supp}(\mathcal D_{\mathcal G})$. Under the unit-norm constraint, the minimum of $\ell_{\mathrm{InfoNCE},\tau}(f,c)$, $\ell_{\mathrm{abs},\tau}(f,c)$ over all embedding functions $f$ is attained for any $\tau > 0$.
\end{proposition}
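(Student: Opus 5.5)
The plan is a direct compactness argument: a continuous function on a compact set attains its minimum. The support $\mathrm{supp}(\mathcal D_{\mathcal G})$ is contained in the finite family $\mathcal G_{\le n_{\max}}$, so it is finite. Write it as $\{G_1,\dots,G_m\}$ with weights $p_j=\mathcal D_{\mathcal G}(G_j)>0$. Then an embedding function $f$ is just a tuple $(f(G_1),\dots,f(G_m))$ in the product space
\[
\mathcal X:=\prod_{j=1}^m (\sphere)^{n_{G_j}}.
\]
This is a finite product of spheres, hence a compact subset of a Euclidean space. The population objectives become finite weighted sums,
\[
\ell_{\mathrm{InfoNCE},\tau}(f,c)=\sum_{j=1}^m p_j\,\ell^{(G_j)}_{\mathrm{InfoNCE},\tau}(f(G_j),c_{G_j}),
\]
and similarly for $\ell_{\mathrm{abs},\tau}$. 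So it suffices to show that each summand is a continuous function of $f(G_j)$.

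For continuity, fix $G$ and $\tau>0$. Each term in $\ell^{(G)}$ has the form $-a/(a+\sum_u b_u)$, where:
\begin{itemize}
\item $a=e^{\ip{h_v}{h_w}/\tau}$ or $e^{\abs{\ip{h_v}{h_w}}/\tau}$;
\item the $b_u$ are the analogous exponentials over neighbors $u\in\N_G(v)$.
\end{itemize}
Inner products are polynomial in $h$, the absolute value is continuous, and $\exp$ is continuous. Hence $a$ and the $b_u$ are continuous and strictly positive, so the denominator never vanishes and each quotient is continuous. This also covers vertices with no neighbors, where the term is just $-1$. The normalizations $1/n_G$ and $1/|V_{c_G(v)}(G)|$ are fixed positive constants, well defined because $v\in V_{c_G(v)}(G)$ makes the class nonempty. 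So $\ell^{(G)}$ is a finite sum of continuous functions.

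Therefore both population losses are continuous on the compact set $\mathcal X$, and the Weierstrass extreme value theorem gives a minimizer $f^\star\in\mathcal X$ for every $\tau>0$. Properness of $c$ is not needed for existence; it only matters for what the minimizers look like later. The only point needing care is the reduction from ``all embedding functions'' to the finite-dimensional $\mathcal X$. This holds because $f$ only enters through its values on the finitely many support graphs, and graphs of probability zero do not affect the expectation. There is no real obstacle beyond this observation.
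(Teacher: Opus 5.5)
Your proposal is correct and follows the paper's proof essentially verbatim: the feasible set is the compact finite product $\prod_{r}(\sphere)^{n_r}$ over the support graphs, the population loss is a finite weighted sum of continuous per-graph terms, and Weierstrass gives a minimizer. The paper justifies the product structure by appealing to GNN universality, whereas you derive it directly from the definition of an embedding function, and you also check explicitly that the denominators are positive; both choices are, if anything, slightly cleaner.
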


\textbf{Learning framework.}
Under this loss, our supervised learning framework can be stated as follows.
Given a training dataset together with a proper or approximate coloring rule, we first train an encoder $f_\theta(G) = (h_1,\ldots,h_{n_G})$, $h_v\in \mathbb{S}^{d-1}$,
by minimizing $\ell_{\mathrm{InfoNCE},\tau}(f,c)$ or $\ell_{\mathrm{abs},\tau}(f,c)$. The objective encourages vertices with the same color label to collapse onto a common unoriented line, while adjacent vertices are pushed toward orthogonal lines. At inference time, first, we perform a \emph{line canonicalization} step (if using the absolute-value variant), e.g. by forcing the first non-zero entry to be positive. This makes the embeddings compatible with ordinary Euclidean clustering while leaving the loss value unchanged. Second, we cluster the canonicalized embeddings, typically by sweeping $k$ in $k$-medoids or $k$-means, and select the smallest $k$ whose decoded coloring satisfies the desired monochromatic-edge threshold.

The framework is modular. The contrastive encoder can be combined with additional graph-learning signals. For example, one may attach a readout head that predicts color logits
    $q_v = g_\theta(h_v)\in\mathbb{R}^K$
and train it with an auxiliary edge-conflict or soft-coloring loss (which we use in one of our variants in the experiments). 

%% file: sections/absolute_value_contrastive_loss.tex
\section{Global minima of the Absolute-value InfoNCE loss}
We first ask what the absolute-value loss enforces when the coloring rule is given during training. We show that the objective admits a universal lower bound, with equality attained exactly by line collapse together with edgewise orthogonality. The full proofs are in Appendix, \Cref{app:absolute_value_minima}.

\begin{theorem}\label{thm:abs-lower-bound}
For every proper coloring rule $c$ and every unit-norm embedding function $f$,
\begin{equation}
\ell_{\mathrm{abs},\tau}(f,c)
\ge
\E_{G\sim \mathcal D_{\mathcal G}}
\left[
\frac{1}{n_G}\sum_{v=1}^{n_G}
\left(
-\frac{e^{1/\tau}}{e^{1/\tau}+|\N_G(v)|}
\right)
\right].
\end{equation}
\end{theorem}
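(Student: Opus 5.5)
The bound is pointwise, so I will prove it term by term and then take expectations. Fix a graph $G$ in the support, the coloring $c_G$, and an embedding $h=f(G)\in(\sphere)^{n_G}$. It is enough to show that for every vertex $v$ and every same-color vertex $w\in V_{c_G(v)}(G)$,
\begin{equation*}
-\frac{e^{\abs{\ip{h_v}{h_w}}/\tau}}{e^{\abs{\ip{h_v}{h_w}}/\tau}+\sum_{u\in\N_G(v)}e^{\abs{\ip{h_v}{h_u}}/\tau}}
\;\ge\;
-\frac{e^{1/\tau}}{e^{1/\tau}+|\N_G(v)|}.
\end{equation*}
Once this holds, averaging over $w\in V_{c_G(v)}(G)$ with weight $1/|V_{c_G(v)}(G)|$ leaves the right-hand side unchanged, since it does not depend on $w$. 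Averaging then over $v$ with weight $1/n_G$ gives the graph-wise bound $\ell^{(G)}_{\mathrm{abs},\tau}(h,c_G)\ge \frac1{n_G}\sum_v\bigl(-e^{1/\tau}/(e^{1/\tau}+|\N_G(v)|)\bigr)$. Taking $\E_{G\sim\mathcal D_{\mathcal G}}$ preserves the inequality. The expectation is a finite sum because the support is finite, so there are no integrability issues.

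For the per-term inequality, set $a=\abs{\ip{h_v}{h_w}}/\tau$ and $b_u=\abs{\ip{h_v}{h_u}}/\tau$. The map $(a,S)\mapsto e^{a}/(e^{a}+S)$ with $S>0$ is increasing in $a$ and decreasing in $S$ (when $S=0$ it is constantly $1$). By Cauchy--Schwarz on unit vectors, $0\le a\le 1/\tau$. Since each $b_u\ge 0$, we have $S=\sum_{u\in\N_G(v)}e^{b_u}\ge|\N_G(v)|$. Therefore
\begin{equation*}
\frac{e^{a}}{e^{a}+S}\le\frac{e^{1/\tau}}{e^{1/\tau}+S}\le\frac{e^{1/\tau}}{e^{1/\tau}+|\N_G(v)|},
\end{equation*}
and negating both sides gives the claim. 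Isolated vertices are covered: both sides equal $-1$.

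There is no serious obstacle. The one point needing care is the monotonicity in $a$ for fixed $S$: writing $e^a/(e^a+S)=1/(1+Se^{-a})$ makes it immediate. It also helps to record exactly where equality holds, namely $\abs{\ip{h_v}{h_w}}=1$ for all same-color pairs and $\ip{h_v}{h_u}=0$ across edges. These are the conditions that will be used for the characterization in \Cref{thm:abs-characterization}.
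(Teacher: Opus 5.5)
Your proof is correct and follows the paper's argument: bound each anchor--positive term using $\abs{\ip{h_v}{h_w}}\le 1$ and $\abs{\ip{h_v}{h_u}}\ge 0$ together with the monotonicity of $A/(A+\sum_u B_u)$, then average over $w$, $v$, and $G$. One caveat applies only to your closing remark on equality: when $v$ is isolated ($S=0$) the term equals $-1$ for every $a$, so equality does not force $\abs{\ip{h_v}{h_w}}=1$ when both vertices of a same-color pair are isolated.
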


\begin{theorem}\label{thm:abs-characterization}
Assume $d\ge k_{\max}$. Then a collapsed orthogonal prototype embedding function attains the lower bound in \Cref{thm:abs-lower-bound}. More precisely, there exist orthonormal vectors $q_1,\dots,q_{k_{\max}}\in \R^d$ such that the embedding function defined by
\begin{equation}
[f^c(G)]_v=q_{c_G(v)}
\qquad \text{for every } G\in \mathrm{supp}(\mathcal D_{\mathcal G}) \text{ and } v\in [n_G]
\end{equation}
achieves equality. Moreover, for any unit-norm embedding function $f$, equality in \Cref{thm:abs-lower-bound} holds if and only if
\begin{equation}
\abs{\ip{[f(G)]_v}{[f(G)]_w}}=1
\quad \text{whenever } G\in \mathrm{supp}(\mathcal D_{\mathcal G}) \text{ and } c_G(v)=c_G(w),
\end{equation}
and
\begin{equation}
\abs{\ip{[f(G)]_v}{[f(G)]_u}}=0
\quad \text{whenever } G\in \mathrm{supp}(\mathcal D_{\mathcal G}) \text{ and } uv\in E_G.
\end{equation}
Equivalently, every color class lies on a one-dimensional subspace and every edge joins orthogonal subspaces on every graph in the support.
\end{theorem}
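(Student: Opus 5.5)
The plan is to reuse the termwise structure behind \Cref{thm:abs-lower-bound}. For fixed $G$, $v$ and $w\in V_{c_G(v)}(G)$, write $a=\abs{\ip{h_v}{h_w}}\in[0,1]$ and $b_u=\abs{\ip{h_v}{h_u}}\ge 0$ for $u\in\N_G(v)$. Each summand is then $-\phi(a,b)$ with
\[
\phi(a,b)=\frac{e^{a/\tau}}{e^{a/\tau}+\sum_{u}e^{b_u/\tau}}.
\]
This function is strictly increasing in $a$ and strictly decreasing in each $b_u$. Hence $\phi(a,b)\le e^{1/\tau}/(e^{1/\tau}+|\N_G(v)|)$, with equality if and only if $a=1$ and $b_u=0$ for all $u\in\N_G(v)$. (The bound holds because $e^{a/\tau}/(e^{a/\tau}+S)$ increases in $a$ and $S\ge|\N_G(v)|$.) Averaging over $w\in V_{c_G(v)}$, then over $v$, then over $G\sim\mathcal D_{\mathcal G}$ gives the lower bound of \Cref{thm:abs-lower-bound}. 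I take this termwise inequality as the backbone for both directions of the characterization.

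For attainment, I use $d\ge k_{\max}$ to choose orthonormal $q_1,\dots,q_{k_{\max}}$, for instance the first $k_{\max}$ standard basis vectors, and set $[f^c(G)]_v=q_{c_G(v)}$. These are unit vectors. If $c_G(v)=c_G(w)$, then $\abs{\ip{h_v}{h_w}}=\norm{q_i}^2=1$. If $uv\in E_G$, properness of $c_G$ gives $c_G(u)\ne c_G(v)$, so $\ip{h_v}{h_u}=0$. Every summand therefore attains its termwise maximum, and the bound holds with equality. The same computation shows sufficiency in the ``if and only if'': the two displayed conditions make every term tight.

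For necessity, suppose equality holds. The difference between the loss and the bound is
\[
\E_{G}\Bigl[\tfrac1{n_G}\textstyle\sum_v\tfrac1{|V_{c_G(v)}|}\sum_{w}\bigl(\phi^\star_v-\phi(a_{vw},b_v)\bigr)\Bigr],
\]
where $\phi^\star_v:=e^{1/\tau}/(e^{1/\tau}+|\N_G(v)|)$. Each bracketed term is nonnegative and carries a strictly positive weight: $1/n_G$, $1/|V_{c_G(v)}|$, and $\mathcal D_{\mathcal G}(G)>0$ for $G$ in the support, which is finite. So every term must vanish. Strict monotonicity then forces $a_{vw}=1$ for all same-color pairs and $b_u=0$ for all $u\in\N_G(v)$, which are exactly the two conditions.

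For the geometric rephrasing, I note that Cauchy--Schwarz equality for unit vectors with $\abs{\ip{h_v}{h_w}}=1$ gives $h_w=\pm h_v$. Each color class therefore lies on the line $\mathrm{span}(h_v)$, and adjacent vertices lie on orthogonal lines.

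The main obstacle is the strictness bookkeeping. I must confirm that the termwise maximizer is unique, which requires strict monotonicity and, for vertices with $\N_G(v)=\emptyset$, the observation that the $b$-condition is vacuous. I must also confirm that every term carries positive weight. The $w=v$ term is harmless because $a_{vv}=1$ automatically. A vertex with empty neighborhood has $\phi\equiv1$, so it imposes nothing. In that case I must check that the same-color condition $a_{vw}=1$ is still obtained from the term indexed by $w$ or by another vertex with neighbors. If this fails, the statement's "only if" would need the convention that such terms are tight. I expect to handle this carefully: for $\N_G(v)=\emptyset$ the constraint $\abs{\ip{h_v}{h_w}}=1$ is not forced, so I would flag or restrict to graphs in which such degenerate pairs are excluded.
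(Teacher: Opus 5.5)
Your route is the same as the paper's. You use the termwise bound, note that every weight in the gap is positive (finite support, $\mathcal D_{\mathcal G}(G)>0$), conclude that each term is tight, and finish with the unit-vector equality cases. Attainment and the ``if'' direction match the paper line for line.

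The degenerate case you flag is a real problem, and the paper's proof misses it. The paper claims that tightness forces $\abs{\ip{[f(G)]_v}{[f(G)]_w}}=1$ for every anchor $v$ and positive $w$. But when $\N_G(v)=\emptyset$ that term is identically $-1$, so it forces nothing. Here is how the case resolves.

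\begin{itemize}
\item The edge condition is always forced. For $uv\in E_G$, the anchor $v$ has a neighbor. Its $w=v$ term has $a=1$ automatically, so it is tight only if every $b_u=0$.
\item The same-color condition for a pair $v,w$ is forced whenever the class $V_{c_G(v)}(G)$ contains some $x$ with $\N_G(x)\neq\emptyset$ (possibly $x=v$ or $x=w$). The anchor-$x$ terms with positives $v$ and $w$ give $h_v=\pm h_x$ and $h_w=\pm h_x$, hence $\abs{\ip{h_v}{h_w}}=1$.
\item It is not forced, and the statement fails, exactly when a color class has at least two vertices and all of them are isolated.
\end{itemize}

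The simplest counterexample is the edgeless graph on two vertices with $c_G\equiv 1$ and $d\ge 2$. Then $\ell^{(G)}_{\mathrm{abs},\tau}(h,c_G)=-1$ for every $h$, which equals the bound, yet $h_1\perp h_2$ is allowed.

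So the ``only if'' part is false as written, and you cannot finish without an extra hypothesis. A natural one is that no color class of any $c_G$ consists of two or more isolated vertices. For optimal colorings, as in \Cref{cor:optimal-coloring-collapse}, this excludes only edgeless graphs with $n_G\ge 2$. The reason is that an all-isolated class could otherwise be merged into another class, contradicting optimality.
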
 %

In particular, the global minima characterization of \Cref{thm:abs-characterization} exhibit a kind of regularization for the dimensionality of the final representation. The fact that $\abs{\ip{[f(G)]_v}{[f(G)]_w}}=1$ when $w,v$ have the same color means that there is no redundancy in the number of colors. As a result, when the training coloring rule is optimal, the absolute-value InfoNCE loss optimizes also the number of colors.   
\begin{corollary}\label{cor:optimal-coloring-collapse}
Let $c^\star=\{c_G^\star\}_{G\in \mathrm{supp}(\mathcal D_{\mathcal G})}$ be an optimal coloring rule, i.e. each $c_G^\star$ uses exactly $\chi(G)$ colors, and assume $d\ge \chi_{\max}:=\max_{G\in \mathrm{supp}(\mathcal D_{\mathcal G})}\chi(G)$. Then every global optimizer $f^\star$ of $\ell_{\mathrm{abs},\tau}(\cdot,c^\star)$ is line-collapsed graphwise: for every $G\in \mathrm{supp}(\mathcal D_{\mathcal G})$ there exist pairwise orthogonal unit vectors $q_{G,1},\dots,q_{G,\chi(G)}$ and signs $\sigma_{G,v}\in \{\pm 1\}$ such that
$[f^\star(G)]_v=\sigma_{G,v} q_{G,c_G^\star(v)}$.
\end{corollary}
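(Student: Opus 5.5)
Since $\chi_{\max}$ is exactly $k_{\max}$ for the rule $c^\star$, the hypothesis $d\ge\chi_{\max}$ is the dimension condition of \Cref{thm:abs-characterization}. The plan is to identify global optimizers with embeddings attaining the bound of \Cref{thm:abs-lower-bound}, and then read off the geometry from the equality conditions of \Cref{thm:abs-characterization}.

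\textbf{Step 1: optimizers attain the bound.} By \Cref{thm:abs-characterization}, the collapsed orthogonal prototype embedding $f^{c^\star}$ attains the lower bound of \Cref{thm:abs-lower-bound}. So the infimum of $\ell_{\mathrm{abs},\tau}(\cdot,c^\star)$ equals that bound, and it is attained (consistent with \Cref{prop:existence-signed}). Hence a global optimizer $f^\star$ has loss no larger than the bound. Combined with \Cref{thm:abs-lower-bound}, this gives equality.

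\textbf{Step 2: apply the equality characterization.} The ``only if'' direction of \Cref{thm:abs-characterization} then gives, for every $G$ in the support:
\begin{itemize}
\item $\abs{\ip{[f^\star(G)]_v}{[f^\star(G)]_w}}=1$ whenever $c^\star_G(v)=c^\star_G(w)$;
\item $\ip{[f^\star(G)]_v}{[f^\star(G)]_u}=0$ whenever $uv\in E_G$.
\end{itemize}
For unit vectors, Cauchy--Schwarz equality shows that $\abs{\ip{x}{y}}=1$ forces $y=\pm x$. So for each color $i\in\mathrm{Im}(c^\star_G)$, pick a representative $v_i\in V_i(G)$ and set $q_{G,i}:=[f^\star(G)]_{v_i}$. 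Every other $v$ of color $i$ then satisfies $[f^\star(G)]_v=\sigma_{G,v}q_{G,i}$ for some $\sigma_{G,v}\in\{\pm1\}$. Since $c^\star_G$ uses exactly $\chi(G)$ colors, we may relabel them as $1,\dots,\chi(G)$.

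\textbf{Step 3 (main obstacle): pairwise orthogonality of all prototypes.} The equality conditions give orthogonality only between colors $i,j$ that are adjacent in the quotient graph $Q_{c^\star}(G)$. If some pair $i\ne j$ is not adjacent, nothing in the loss constrains $\ip{q_{G,i}}{q_{G,j}}$. This is where optimality of $c^\star_G$ has to be used.

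Suppose colors $i$ and $j$ have no edge between $V_i(G)$ and $V_j(G)$. Then merging the two classes yields a proper coloring with $\chi(G)-1$ colors, contradicting $\chi(G)$ being the chromatic number. Hence $Q_{c^\star}(G)$ is the complete graph on $\chi(G)$ vertices.

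It follows that for every pair $i\ne j$ there is an edge $uv$ with $c^\star_G(u)=i$ and $c^\star_G(v)=j$. Therefore
\[
0=\ip{\sigma_{G,u}q_{G,i}}{\sigma_{G,v}q_{G,j}}=\pm\ip{q_{G,i}}{q_{G,j}},
\]
so the prototypes are pairwise orthogonal unit vectors. This completes the claim.
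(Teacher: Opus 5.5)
Your proposal is correct and follows the same route as the paper: apply the equality case of \Cref{thm:abs-characterization} graph by graph, then use optimality of $c_G^\star$ (two color classes with no edge between them could be merged) to show that every pair of colors is joined by an edge, which forces all prototypes to be pairwise orthogonal. Your Step 1 makes explicit that $d\ge\chi_{\max}=k_{\max}$ forces every global optimizer to attain the bound of \Cref{thm:abs-lower-bound}; the paper leaves this implicit, but it is exactly what justifies invoking the characterization.
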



%% file: sections/supervised_training_with_optimal_colorings.tex
\section{Supervised training with optimal colorings}
\Cref{thm:abs-characterization} and \Cref{cor:optimal-coloring-collapse} show what happens at global minima. It turns out that a precise characterization can be obtained even at stationary points of the per-graph loss function. To justify analyzing the per-graph loss rather than the population loss, we use the fact that sufficiently expressive graph networks can approximate broad classes of invariant or equivariant graph functions provided that higher-order tensorization, node identifiers, random initialization, or sufficient depth and width are allowed \citep{maron2019universality,keriven2019universal,loukas2020depthwidth,azizian2021expressive,abboud2021random}.
We have: 


\begin{lemma}\label{lem:graphwise-decomposition}
For a large enough GNN $f$, the supervised population objective decomposes graphwise:
\begin{equation}
\inf_f \mathcal L_{\mathrm{abs},\tau}(f;c^\star)
=
\sum_{G\in \mathrm{supp}(\mathcal D_{\mathcal G})}
\mathcal D_{\mathcal G}(G)\,
\inf_{h\in (\sphere)^{n_G}}
\ell_{\mathrm{abs},\tau}^{(G)}(h,c_G^\star).
\end{equation}
Consequently, one can obtain a population minimizer by choosing, for each graph $G$, a graphwise minimizer of the corresponding local supervised objective. In particular, under $d\ge k_{\max}$ every global minimizer of $\mathcal L_{\mathrm{abs},\tau}^{\mathrm{sup}}(\cdot;c^\star)$ is graphwise line-collapsed by color as in \Cref{cor:optimal-coloring-collapse}.
\end{lemma}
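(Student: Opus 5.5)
The plan is to establish the identity by two inequalities and then read off the consequences. Throughout, ``large enough GNN'' is interpreted as the hypothesis that the hypothesis class can realize (or approximate arbitrarily well) any assignment $G\mapsto h^{(G)}\in(\sphere)^{n_G}$ on the finite support $\mathrm{supp}(\mathcal D_{\mathcal G})\subseteq \mathcal G_{\le n_{\max}}$. This is exactly what the cited universality results supply once node identifiers or random features break symmetries. Without such symmetry breaking, an equivariant GNN cannot separate automorphic vertices, so this is the precise point where the expressivity assumption is needed.

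\textbf{Lower bound.} For any embedding function $f$, the population loss is the finite sum $\sum_G \mathcal D_{\mathcal G}(G)\,\ell^{(G)}_{\mathrm{abs},\tau}(f(G),c^\star_G)$. Since $f(G)\in(\sphere)^{n_G}$, each summand is bounded below by $\inf_h \ell^{(G)}_{\mathrm{abs},\tau}(h,c_G^\star)$. Taking the infimum over $f$ gives ``$\ge$''.

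\textbf{Upper bound.} For each $G$ in the support, the local objective is continuous on the compact set $(\sphere)^{n_G}$, so a minimizer $h^{(G)}$ exists; this is the graphwise content of \Cref{prop:existence-signed}. Because the support is finite, the table $G\mapsto h^{(G)}$ is a legitimate embedding function. By the expressivity assumption it is realized by some $f$, or, if it is only approximated, continuity of the local losses gives values within any $\epsilon>0$. This yields ``$\le$'', and it also shows that choosing graphwise minimizers produces a population minimizer.

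\textbf{Collapse claim.} Let $f^\star$ be a global minimizer. If $\mathcal D_{\mathcal G}(G)>0$ and $f^\star(G)$ were not a graphwise minimizer, then replacing $f^\star(G)$ by $h^{(G)}$ would strictly lower the objective, a contradiction. Hence $f^\star(G)$ minimizes the local loss for every $G$ in the support. Since $d\ge k_{\max}$, \Cref{thm:abs-characterization} shows that the local bound of \Cref{thm:abs-lower-bound} is attained graphwise by orthonormal prototypes. So every graphwise minimizer attains equality and satisfies the line and orthogonality conditions. The argument of \Cref{cor:optimal-coloring-collapse} then yields $[f^\star(G)]_v=\sigma_{G,v}q_{G,c^\star_G(v)}$. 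Note that $k_{\max}=\chi_{\max}$ for an optimal rule.

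The main obstacle is making the expressivity step rigorous, namely realizing an arbitrary finite table of per-graph embeddings, including non-equivariant ones, and stating exactly which architecture assumption delivers it. Everything else is finite-sum bookkeeping.
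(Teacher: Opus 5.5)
Your proposal is correct and follows the paper's own argument. The paper also uses the expressivity assumption to make the feasible set the product $\prod_{G}(\sphere)^{n_G}$, optimizes each coordinate $f(G)$ independently, and then applies \Cref{cor:optimal-coloring-collapse} graph by graph; your two inequalities, the replacement argument on positive-mass graphs, and your remark that symmetry breaking (rather than equivariant universality alone) is what yields the product structure make those same steps more explicit.

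One caveat you inherit from \Cref{thm:abs-characterization} (it is not an error in your reasoning). An isolated anchor contributes $-1$ whatever the embedding, so for an edgeless $G$ with $n_G\ge 2$ (where $\chi(G)=1$) every embedding is a minimizer and the collapse conclusion fails. When $\chi(G)\ge 2$, optimality forces every color class to contain a non-isolated vertex, and your argument goes through.
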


While the above lemma allows us to focus on line-collapsed representations, understanding which configurations are stationary requires computing the stationarity equations. They show that the loss depends on the graph structure only in a specific way. We state only the informal version, deferring the formal statement (\Cref{thm:prototype-stationary-restatement}) and proof to Appendix, \Cref{app:proof_supervised}. 

\begin{theorem}[Informal]\label{thm:prototype-stationary}
Fix a graph $G=([n],E)\in \mathrm{supp}(\mathcal D_{\mathcal G})$ and write $k_G:=\chi(G)$. Let $V_i:=\bigl(c_G^\star\bigr)^{-1}(i)$ and define the per-color neighbor counts of $v \in V_i$ for some $i$ to be $d_{v,j}:=\abs{\N_G(v)\cap V_j}$, for any $j\neq i$.
For a prototype tuple $q=(q_1,\dots,q_{k_G})\in (\sphere)^{k_G}$, define the collapsed embedding $h(q)_v:=q_{c_G^\star(v)}$. If we restrict the supervised local objective to only the collapsed prototype class (optimizing over $q$), then the condition for $q$ to be a {Clarke-Riemannian stationary point\footnote{See Appendix \Cref{app:nonsmooth}} for a refresher} of the objective on $(\sphere)^{k_G}$ depends on the structure of $G$ only through the $d_{v,j}$'s.
\end{theorem}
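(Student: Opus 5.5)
Substituting the collapsed embedding $h(q)_v=q_{c_G^\star(v)}$ into $\ell_{\mathrm{abs},\tau}^{(G)}$ gives an explicit function of the prototype tuple $q$. I would compute its Clarke subdifferential on $(\sphere)^{k_G}$ and read off the stationarity condition. Two simplifications make this work.

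First, for $v\in V_i$ and $w\in V_i$ the positive pair satisfies $\abs{\ip{h_v}{h_w}}=\abs{\ip{q_i}{q_i}}=1$. So the inner average over $w\in V_{c(v)}$ is an average of $\abs{V_i}$ identical terms and collapses to a single term. Second, the negative sum over $u\in\N_G(v)$ groups by the color of $u$. Since the coloring is proper, no neighbor has color $i$, and
\begin{equation*}
\sum_{u\in\N_G(v)} e^{\abs{\ip{h_v}{h_u}}/\tau}=\sum_{j\neq i} d_{v,j}\, e^{\abs{\ip{q_i}{q_j}}/\tau}.
\end{equation*}
Writing $s_{ij}:=\abs{\ip{q_i}{q_j}}$, the restricted objective therefore becomes
\begin{equation*}
L(q)=\frac1{n}\sum_{i=1}^{k_G}\sum_{v\in V_i}\left(-\frac{e^{1/\tau}}{e^{1/\tau}+\sum_{j\neq i} d_{v,j}e^{s_{ij}/\tau}}\right).
\end{equation*}
This already shows that $L$, and hence anything derived from it, depends on $G$ only through $n$, the partition $\{V_i\}$, and the counts $d_{v,j}$. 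Note that $n=\sum_{v}1$ and $\abs{V_i}$ are themselves recoverable by indexing over vertices $v$ and their colors.

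Next, $L=\Phi(s)$ where $\Phi$ is smooth in the variables $s=(s_{ij})_{i<j}$, and each $s_{ij}=\abs{\ip{q_i}{q_j}}$ is a composition of the absolute value with a smooth bilinear map. I would apply the Clarke chain rule for compositions of a $C^1$ outer function with Lipschitz inner functions; the refresher in \Cref{app:nonsmooth} is the natural source for it. This gives
\begin{equation*}
\partial_{q_i} L \subseteq \sum_{j\neq i} \frac{\partial\Phi}{\partial s_{ij}}\,\xi_{ij}\, q_j, \qquad \xi_{ij}\in \partial\abs{\cdot}\bigl(\ip{q_i}{q_j}\bigr),
\end{equation*}
where $\xi_{ij}=\mathrm{sign}\ip{q_i}{q_j}$ if $\ip{q_i}{q_j}\neq0$ and $\xi_{ij}\in[-1,1]$ otherwise. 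The partial derivative is explicit:
\begin{equation*}
\frac{\partial\Phi}{\partial s_{ij}}=\frac{1}{n\tau}\sum_{v\in V_i}\frac{e^{1/\tau}d_{v,j}e^{s_{ij}/\tau}}{\bigl(e^{1/\tau}+\sum_{l} d_{v,l}e^{s_{il}/\tau}\bigr)^2} + (\text{same with } i\leftrightarrow j),
\end{equation*}
and it involves only the $d_{v,\cdot}$.

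The Clarke–Riemannian stationarity condition is $0\in \mathrm{Proj}_{T_{q_i}\sphere}\,\partial_{q_i}L$ for every $i$, where $\mathrm{Proj}_{T}(x)=x-\ip{x}{q_i}q_i$ is the tangent projection. Written out, this says that for each $i$ there exist $\xi_{ij}$ with
\begin{equation*}
\sum_{j\neq i} w_{ij}(q)\,\xi_{ij}\bigl(q_j-\ip{q_i}{q_j}q_i\bigr)=0,
\end{equation*}
where the weights $w_{ij}(q)$ are determined by the $d_{v,j}$. Note that $w_{ij}=0$ whenever $ij$ is not an edge of $Q_c(G)$. The formal statement then expresses this condition explicitly in terms of the weighted quotient graph.

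I expect the main obstacle to be justifying the subdifferential inclusion, and ideally an equality, for the nonsmooth composition at points where several $\ip{q_i}{q_j}$ vanish simultaneously. Orthogonal prototypes are exactly the interesting case, since the global minima of \Cref{thm:abs-characterization} are orthogonal. At such points the chain rule generally gives only an inclusion. I would handle this by noting that each $s_{ij}$ depends on a distinct pair, so the nonsmoothness is separable across coordinates. I would also use that $\partial\Phi/\partial s_{ij}\ge 0$ and that $L$ is a sum of functions each nonsmooth in only one inner product; this yields equality via Clarke regularity of $\abs{\cdot}$ composed with a smooth map with surjective derivative. Even if only the inclusion is obtained, the dependence claim survives, because every set appearing in the inclusion is built from the $d_{v,j}$ alone.
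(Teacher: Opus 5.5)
Your proposal is correct and follows the paper's proof essentially step for step. You substitute $h(q)$, collapse the positive average, group the negatives by color into the denominators $A_v^\star(q)$, and apply the Clarke chain rule to obtain the weights $w_{ij}^{\mathrm{abs}}(q)$ and the tangent-projected stationarity system. Your regularity argument (each $\abs{\ip{q_i}{q_j}}$ is Clarke regular, and the outer function is $C^1$ with nonnegative partials) is in fact what justifies the paper's ``if and only if,'' since the paper itself only derives an inclusion. To get the exact condition, however, apply the chain rule to the joint function on $(\sphere)^{k_G}$, so that a single multiplier $\xi_{ij}=\xi_{ji}$ is shared by blocks $i$ and $j$, rather than chosen separately for each $i$ as in your final display.
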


The proof follows from first principle by writing down the stationary equations. Intuitively, \Cref{thm:prototype-stationary} says that once we restrict to embeddings that collapse each optimal color class to a single line prototype, the supervised absolute-value loss becomes a finite weighted spherical-code problem over those prototypes. The weights are not uniform: they depend on how many neighbors each node has in the other color classes and on the corresponding InfoNCE denominators. A collapsed prototype configuration is stationary exactly when, for every color, the weighted ``force'' exerted by all other color lines has no component tangent to the sphere.

More importantly, local optima of our objective only interact with the graph structures via the per-color counts $d_{v,j}$. This dependence is exploited in the next sections to identify graph balancedness conditions that ensure gradient-based algorithms land in a favorable regime.


%% file: sections/implicit_bias_toward_orthogonal_line_prototypes.tex
\section{Gradient Dynamics and Max-Margin Prototype Geometry}
The previous section identifies the collapsed prototype geometry and the stationary equations within it. The next question is dynamical: if optimization starts on this geometry, does it remain there, and which prototype configurations does it favor? Under an equitable optimal-coloring assumption, gradient-based optimization is compatible with the collapsed line-prototype picture. Moreover, gradient-based algorithms exhibit an implicit bias toward max-margin solutions, following the standard implicit-bias framework for separable exponential-tail losses in linear models and homogeneous networks \citep{soudry2018implicit,lyu2019gradient,ji2020directional}. We state the theorem informally and defer the formal statement, (\Cref{thm:invariance-restatement}), and its proof to Appendix, \Cref{app:proof_dynamics}.

\begin{theorem}[informal]\label{thm:invariance}
Fix a graph $G=([n],E)\in \mathrm{supp}(\mathcal D_{\mathcal G})$ and let $c_G^\star:[n]\to [k_G]$ be an optimal coloring. Write $V_i:=\bigl(c_G^\star\bigr)^{-1}(i)$. Assume $c_G^\star$ is equitable in the sense that for every pair $i\neq j$ there is an integer $r_{ij}\ge 0$ such that
$\abs{\N_G(v)\cap V_j}=r_{ij}$, for every 
$v\in V_i$.
Then the collapsed manifold,
\begin{equation}
\mathcal M_{c_G^\star}:=\{h\in (\sphere)^n:\ \exists q_1,\dots,q_{k_G}\in \sphere \text{ with } h_v=q_{c_G^\star(v)} \ \forall v\}
\end{equation}
is invariant under GD dynamics, i.e., GD stays in this manifold once it reaches the manifold. 
\end{theorem}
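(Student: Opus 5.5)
The plan is to show that at every point $h\in\mathcal M_{c_G^\star}$ the (Riemannian, projected) subgradient of $\ell^{(G)}_{\mathrm{abs},\tau}(\cdot,c_G^\star)$ can be chosen so that its component at vertex $v$ depends only on the color $c_G^\star(v)$. Given this, one projected subgradient step $h_v\mapsto \Pi_{\sphere}\bigl(h_v-\eta g_v\bigr)$ sends all vertices of $V_i$ to the same point, because they start at the same point $q_i$ and receive the same update. Invariance then follows by induction on the step count. For continuous-time flow the same argument works by uniqueness of solutions restricted to the manifold. Since $\mathcal M_{c_G^\star}$ is parametrized by $q\in(\sphere)^{k_G}$, it suffices to show that the full gradient in $h$ is \emph{block-constant} on color classes.

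The first step is to write the per-vertex loss on $\mathcal M$. For $v\in V_i$, the positive terms all have $\abs{\ip{h_v}{h_w}}=1$. The negative terms are $\sum_{u\in\N_G(v)}e^{\abs{\ip{h_v}{h_u}}/\tau}=\sum_{j\ne i} r_{ij}\,e^{\abs{\ip{q_i}{q_j}}/\tau}$, and this depends only on $i$ by equitability. So every denominator $D_i$ and every softmax weight is constant on $V_i$.

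The second step is to differentiate $\ell^{(G)}$ with respect to a single $h_v$, collecting the three places $h_v$ appears:
\begin{enumerate}
\item as anchor, in its own row;
\item as a positive $w$ in rows of same-color vertices $v'\in V_i$;
\item as a negative $u$ in rows of neighbors $v'\in \N_G(v)$.
\end{enumerate}
For the nonsmooth absolute value, I would fix a consistent Clarke selection $\partial\abs{t}\ni s(t)$ depending only on the pair of colors, for instance $\mathrm{sgn}(\ip{q_i}{q_j})$, with $0$ at $t=0$. Then each contribution has the form $\sum(\text{scalar weight})\cdot s\cdot h_{\text{other}}/\tau$, where the other vertex is $q_j$. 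The weights are built from $D_i$, $D_j$, $\abs{V_i}$, $n$ and the counts. Term (1) sums over $V_i$ and over $\N_G(v)\cap V_j$, giving factors $\abs{V_i}$ and $r_{ij}$. Term (2) sums over $v'\in V_i$, giving a factor $\abs{V_i}$. Term (3) sums over $v'\in\N_G(v)$, grouped by color $j$ with $r_{ij}$ members each. In each such row $v$ appears once as a negative, and the row weight depends only on $j$ because $D_j$ is constant. Every coefficient is therefore a function of $(i,j)$ alone, so $g_v=G_i$ for all $v\in V_i$. Projection onto the tangent space $T_{q_i}\sphere$ and the retraction also depend only on $q_i$ and $G_i$.

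The main obstacle is term (3), together with the nonsmooth selection. One needs the number of neighbors of $v$ in $V_j$ to be constant, which is the hypothesis $r_{ij}$, and also each $j$-row denominator to be constant, which again follows from equitability applied to $j$. Note that $\abs{\N_G(v)\cap V_j}=r_{ij}$ for $v\in V_i$ is used separately from $r_{ji}$; no symmetry $r_{ij}\abs{V_i}=r_{ji}\abs{V_j}$ is needed beyond what holds automatically.

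For the nonsmooth part, I would state the result for the subgradient selected color-consistently. This is a legitimate element of the Clarke subdifferential of the sum, since the sum rule gives inclusion and we choose elements in the convex hull of limiting gradients. I would also remark that on the dense set where $\ip{q_i}{q_j}\neq0$ the gradient is unique, so the statement holds for genuine GD there. Finally, one checks that the self-pair term $w=v$ has constant inner product $1$ and contributes a radial direction that the tangent projection removes.
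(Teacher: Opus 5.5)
Your proposal is correct and follows essentially the same route as the paper. Equitability makes every anchor denominator and every per-color neighbor multiplicity depend only on the color index, so a symmetric, color-consistent choice $\xi_{ij}=\xi_{ji}\in\Xi_{ij}(q)$ gives every vertex of $V_i$ the same projected subgradient, and invariance then follows by induction for the discrete iteration and by the analogous argument for the flow. Your bookkeeping of the three roles of $h_v$ (anchor, positive, negative) is more explicit than the paper's sketch. One justification needs fixing: that the color-consistent selection lies in $\partial_C$ of the full loss follows from Clarke regularity, not from the one-sided sum-rule inclusion, which points the wrong way. Regularity holds because the loss is locally a smooth function, increasing in each edge overlap $\abs{\ip{h_v}{h_u}}$, with positive-pair overlaps near $1$ where $\abs{\cdot}$ is smooth, so the chain rule holds with equality.
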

Several standard graph families including complete $k$-partite, balanced Tur\'an, every $(r,s)$-biregular (even cycles, hypercubes) graphs satisfy the equitability hypothesis. One can also build equitable colorings from a $k$-chromatic quotient graph on color classes $V_1,\dots,V_k$. For each active pair $(i,j)$, replace the quotient edge by an $r_{ij}$-regular bipartite graph between $V_i$ and $V_j$. By construction, every node in $V_i$ then has the same number of neighbors in every other color class.


Furthermore, when dropping the unit-norm constraint, gradient descent accumulates at KKT points of the max-margin problem for this optimzation:
\begin{theorem}\label{thm:abs-max-margin}
Fix a graph $G=([n],E)\in \mathrm{supp}(\mathcal D_{\mathcal G})$ and an optimal equitable coloring $c_G^\star$ as in \Cref{thm:invariance}.
Assume the prototype maps $w\mapsto q_i(w)$ are positively $1$-homogeneous, and write $w=\rho \bar w$ with $\rho=\norm{w}_2$ and $\norm{\bar w}_2=1$. During training, $\rho$ and ${\bar w}$ evolve with time $t$ (e.g. steps of GD) and we are interested the dynamics $\rho(t), {\bar w}(t)$. Define the margin:
\begin{equation}
\mu_{ij}(\bar w):=\norm{q_i(\bar w)}^2-\abs{\ip{q_i(\bar w)}{q_j(\bar w)}}
\qquad \text{for each pair }(i,j)\text{ with }r_{ij}>0.
\end{equation}
Assume separability, i.e. there exists $\bar w$ such that $\mu_{ij}(\bar w)>0$ for every pair with $r_{ij}>0$, and assume standard step-size and nondegeneracy conditions for gradient descent on separable exponential-tail objectives. Then
the parameter norm {$\rho(t)$} diverges and
every accumulation point of {$\bar w(t)$} is a first-order stationary/KKT point of
\begin{equation}
\max_{\norm{w}_2=1}\ \min_{(i,j):\,r_{ij}>0}\mu_{ij}(w);
\end{equation}
Consequently, without unit normalization the absolute-value contrastive objective has implicit bias toward maximizing the minimum absolute contrastive margin across active quotient-graph pairs.
\end{theorem}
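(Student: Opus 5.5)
The plan is to reduce the statement to the standard implicit-bias theorem for homogeneous models trained on separable exponential-tail losses \citep{lyu2019gradient,ji2020directional}.

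\textbf{Step 1: rewrite the local loss on the collapsed class.} On $\mathcal M_{c_G^\star}$ we have $h_v=q_{c_G^\star(v)}(w)$. Under equitability, every $v\in V_i$ has positive-pair score $\|q_i\|^2$ for all $w\in V_i$, and its denominator is $e^{\|q_i\|^2/\tau}+\sum_{j} r_{ij}e^{|\langle q_i,q_j\rangle|/\tau}$. Dividing numerator and denominator by $e^{\|q_i\|^2/\tau}$ gives
\begin{equation}
\ell^{(G)}_{\mathrm{abs},\tau}(w)=\sum_{i}\frac{|V_i|}{n}\Bigl(-\frac{1}{1+\sum_{j:r_{ij}>0} r_{ij}\,e^{-\mu_{ij}(w)/\tau}}\Bigr).
\end{equation}
The loss is therefore a function of the margins $\mu_{ij}$ only.

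\textbf{Step 2: homogeneity and the exponential tail.} Since each $q_i$ is positively $1$-homogeneous, $\mu_{ij}(\rho\bar w)=\rho^2\mu_{ij}(\bar w)$, so the margins are $2$-homogeneous. This is the homogeneous-network setting with order $L=2$.

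The loss is not literally $\sum \exp(-\mu)$. However, $-1/(1+x)=-1+x-O(x^2)$ as $x\to 0$. So up to the additive constant $-1$, the loss behaves like $\sum_{ij}\frac{|V_i|}{n}r_{ij}e^{-\mu_{ij}/\tau}$ when all margins are large. The function $\phi(x)=1-1/(1+x)$ is increasing with $\phi(x)\sim x$ near $0$. This is precisely the class of exponential-tail losses treated in \citet{lyu2019gradient}, whose Section on general losses covers $\ell=e^{-f}$ with $f$ satisfying their regularity conditions. I would verify those conditions, namely that $f(q)=-\log\phi(e^{-q/\tau}\cdot\mathrm{const})$ has $f'(q)q$ nondecreasing and $f'$ bounded for large $q$. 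The sum over pairs should be handled as a sum of per-pair terms $\sum_i \frac{|V_i|}{n}\phi(\sum_j r_{ij}e^{-\mu_{ij}/\tau})$, exactly as their multi-sample loss is handled. I expect this step to be the main technical obstacle. The nested structure, with $\phi$ applied to a sum over $j$, is not literally a sum of per-example losses. To deal with it, I would sandwich the loss between constant multiples of $\sum_{ij}e^{-\mu_{ij}/\tau}$ once all $\mu_{ij}$ are large, and then rerun their smoothed-margin monotonicity argument with these comparison constants. Only the $\log$ of the loss enters that argument, so multiplicative constants become additive $O(1)$ terms. These vanish after dividing by $\rho^2\to\infty$.

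\textbf{Step 3: apply the theorem.} Separability gives a point with all $\mu_{ij}>0$. Scaling that point up makes the loss smaller than $\tfrac{1}{n}\cdot$ the threshold needed for the initialization condition of \citet{lyu2019gradient}. The ``standard step-size and nondegeneracy conditions'' in the statement are assumed to cover reaching this regime. Their result then gives three conclusions:
\begin{enumerate}
\item the smoothed normalized margin increases;
\item $\rho(t)\to\infty$, because the loss tends to its infimum $-1$ only as the margins diverge;
\item every limit point of $\bar w(t)$ is a KKT point of $\min\|w\|^2$ subject to $\mu_{ij}(w)\ge 1$.
\end{enumerate}

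By $2$-homogeneity, this constrained problem is equivalent to the KKT conditions of $\max_{\|w\|=1}\min_{r_{ij}>0}\mu_{ij}(w)$. One checks this by the usual rescaling of multipliers, using Euler's identity $\langle\nabla\mu_{ij}(w),w\rangle=2\mu_{ij}(w)$.

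For the nonsmooth case, where $|\langle q_i,q_j\rangle|$ is only locally Lipschitz, I would use Clarke subdifferentials as permitted in \citet{lyu2019gradient}. The absolute value is definable, so the chain rule needed there holds. The final sentence of the statement is a restatement of this KKT conclusion.
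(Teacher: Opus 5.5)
Your proposal is correct and follows essentially the paper's route. The equitable collapsed loss becomes $-\sum_i \frac{|V_i|}{n}\bigl(1+\sum_j r_{ij}e^{-\rho^2\mu_{ij}(\bar w)/\tau}\bigr)^{-1}$, $2$-homogeneity of the $\mu_{ij}$ puts it in the separable exponential-tail regime, and \citet{lyu2019gradient} (with \citet{soudry2018implicit}) then yields $\rho\to\infty$ and KKT limit directions.

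You are more careful than the paper's proof, which never addresses the nested $\phi(\sum_j\cdot)$ structure you flag. Two points need tightening:
\begin{itemize}
\item Your sandwich must bound the gradient weights $\phi'(\Gamma_i)=(1+\Gamma_i)^{-2}\in[\tfrac14,1]$ as well as the loss value. The smoothed-margin argument uses the gradient--value relation, not just $\log$ of the loss.
\item You, like the paper, invoke Lyu--Li's Clarke-subdifferential theory. That theory covers gradient flow, not discrete gradient descent, on this absolute-value objective.
\end{itemize}
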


%% file: sections/lovasz_theta_certificates.tex
\section{Lov\'asz \texorpdfstring{$\vartheta$}{theta} certificates from approximate orthogonality in complement graph}
The preceding sections show that the loss and the dynamics both favor line collapse within colors and orthogonality across edges. We now ask what this learned geometry certifies about the original graph. Approximate edgewise orthogonality can be lifted to an exact orthogonal representation, which in turn gives a graph-theoretic upper-bound certificate through Lov\'asz's $\vartheta$ function \citep{lovasz1979shannon}: if $u_1,\dots,u_n$ is an orthogonal representation of $G$ and $c$ is a unit handle, then
\begin{equation}
\vartheta(\overline G)\le \max_{i\in[n]} 1/\ip{c}{u_i}^2.    
\end{equation}

The next lemma shows that a uniformly approximately orthogonal learned representation already yields such a certificate after an explicit lift. Full proofs are in Appendix \Cref{app:lovasz}.

\begin{lemma}[Approximate orthogonality implies a $\vartheta$ certificate]\label{lemma:lovasz-certificate}
Let $G=([n],E)$ be a graph, let $v_1,\dots,v_n\in \sphere$, and assume that $
\abs{\ip{v_i}{v_j}}\le \varepsilon$ for every $\{i,j\}\in E$. Let $\alpha:=\min_{i\in[n]} \abs{\ip{r}{v_i}}$ for any $r\in \sphere$. If $\alpha>0$, then
\begin{equation}
\vartheta(\overline G)
\le
\frac{1+\Delta(G)\varepsilon}{\alpha^2},
\end{equation}
where $\Delta(G):=\max_{i\in[n]} |\N_G(i)|$ is the maximum degree and $\overline{G}$ is the complement of $G$.
\end{lemma}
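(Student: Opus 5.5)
The plan is to lift the approximately orthogonal vectors $v_1,\dots,v_n$ to an exactly orthogonal representation of $G$ (unit vectors with $\ip{u_i}{u_j}=0$ for $\{i,j\}\in E$). The lift appends extra coordinates that cancel the small edge inner products. We keep $r$, padded with zeros, as the handle, and then apply the Lov\'asz bound $\vartheta(\overline G)\le \max_i 1/\ip{c}{u_i}^2$ quoted above.

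\textbf{Step 1 (correction matrix).} Let $A$ be the Gram matrix, $A_{ij}=\ip{v_i}{v_j}$, so $A\succeq 0$ and $A_{ii}=1$. Define the symmetric matrix $F$ by $F_{ij}=\ip{v_i}{v_j}$ if $\{i,j\}\in E$ and $F_{ij}=0$ otherwise; in particular $F_{ii}=0$. Each row of $F$ has at most $\Delta(G)$ nonzero entries, each of absolute value at most $\varepsilon$. By Gershgorin's theorem, every eigenvalue of $F$ therefore lies in $[-\Delta(G)\varepsilon,\Delta(G)\varepsilon]$. Consequently $S:=\Delta(G)\varepsilon I-F\succeq 0$, and we can factor $S=PP^\top$ with rows $p_i\in\R^n$. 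Since $F_{ii}=0$, we have $\norm{p_i}^2=S_{ii}=\Delta(G)\varepsilon$. For an edge $\{i,j\}$, $\ip{p_i}{p_j}=S_{ij}=-\ip{v_i}{v_j}$.

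\textbf{Step 2 (lift).} Set $w_i:=(v_i,p_i)\in\R^{d}\oplus\R^{n}$ and $u_i:=w_i/\sqrt{1+\Delta(G)\varepsilon}$. Then $\norm{w_i}^2=1+\Delta(G)\varepsilon$, so each $u_i$ is a unit vector. For $\{i,j\}\in E$ we get $\ip{w_i}{w_j}=\ip{v_i}{v_j}-\ip{v_i}{v_j}=0$. Hence $(u_i)$ is an exact orthogonal representation of $G$. Take the handle $c:=(r,0)$, which is a unit vector because $r\in\sphere$. Then
\[
\ip{c}{u_i}^2=\frac{\ip{r}{v_i}^2}{1+\Delta(G)\varepsilon}\ge\frac{\alpha^2}{1+\Delta(G)\varepsilon}>0 .
\]
Plugging into the Lov\'asz bound yields $\vartheta(\overline G)\le (1+\Delta(G)\varepsilon)/\alpha^2$.

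\textbf{Main obstacle.} The only nontrivial point is that the edge correction can be realized by vectors at all, i.e.\ that subtracting the edge entries can be compensated by a uniform diagonal shift. This is exactly the Gershgorin operator-norm bound $\norm{F}_{\mathrm{op}}\le\Delta(G)\varepsilon$. I would double-check that the diagonal of $S$ is uniform, so that all $u_i$ share the same normalization; this is what keeps the handle inner products controlled by $\alpha$ alone.
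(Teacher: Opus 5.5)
Your proof is correct. It shares the paper's skeleton: append correction coordinates that exactly cancel the edge inner products, keep the handle $(r,0)$, and apply the Lov\'asz handle bound.

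The difference is in how the corrections are built. The paper writes them down explicitly, with one coordinate per oriented edge $e=(a_e,b_e)$. Setting $s_e=\ip{v_{a_e}}{v_{b_e}}$, it places $\sqrt{\abs{s_e}}$ at $a_e$ and $-\operatorname{sgn}(s_e)\sqrt{\abs{s_e}}$ at $b_e$. In your notation, this is an explicit incidence-type factorization of $D-F$, where $D=\operatorname{diag}\bigl(\sum_{j\in\N_G(i)}\abs{\ip{v_i}{v_j}}\bigr)_i$. That is the constructive proof that a diagonally dominant matrix is PSD. Your matrix decomposes as
$S=\Delta(G)\varepsilon I-F=(D-F)+(\Delta(G)\varepsilon I-D)$,
which is the same matrix plus a nonnegative diagonal padding. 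So Gershgorin plus an abstract square root replaces the hands-on construction.

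What the paper's route buys: it needs no factorization, and the lift is sparse and local. It gives nonuniform norms $\norm{z_i}^2=1+\sum_{j\in\N_G(i)}\abs{\ip{v_i}{v_j}}\le 1+\Delta(G)\varepsilon$. This shows that the uniform normalization you single out as essential is not actually needed, since only an upper bound on each $\norm{z_i}$ enters the handle estimate. It also implicitly gives the sharper nodewise bound $\max_i\bigl(1+\sum_{j\in\N_G(i)}\abs{\ip{v_i}{v_j}}\bigr)/\ip{r}{v_i}^2$.

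What your route buys: it generalizes in a different direction. Any $\lambda\ge\lambda_{\max}(F)$ makes $\lambda I-F\succeq 0$, and $\lambda_{\max}(F)\ge 0$ because $\operatorname{tr}F=0$. The same argument then gives $\vartheta(\overline G)\le(1+\lambda_{\max}(F))/\alpha^2$. Here $\lambda_{\max}(F)$ can be much smaller than $\Delta(G)\varepsilon$ when the signs of the edge overlaps cancel.
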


\begin{corollary}[Near-prototype certificate]\label{cor:lovasz-prototype-certificate}
Let $G=([n],E)$ admit a proper $k$-coloring $c:[n]\to [k]$, and let $q_1,\dots,q_k\in \sphere$ be orthonormal. Assume that for each node $i\in[n]$ there is a sign $\sigma_i\in\{\pm 1\}$ such that
$\norm{v_i-\sigma_i q_{c(i)}}\le \delta,
$
and assume also that
$\abs{\ip{v_i}{v_j}}\le \varepsilon$
for every $\{i,j\}\in E$.
If $\delta<k^{-1/2}$, then
\begin{equation}
\vartheta(\overline G)
\le
\frac{1+\Delta(G)\varepsilon}{(k^{-1/2}-\delta)^2}.
\end{equation}

\end{corollary}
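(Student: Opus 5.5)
The plan is to reduce the corollary to \Cref{lemma:lovasz-certificate}. The edgewise hypothesis $\abs{\ip{v_i}{v_j}}\le\varepsilon$ is exactly the hypothesis of that lemma, so it only remains to exhibit a unit handle $r\in\sphere$ with $\alpha=\min_i\abs{\ip{r}{v_i}}\ge k^{-1/2}-\delta>0$. The lemma then gives $\vartheta(\overline G)\le (1+\Delta(G)\varepsilon)/\alpha^2\le (1+\Delta(G)\varepsilon)/(k^{-1/2}-\delta)^2$, since the bound is decreasing in $\alpha$.

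For the handle, take the normalized sum of prototypes,
\[
r:=\frac{1}{\sqrt k}\sum_{j=1}^k q_j .
\]
Because the $q_j$ are orthonormal, $\norm{r}=1$, so $r\in\sphere$. Moreover $\ip{r}{q_j}=k^{-1/2}$ for every $j$. Since $d\ge k$ is implicit in the existence of $k$ orthonormal vectors, no extra dimension assumption is needed.

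Now fix a node $i$. Write $v_i=\sigma_i q_{c(i)}+e_i$ with $\norm{e_i}\le\delta$. Then
\[
\abs{\ip{r}{v_i}}\ \ge\ \abs{\sigma_i\ip{r}{q_{c(i)}}}-\abs{\ip{r}{e_i}}\ \ge\ k^{-1/2}-\norm{r}\norm{e_i}\ \ge\ k^{-1/2}-\delta ,
\]
using the reverse triangle inequality and Cauchy--Schwarz. The sign $\sigma_i$ is harmless precisely because the lemma uses $\abs{\ip{r}{v_i}}$. Taking the minimum over $i$ gives $\alpha\ge k^{-1/2}-\delta$, which is positive by the assumption $\delta<k^{-1/2}$.

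There is no real obstacle here. The only choice that matters is the handle $r$, which must give a uniform lower bound across all color classes; the balanced sum of prototypes does this, achieving the same projection $k^{-1/2}$ on every class. The monotonicity step $1/\alpha^2\le 1/(k^{-1/2}-\delta)^2$ needs $\alpha\ge k^{-1/2}-\delta>0$, which we have. The proper coloring $c$ is used only to index the prototypes. The $\varepsilon$ hypothesis is stated separately because $\delta$-closeness alone gives only $\abs{\ip{v_i}{v_j}}\le 2\delta+\delta^2$ on edges, so it simply passes through to \Cref{lemma:lovasz-certificate}.
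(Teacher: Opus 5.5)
Your proof is correct and is essentially the paper's argument: the same balanced handle $r=k^{-1/2}\sum_{a} q_a$, the same reverse-triangle and Cauchy--Schwarz bound $\abs{\ip{r}{v_i}}\ge k^{-1/2}-\delta$, and then \Cref{lemma:lovasz-certificate}. You also note that the lemma's $\alpha$ is the actual minimum, so the last step needs $1/\alpha^2$ to be decreasing in $\alpha$; this makes the paper's brief ``apply the lemma with $\alpha=k^{-1/2}-\delta$'' more careful.
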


\noindent
\Cref{cor:lovasz-prototype-certificate} shows that a learned representation close to the line-collapsed orthogonal prototype picture of \Cref{cor:optimal-coloring-collapse} certifies $\vartheta(\overline G)\lesssim k$. The degradation factor is explicit and depends on the edgewise orthogonality error $\varepsilon$ and the prototype misalignment $\delta$. 

%% file: sections/experiments.tex
\section{Experiments}\label{sec:citation-experiments}

\textbf{Experimental protocol.}
All experiments use the same high-level pipeline. We train a GNN encoder with the coloring-aware contrastive objective, normalize the node embeddings, and decode them by sweeping $k$ in $k$-medoids until the monochromatic-edge fraction falls below a prescribed threshold. We denote the selected value by $k^\star$ and write \textsc{Mono} for the resulting monochromatic-edge fraction. A run is counted as a `Hit' if \textsc{Mono} is less than some target threshold, by default $0.05$. Throughout the empirical section, we focus on the strongest encoder configurations identified in our ablation study. When exact chromatic numbers are not uniformly available, we normalize by the greedy-coloring proxy $\chi_{\mathrm{greedy}}$ and report $\rho = k^\star / \chi_{\mathrm{greedy}}$. When testing our performance on graph instances (as opposed to testing generalization), we compare our methods against a greedy coloring baseline and two prior unsupervised learning methods: PI-GNN \citep{schuetz2022pignn} and full-GCN \citep{vanderbush2026warmstart}. In all experiments, we have a hard per-graph runtime budget of $900$ seconds.

\textbf{Model configuration.} We perform extensive ablation across different loss functions, feature constructions, and architectures spanning both MPNNs and attention-based models. Ablation studies are performed on the CORA citation graph dataset. We subsample $200$ subgraphs of CORA for training and a separate set of $50$ subgraphs for testing. We evaluate different loss functions, including vanilla (signed) InfoNCE (denoted by `v1'), absolute-value InfoNCE (`v1abs'), InfoNCE with a read-out head that also compute a soft-loss term for edge conflict (`v2') (See \Cref{tab:cora-v1-v5}) and its absolute-value version (`v2abs'). We also tried different node feature constructions, such as using raw CORA bag-of-words (v1-v2), structural node features (`v3') or random Erdos-Renyi features (`v4') (See \Cref{tab:cora-v1-v5}). Versions `v1'/`v1abs' and `v2'/`v2abs' are selected as the strongest pipelines and further tuned. We then perform encoder ablations over seven message-passing encoders: GCN \citep{kipf2017semisupervised}, ResGCN \citep{li2019deepgcns}, GAT \citep{velickovic2018graph}, GIN \citep{xu2019powerful}, GatedGCN \citep{bresson2017residual}, GraphSAGE \citep{hamilton2017inductive}, and Unitary MP \citep{kiani2024unitary}; and three attention-based graph networks: GPS \citep{rampasek2022recipe}, Graph ViT \citep{he2023generalization}, and Exphormer \citep{shirzad2023exphormer} (see \Cref{tab:full-encoder-sweep} for details). 

We select the top strongest encoders  from both styles: GatedGCN, Unitary MP, gps-sage, gps-gcn for further hyperparameter fine-tuning 
and used for downstream experiments. Due to space constraint, we only report the best configurations for each experiments, leaving the full report to the Appendix.

\subsection{In-distribution (ID) and Out-of-distribution (OOD) generalization}
\subsubsection{COLOR benchmark}

The benchmark is constructed as an in-family generalization study. For each family, training graphs are drawn with bounded size. Performance is then measured on ID graphs (within size-bound) and larger OOD graphs (outside size-bound) from that family. \Cref{tab:color-family-splits} in the Appendix reports the metadata of this dataset. 
All runs use 80 training epochs, common input dimension $d=64$.


Two node feature constructions are used for the attribute-free COLOR graphs.
(1) \textbf{Random:} random unit vectors $x_i \in \mathbb{R}^{64}$,
(2) \textbf{Struct:} deterministic structural descriptors 
starts normalized node statistics such as degree, local clustering coefficient, core number, and PageRank, and then expands them with a fixed nonlinear basis (see Appendix, \Cref{app:color} for details) until the feature dimension reaches 64. This expansion provides a richer deterministic input basis by introducing non-linearity.

\Cref{tab:color-best-by-family} summarizes results per family. 
Across ten seeds, Book and Queen graphs are best handled by random-feature recipes, while Myciel graphs are best handled by structural features. More details and family-level deep dives are in the Appendix (e.g. \Cref{tab:color-book-detail}). Abs variants are competitive but are not the best configurations, while the direct unsupervised baselines often fail to finish in time (App. \Cref{app:color}). 
 
\begin{table}[H]
\centering
\footnotesize
\caption{Best configuration per family, ranked by OOD hit-rate, then lower OOD $\rho$, then lower OOD \textsc{Mono}, then the analogous ID criteria. {Entries are mean $\pm$ standard deviation over ten random seeds.}}
\label{tab:color-best-by-family}
\setlength{\tabcolsep}{3.5pt}
\begin{tabular}{llllrccccc}
\toprule
Family & Feature mode & Pipeline & Encoder & $\rho_{\mathrm{ID}}$ & \textsc{Mono}$_{\mathrm{ID}}$ & Hit$_{\mathrm{ID}}$ & $\rho_{\mathrm{OOD}}$ & \textsc{Mono}$_{\mathrm{OOD}}$ & Hit$_{\mathrm{OOD}}$ \\
\midrule
Book   & Random & v1 & gps\_sage & {1.41{\scriptsize $\pm$0.18}} & {0.0451{\scriptsize $\pm$0.0035}} & {1.00{\scriptsize $\pm$0.00}} & {1.08{\scriptsize $\pm$0.08}} & {0.0452{\scriptsize $\pm$0.0034}} & {1.00{\scriptsize $\pm$0.00}} \\
Myciel   & Struct & v1 & gps\_gcn & {1.45{\scriptsize $\pm$0.28}} & {0.0395{\scriptsize $\pm$0.0041}} & {1.00{\scriptsize $\pm$0.00}} & {1.54{\scriptsize $\pm$0.40}} & {0.0438{\scriptsize $\pm$0.0045}} & {1.00{\scriptsize $\pm$0.00}} \\
Queen   & Random & v2 & gps\_sage & {0.90{\scriptsize $\pm$0.03}} & {0.0474{\scriptsize $\pm$0.0018}} & {1.00{\scriptsize $\pm$0.00}} & {0.76{\scriptsize $\pm$0.02}} & {0.0471{\scriptsize $\pm$0.0008}} & {1.00{\scriptsize $\pm$0.00}} \\
\bottomrule
\end{tabular}
\end{table}

\subsubsection{Citation Structural Feature Transfer}

We use a shared structural feature map with fixed dimension $16$. The model is trained on Cora and evaluated without retraining on CiteSeer and PubMed. 
In \Cref{tab:cora-transfer-subgraph-main} we report the subgraph transfer setting and leave the full-graph setting (train on subgraph, test on full graphs) to the Appendix (\Cref{tab:cora-transfer-fullgraph}). On subgraphs, Cora-trained structural features retain useful performance on PubMed and partially transfer to CiteSeer, but they are weaker than in-distribution training, which is expected.

\begin{table}[H]
\centering
\footnotesize
\caption{Cora-trained structural-feature transfer on held-out target subgraphs. {Entries are mean $\pm$ standard deviation over random seeds.}}
\label{tab:cora-transfer-subgraph-main}
\setlength{\tabcolsep}{4pt}
\begin{tabular}{lllrrrr}
\toprule
Train $\rightarrow$ Test & Encoder & Pipeline & $k^\star$ avg. & \textsc{Mono} avg. & $\rho$ avg. & Hit-rate \\
\midrule
Cora $\rightarrow$ CiteSeer & gated\_gcn & v1 & {\textbf{6.06{\scriptsize $\pm$0.41}}} & {0.0488{\scriptsize $\pm$0.0049}} & {\textbf{1.36{\scriptsize $\pm$0.08}}} & {0.74{\scriptsize $\pm$0.11}} \\
Cora $\rightarrow$ CiteSeer & gated\_gcn & v2 & {6.32{\scriptsize $\pm$0.39}} & {\textbf{0.0478{\scriptsize $\pm$0.0035}}} & {1.42{\scriptsize $\pm$0.07}} & {0.74{\scriptsize $\pm$0.11}} \\
Cora $\rightarrow$ CiteSeer & gps\_gcn & v1 & {6.72{\scriptsize $\pm$0.64}} & {0.0493{\scriptsize $\pm$0.0033}} & {1.49{\scriptsize $\pm$0.10}} & {0.74{\scriptsize $\pm$0.10}} \\
\addlinespace[2pt]
Cora $\rightarrow$ PubMed & gated\_gcn & v2 & {5.09{\scriptsize $\pm$0.48}} & {0.0310{\scriptsize $\pm$0.0026}} & {\textbf{1.20{\scriptsize $\pm$0.08}}} & {0.98{\scriptsize $\pm$0.02}} \\
Cora $\rightarrow$ PubMed & gps\_sage & v2 & {6.03{\scriptsize $\pm$0.60}} & {0.0331{\scriptsize $\pm$0.0026}} & {1.42{\scriptsize $\pm$0.12}} & {0.99{\scriptsize $\pm$0.02}} \\
Cora $\rightarrow$ PubMed & gated\_gcn & v1 & {\textbf{5.06{\scriptsize $\pm$0.44}}} & {\textbf{0.0307{\scriptsize $\pm$0.0033}}} & {1.20{\scriptsize $\pm$0.06}} & {0.99{\scriptsize $\pm$0.02}} \\
\bottomrule
\end{tabular}
\end{table}


\subsubsection{Citation Node feature adaption}
We aim to further improve OOD behavior by preserving more of the original lexical signal and next consider node-feature adaptation schemes that use a shared cross-dataset input space.

\textbf{Implementation.}
For a node $u$ in dataset $d$, let $x_u^{(d)} \in \mathbb{R}^{F_d}$ denote its raw BOW vector, where $F_d$ depends on the dataset. Since $F_{\mathrm{Cora}}$, $F_{\mathrm{CiteSeer}}$, and $F_{\mathrm{PubMed}}$ are different, we cannot reuse the raw input projection across datasets directly. Each adaptation therefore constructs a shared feature vector $\tilde{x}_u \in \mathbb{R}^{r}$ before applying the same v1/v2 contrastive encoder.

First, all four adaptations use a deterministic signed hash projection. Let $H_d \in \{-1,0,+1\}^{F_d \times m}$ be a fixed sparse matrix with one signed nonzero per original BOW coordinate. The shared hashed BOW vector is $h_u = x_u^{(d)} H_d \in \mathbb{R}^{m}$. This signed hash map places all citation graphs into the same input dimension without learning a dataset-specific projection, while still preserving the sparsity pattern and much of the inner-product structure of the original BOW representation. We consider different adaptation gadgets:
\begin{enumerate}
    \item \textbf{Hybrid / Hybrid + Layer Norm.} $
    \tilde{x}_u^{\mathrm{hyb}} = [h_u ; s_u] \in \mathbb{R}^{288},
    \tilde{x}_u^{\mathrm{hyb+LN}} = \mathrm{LN}([h_u ; s_u]).
$ where $s_u \in \mathbb{R}^{32}$ denote the structural feature vector built from degree, clustering coefficient, core number, PageRank, and deterministic transforms.
    \item \textbf{PCA.} $\tilde{x}_u^{\mathrm{PCA}} = (h_u - \mu)V_{256}$ where $H_{\mathrm{src}} \in \mathbb{R}^{n \times 1024}$ is the source hashed feature matrix further compressed to dimension $256$ and $V_{256}$ are the top principal directions.
    \item \textbf{SVD.} $\tilde{x}_u^{\mathrm{SVD}} = h_u V_{256}$ where $H_{\mathrm{src}}$ admits the rank-$256$ approx. $H_{\mathrm{src}} \approx U_{256}\Sigma_{256}V_{256}^{\top}$.
\end{enumerate}

\Cref{tab:bow-adapt-subgraph-v2-main} reports the results on subgraph-split data preparation, run with our v2 pipeline. Reports on v1 pipeline and full graph generalization is left to the Appendix (\Cref{tab:bow-adapt-subgraph-v1}, \Cref{tab:bow-adapt-full-v1} and \Cref{tab:bow-adapt-full-v2})

\begin{table}[H]
\centering
\footnotesize
\caption{Cora subgraph transfer with the v2 pipeline. Entries are mean $\pm$ standard deviation over ten random seeds.}
\label{tab:bow-adapt-subgraph-v2-main}
\setlength{\tabcolsep}{2.5pt}
\renewcommand{\arraystretch}{1.05}
\begin{tabular}{@{}p{2.0cm}p{1.65cm}cccc@{}}
\toprule
\multirow{2}{*}{Adaptation} & \multirow{2}{*}{Encoder} & \multicolumn{2}{c}{Cora $\rightarrow$ CiteSeer} & \multicolumn{2}{c}{Cora $\rightarrow$ PubMed} \\
\cmidrule(lr){3-4}\cmidrule(lr){5-6}
& & $\rho$ & \textsc{Mono} & $\rho$ & \textsc{Mono} \\
\midrule
\multirow{3}{*}{Hybrid} & gated\_gcn & 1.26{\scriptsize $\pm$0.13} & 0.0335{\scriptsize $\pm$0.0036} & \textbf{1.27{\scriptsize $\pm$0.12}} & 0.0310{\scriptsize $\pm$0.0027} \\
 & gps\_gcn & 1.29{\scriptsize $\pm$0.10} & 0.0335{\scriptsize $\pm$0.0043} & 1.38{\scriptsize $\pm$0.12} & 0.0313{\scriptsize $\pm$0.0032} \\
 & gps\_sage & \textbf{1.25{\scriptsize $\pm$0.09}} & \textbf{0.0328{\scriptsize $\pm$0.0032}} & 1.27{\scriptsize $\pm$0.14} & \textbf{0.0305{\scriptsize $\pm$0.0030}} \\
\addlinespace[2pt]
\multirow{3}{*}{Hybrid+LN} & gated\_gcn & 1.29{\scriptsize $\pm$0.10} & 0.0336{\scriptsize $\pm$0.0044} & 1.28{\scriptsize $\pm$0.11} & 0.0317{\scriptsize $\pm$0.0031} \\
 & gps\_gcn & 1.38{\scriptsize $\pm$0.11} & 0.0334{\scriptsize $\pm$0.0025} & 1.36{\scriptsize $\pm$0.16} & 0.0310{\scriptsize $\pm$0.0023} \\
 & gps\_sage & 1.28{\scriptsize $\pm$0.08} & 0.0346{\scriptsize $\pm$0.0042} & 1.32{\scriptsize $\pm$0.11} & 0.0310{\scriptsize $\pm$0.0023} \\
\bottomrule
\end{tabular}

\vspace{0.65em}

\setlength{\tabcolsep}{2.5pt}
\renewcommand{\arraystretch}{1.05}
\begin{tabular}{@{}p{2.0cm}p{1.65cm}cccc@{}}
\toprule
\multirow{2}{*}{Adaptation} & \multirow{2}{*}{Encoder} & \multicolumn{2}{c}{Cora $\rightarrow$ CiteSeer} & \multicolumn{2}{c}{Cora $\rightarrow$ PubMed} \\
\cmidrule(lr){3-4}\cmidrule(lr){5-6}
& & $\rho$ & \textsc{Mono} & $\rho$ & \textsc{Mono} \\
\midrule
\multirow{3}{*}{PCA} & gated\_gcn & 1.53{\scriptsize $\pm$0.21} & 0.0375{\scriptsize $\pm$0.0027} & 2.27{\scriptsize $\pm$0.43} & 0.0351{\scriptsize $\pm$0.0020} \\
 & gps\_gcn & 1.48{\scriptsize $\pm$0.18} & 0.0361{\scriptsize $\pm$0.0026} & 1.60{\scriptsize $\pm$0.18} & 0.0344{\scriptsize $\pm$0.0033} \\
 & gps\_sage & 1.42{\scriptsize $\pm$0.27} & 0.0369{\scriptsize $\pm$0.0046} & 1.52{\scriptsize $\pm$0.18} & 0.0342{\scriptsize $\pm$0.0020} \\
\addlinespace[2pt]
\multirow{3}{*}{SVD} & gated\_gcn & 1.47{\scriptsize $\pm$0.11} & 0.0357{\scriptsize $\pm$0.0038} & 1.43{\scriptsize $\pm$0.11} & 0.0356{\scriptsize $\pm$0.0017} \\
 & gps\_gcn & 1.60{\scriptsize $\pm$0.24} & 0.0377{\scriptsize $\pm$0.0024} & 1.43{\scriptsize $\pm$0.12} & 0.0333{\scriptsize $\pm$0.0025} \\
 & gps\_sage & 1.41{\scriptsize $\pm$0.16} & 0.0359{\scriptsize $\pm$0.0037} & 1.31{\scriptsize $\pm$0.08} & 0.0323{\scriptsize $\pm$0.0027} \\
\bottomrule
\end{tabular}
\end{table}

{Our v2 pipeline with Hybrid/GPS-SAGE  gives the strongest CiteSeer subgraph-transfer row by $\rho$ and Mono, while PubMed is split: v2 Hybrid/GatedGCN has the lowest $\rho$ and v2 Hybrid/GPS-SAGE has the lowest Mono. Thus we can conclude that BOW-preserving adaptation is useful for OOD transfer.}

\subsection{Size and OOD generalization when trained on cycles}
This set of experiments covers generalization from cycles in which training is performed only on cycles $C_n$ with $50 \leq n \leq 200$, using random node initialization. 
The test suite contains $40$ small graphs and $20$ large cycles. The small graphs contains $20$ cycles $C_{20}$ to $C_{39}$ and $20$ non-cycle graphs (see \Cref{tab:small-per-graph}). The large subset contains $C_{7000}$ to $C_{7019}$. Experiments not completed in this time budget is denoted `timeout' in the report. 
The full table is in Appendix \Cref{tab:small-cycles-per-graph,tab:small-noncycle-per-graph}.
\begin{table}[H]
\centering
\footnotesize
\setlength{\tabcolsep}{4pt}
\renewcommand{\arraystretch}{1.05}
\caption{Combinatorial instance results when trained with cycles. Cells are reported as $k^\star$ / \textsc{Mono}. {Learned-method cells are mean $\pm$ standard deviation over ten random seeds.}}
\label{tab:small-per-graph}
\begin{tabular}{lrrcccc}
\toprule
Graph & $n$ & $\chi$ & v1abs + GatedGCN & v2abs + GPS-SAGE & PI-GNN & full-GCN \\
\midrule
K\_10 & 10 & 10 & {8.1{\scriptsize $\pm$0.3}} / {0.0422{\scriptsize $\pm$0.0070}} & {8.3{\scriptsize $\pm$0.5}} / {0.0378{\scriptsize $\pm$0.0107}} & 8 / 0.0444 & 8 / 0.0444 \\
K\_\{4,8\} & 12 & 2 & {4.6{\scriptsize $\pm$2.8}} / {0.0063{\scriptsize $\pm$0.0132}} & {2.8{\scriptsize $\pm$0.6}} / {0.0000{\scriptsize $\pm$0.0000}} & 2 / 0.0000 & 2 / 0.0000 \\
C\_30 & 30 & 2 & {3.3{\scriptsize $\pm$0.5}} / {0.0067{\scriptsize $\pm$0.0141}} & {3.8{\scriptsize $\pm$0.6}} / {0.0133{\scriptsize $\pm$0.0172}} & 3 / 0.0000 & 3 / 0.0000 \\
C\_31 & 31 & 3 & {3.3{\scriptsize $\pm$0.5}} / {0.0032{\scriptsize $\pm$0.0102}} & {4.0{\scriptsize $\pm$1.1}} / {0.0032{\scriptsize $\pm$0.0102}} & 3 / 0.0000 & 3 / 0.0000 \\
C\_7000 & 7000 & 2 & {3.5{\scriptsize $\pm$0.7}} / {0.0133{\scriptsize $\pm$0.0100}} & {3.9{\scriptsize $\pm$0.3}} / {0.0228{\scriptsize $\pm$0.0145}} & timeout & timeout \\
C\_7019 & 7019 & 3 & {3.4{\scriptsize $\pm$0.5}} / {0.0137{\scriptsize $\pm$0.0106}} & {4.1{\scriptsize $\pm$0.6}} / {0.0161{\scriptsize $\pm$0.0124}} & timeout & timeout \\
W\_14 & 14 & 4 & {4.8{\scriptsize $\pm$0.4}} / {0.0231{\scriptsize $\pm$0.0199}} & {5.6{\scriptsize $\pm$2.1}} / {0.0308{\scriptsize $\pm$0.0162}} & 3 / 0.0385 & 3 / 0.0385 \\
Petersen & 10 & 3 & {4.6{\scriptsize $\pm$1.3}} / {0.0000{\scriptsize $\pm$0.0000}} & {5.1{\scriptsize $\pm$1.3}} / {0.0000{\scriptsize $\pm$0.0000}} & 3 / 0.0000 & 3 / 0.0000 \\
Icosahedral & 12 & 4 & {7.1{\scriptsize $\pm$1.4}} / {0.0267{\scriptsize $\pm$0.0141}} & {6.0{\scriptsize $\pm$0.7}} / {0.0300{\scriptsize $\pm$0.0105}} & 4 / 0.0333 & 5 / 0.0000 \\
KG(9,3) & 84 & 5 & {9.5{\scriptsize $\pm$1.0}} / {0.0931{\scriptsize $\pm$0.0211}} & {9.4{\scriptsize $\pm$0.7}} / {0.0677{\scriptsize $\pm$0.0100}} & 4 / 0.0119 & 4 / 0.0119 \\
Mycielski(C5)\^{}3 & 47 & 6 & {8.7{\scriptsize $\pm$1.3}} / {0.0653{\scriptsize $\pm$0.0160}} & {9.0{\scriptsize $\pm$1.4}} / {0.0458{\scriptsize $\pm$0.0099}} & 4 / 0.0169 & 4 / 0.0254 \\
\bottomrule
\end{tabular}
\end{table}

{Overall, v1abs + GatedGCN is the strongest learned recipe among the reported cycle-trained models. Across ten seeds it obtains small-graph $\rho=1.47\pm0.10$, \textsc{Mono}$=0.0213\pm0.0030$, and hit-rate $0.895\pm0.020$, while on the large-cycle stress test it obtains $\rho=1.42\pm0.22$, \textsc{Mono}$=0.0133\pm0.0100$, and hit-rate $1.00\pm0.00$. On the small 40-graph subset, PI-GNN and full-GCN are more color-efficient than the learned cycle-trained models, but this advantage comes at substantially higher computational cost. On the large-cycle stress test, both baselines fail completely under the 15-minute per-graph cap, whereas the learned methods solve all large cycles in well under $0.1$ seconds per graph at test time.} Their one-time training cost is also small, ranging from $5$ to $20$ seconds across the reported models. 

%% file: sections/limitations.tex
\section{Discussion and Conclusion}

We introduced an absolute-value contrastive framework for supervised graph coloring and showed that it induces a precise line-prototype geometry: same-color vertices collapse onto common unoriented lines, while adjacent color classes are driven toward orthogonality. This geometry yields stationary equations governed by the coloring quotient graph, and enables Lov\'asz-style colorability certification when orthogonality is achieved. Empirically, contrastive GNN encoders provide fast reusable colorings and show promising size and OOD generalization. 

\textbf{Limitations and future work.}
While the strong global optima characterization is sharp, our dynamical invariance result further assumes an equitable optimal coloring, which makes vertices within the same color class indistinguishable to the projected subgradient dynamics. Without this symmetry, exact prototype collapse needs not be preserved along the full training trajectory. An exciting line of future work could use stability properties of different downstream clustering algorithms to relax this collapse assumption.


Secondly, the empirical results show that architecture and feature choices vary accuracies, so the framework does not suggest a one-size-fits-all solution to supervised neural graph coloring.

In the future, it is also exciting to develop stronger generalization theory for this approach for specific graph families via a sample-complexity or distribution-shift guarantee. Empirically, exploring a hybrid design between supervised and unsupervised learning (as in our v2 variant) may improve practicality of our method.

%% file: appendix/refresher.tex
\section{Short preliminaries on nonsmooth nonconvex analysis}\label{app:nonsmooth}
In this section, we revisit some notions of nonsmooth nonconvex analysis in the sense of Clarke that are used throughout the theoretical parts of the paper. The classical material is that of \citet{rockafellar1998variational} while a more mordern exposition is available from \citet{li2020nonsmooth}. 

We work in finite dimensions throughout. Let $f:\mathbb{R}^d\to\mathbb{R}$ be locally Lipschitz in a neighborhood of $x$. The Clarke directional derivative of $f$ at $x$ in direction $v$ is
\begin{equation}
    f^\circ(x;v)
    :=
    \limsup_{\substack{y\to x\\ t\downarrow 0}}
    \frac{f(y+tv)-f(y)}{t}.
\end{equation}
Unlike the classical directional derivative, this definition is stable under small perturbations of the base point $y$, which is essential for nonsmooth objectives.

The Clarke subdifferential of $f$ at $x$ is the compact convex set
\begin{equation}
    \partial_C f(x)
    :=
    \left\{
        \xi\in\mathbb{R}^d:
        \langle \xi,v\rangle \le f^\circ(x;v)
        \ \text{for all } v\in\mathbb{R}^d
    \right\}.
\end{equation}
When $f$ is continuously differentiable at $x$, this reduces to the usual singleton
\begin{equation}
    \partial_C f(x)=\{\nabla f(x)\}.
\end{equation}
More generally, if $f$ is differentiable almost everywhere near $x$, then $\partial_C f(x)$ can be interpreted as the convex hull of limiting gradients near $x$.

\paragraph{Clarke stationarity.}
A point $x$ is called Clarke stationary for the unconstrained problem
\[
    \min_{x\in\mathbb{R}^d} f(x)
\]
if
\begin{equation}
    0 \in \partial_C f(x).
\end{equation}
This is the nonsmooth analogue of the first-order condition $\nabla f(x)=0$. A key difference needs to be streessed: every local minimizer of a locally Lipschitz function is Clarke stationary, but the converse need not hold.

For constrained or manifold-valued variables, the stationarity condition must be imposed only along feasible directions. Let $\mathcal{M}$ be a smooth embedded Riemannian manifold with tangent space $T_x\mathcal{M}$ at $x$. For a locally Lipschitz function $f:\mathcal{M}\to\mathbb{R}$, the Riemannian Clarke directional derivative in direction $\eta\in T_x\mathcal{M}$ is defined by evaluating the Clarke directional derivative in any smooth local chart, equivalently by using a smooth retraction $R_x$:
\begin{equation}
    f^\circ_{\mathcal{M}}(x;\eta)
    :=
    (f\circ R_x)^\circ(0;\eta).
\end{equation}
This definition is independent of the particular smooth chart or retraction.

The Riemannian Clarke subdifferential is the set
\begin{equation}
    \partial_C^{\mathcal{M}} f(x)
    :=
    \left\{
        \xi\in T_x\mathcal{M}:
        \langle \xi,\eta\rangle_x
        \le
        f^\circ_{\mathcal{M}}(x;\eta)
        \ \text{for all } \eta\in T_x\mathcal{M}
    \right\},
\end{equation}
where $\langle\cdot,\cdot\rangle_x$ denotes the Riemannian metric. When $f$ is smooth on $\mathcal{M}$, this reduces to
\begin{equation}
    \partial_C^{\mathcal{M}} f(x)=\{\operatorname{grad}_{\mathcal{M}} f(x)\}.
\end{equation}

\paragraph{Clarke Riemannian stationarity (e.g. \citep{hosseini2013nonsmooth}).}
A point $x\in\mathcal{M}$ is Clarke Riemannian stationary for
\[
    \min_{x\in\mathcal{M}} f(x)
\]
if
\begin{equation}
    0 \in \partial_C^{\mathcal{M}} f(x).
\end{equation}
Equivalently, there is no first-order descent direction certified by the Clarke directional derivative on the tangent space. In the smooth case this condition becomes the usual Riemannian first-order condition
\begin{equation}
    \operatorname{grad}_{\mathcal{M}} f(x)=0.
\end{equation}

In this paper, the relevant nonsmoothness comes from absolute values, while the relevant constraints often place embeddings on products of spheres. Clarke Riemannian stationarity is therefore the appropriate first-order notion: it captures the nonsmooth geometry of the objective while respecting the feasible tangent directions of the representation manifold.

%% file: appendix/proofs_absolute_value_loss_appendix.tex
\section{Proofs for the absolute-value loss}\label{app:absolute_value_minima}
This appendix supplies the proofs for the setup and the absolute-value geometry in \Cref{prop:existence-signed,thm:abs-lower-bound,thm:abs-characterization,cor:optimal-coloring-collapse}. The proofs follow the order of the main text: compactness gives existence for the baseline objective, pointwise bounds give the absolute-value lower bound, and the equality cases give line collapse and orthogonality.

\begin{proof}[Proof of \Cref{prop:existence-signed}]
Write the support of $\mathcal D_{\mathcal G}$ as $\{G_1,\dots,G_m\}$, with $n_r:=|V(G_r)|$. Since expressive enough GNNs (with higher order tensor or random node features, etc.) are universal approximators, the feasible space of embedding functions is the finite product
\begin{equation}
\mathfrak F:=\prod_{r=1}^m (\sphere)^{n_r},
\end{equation}
which is compact. For fixed $c$, the population objective can be written as
\begin{equation}
\ell_{\mathrm{InfoNCE},\tau}(f,c)=
\sum_{r=1}^m \mathcal D_{\mathcal G}(G_r)\,
\ell_{\mathrm{InfoNCE},\tau}^{(G_r)}(f(G_r),c_{G_r}),
\end{equation}
and each summand is continuous in the corresponding coordinates of $f$. Hence the whole objective is continuous on $\mathfrak F$. By the Weierstrass theorem, the minimum is attained.
\end{proof}

\begin{proof}[Proof of \Cref{thm:abs-lower-bound}]
Fix a graph $G\in \mathrm{supp}(\mathcal D_{\mathcal G})$. For one anchor-positive pair $(v,w)$ with $w\in V_{c_G(v)}(G)$, define
\begin{equation}
L_{G,v,w}^{\mathrm{abs}}
:=
-
\frac{e^{\abs{\ip{h_v}{h_w}}/\tau}}
{e^{\abs{\ip{h_v}{h_w}}/\tau}+\sum_{u\in \N_G(v)} e^{\abs{\ip{h_v}{h_u}}/\tau}}.
\end{equation}
Because all embeddings are unit norm,
\begin{equation}
\abs{\ip{h_v}{h_w}}\le 1,
\qquad
\abs{\ip{h_v}{h_u}}\ge 0
\quad \text{for every }u\in \N_G(v).
\end{equation}
Hence
\begin{equation}
e^{\abs{\ip{h_v}{h_w}}/\tau}\le e^{1/\tau},
\qquad
e^{\abs{\ip{h_v}{h_u}}/\tau}\ge 1.
\end{equation}
The function
\begin{equation}
Q(A,B_1,\dots,B_m):=-\frac{A}{A+\sum_{i=1}^m B_i}
\end{equation}
is decreasing in $A>0$ and increasing in each $B_i\ge 0$. Therefore
\begin{equation}
L_{G,v,w}^{\mathrm{abs}}
\ge
\frac{-e^{1/\tau}}{e^{1/\tau}+|\N_G(v)|}.
\end{equation}
Averaging over the uniform choice of $v$ and $w\in V_{c_G(v)}(G)$ gives the graphwise lower bound
\begin{equation}
\ell_{\mathrm{abs},\tau}^{(G)}(f(G),c_G)
\ge
\frac{1}{n_G}\sum_{v=1}^{n_G}
\left(
-\frac{e^{1/\tau}}{e^{1/\tau}+|\N_G(v)|}
\right).
\end{equation}
Taking expectation over $G\sim \mathcal D_{\mathcal G}$ proves the claim.
\end{proof}

\begin{proof}[Proof of \Cref{thm:abs-characterization}]
Choose orthonormal vectors $q_1,\dots,q_{k_{\max}}\in \R^d$, which is possible because $d\ge k_{\max}$, and define the embedding function $f^c$ by $[f^c(G)]_v=q_{c_G(v)}$. If $c_G(v)=c_G(w)$ then $[f^c(G)]_v=[f^c(G)]_w$ and therefore
\begin{equation}
\abs{\ip{[f^c(G)]_v}{[f^c(G)]_w}}=1.
\end{equation}
If $uv\in E_G$, properness gives $c_G(u)\neq c_G(v)$, hence
\begin{equation}
\abs{\ip{[f^c(G)]_v}{[f^c(G)]_u}}=\abs{\ip{q_{c_G(v)}}{q_{c_G(u)}}}=0.
\end{equation}
Therefore every graphwise term attains the corresponding lower bound, so the population loss attains the lower bound as well.

For the characterization, equality in \Cref{thm:abs-lower-bound} requires equality in every graphwise lower bound, since the population loss is a convex combination of nonnegative lower-bound gaps over graphs in the support. Thus for every graph $G\in \mathrm{supp}(\mathcal D_{\mathcal G})$, each anchor $v$, and every same-color positive $w\in V_{c_G(v)}(G)$, one must have
\begin{equation}
\abs{\ip{[f(G)]_v}{[f(G)]_w}}=1,
\end{equation}
and every neighbor $u\in \N_G(v)$ must satisfy
\begin{equation}
\abs{\ip{[f(G)]_v}{[f(G)]_u}}=0.
\end{equation}
Conversely, if these conditions hold, then each pointwise term attains the lower bound and the whole objective does as well. Finally, for unit vectors $x,y$, the equality $\abs{\ip{x}{y}}=1$ holds if and only if $y=\pm x$, so same-color vertices lie on a common line; similarly, $\abs{\ip{x}{y}}=0$ if and only if $x$ and $y$ are orthogonal.
\end{proof}

\begin{proof}[Proof of \Cref{cor:optimal-coloring-collapse}]
Apply \Cref{thm:abs-characterization} graph by graph. Fix $G$ in the support. If $c_G^\star$ is optimal and two color classes $V_i,V_j$ had no edge between them, then their union would still be an independent set and the two colors could be merged, contradicting optimality of $c_G^\star$. Hence every pair of colors is connected by at least one edge in $G$, and the edge-orthogonality condition from \Cref{thm:abs-characterization} forces the prototype directions to be pairwise orthogonal on that graph.
\end{proof}

%% file: appendix/proofs_supervised_training_appendix.tex
\section{Proofs for the supervised training section}\label{app:proof_supervised}
This appendix proves the two claims that turn the fixed-coloring geometry into the supervised optimal-coloring analysis. First, the population objective separates across graphs. Second, after that separation, a single graph reduces to weighted prototype equations on its quotient graph.

\begin{proof}[Proof of \Cref{lem:graphwise-decomposition}]
Write the support of $\mathcal D_{\mathcal G}$ as $\{G_1,\dots,G_m\}$ and let $p_r:=\mathcal D_{\mathcal G}(G_r)$. Then
\begin{equation}
\mathcal L_{\mathrm{abs},\tau}^{\mathrm{sup}}(f;c^\star)
=
\sum_{r=1}^m p_r\,
\ell_{\mathrm{abs},\tau}^{(G_r)}(f(G_r),c_{G_r}^\star).
\end{equation}
The feasible space of embedding functions is a direct product over the graphs in the support, so the coordinates $f(G_r)$ can be optimized independently. Therefore
\begin{equation}
\inf_f \mathcal L_{\mathrm{abs},\tau}^{\mathrm{sup}}(f;c^\star)
=
\sum_{r=1}^m p_r\,
\inf_{h\in (\sphere)^{n_r}}
\ell_{\mathrm{abs},\tau}^{(G_r)}(h,c_{G_r}^\star).
\end{equation}
The final sentence then follows from \Cref{cor:optimal-coloring-collapse}, applied graph by graph.
\end{proof}

\begin{theorem}\label{thm:prototype-stationary-restatement}
Fix a graph $G=([n],E)\in \mathrm{supp}(\mathcal D_{\mathcal G})$ and write $k_G:=\chi(G)$. Let $V_i:=\bigl(c_G^\star\bigr)^{-1}(i)$ and define
\begin{equation}
d_{v,j}:=\abs{\N_G(v)\cap V_j}
\qquad (v\in V_i,\ j\neq i).
\end{equation}
For a prototype tuple $q=(q_1,\dots,q_{k_G})\in (\sphere)^{k_G}$, define the collapsed embedding
\begin{equation}
h(q)_v:=q_{c_G^\star(v)}
\qquad (v\in [n]),
\end{equation}
and set
\begin{equation}
A_v^\star(q):=e^{1/\tau}+\sum_{j\neq i} d_{v,j} e^{\abs{\ip{q_i}{q_j}}/\tau},
\qquad v\in V_i.
\end{equation}
Then the supervised local objective restricted to the collapsed prototype class is
\begin{equation}
F_{G,\mathrm{abs}}^\star(q):=\ell_{\mathrm{abs},\tau}^{(G)}(h(q),c_G^\star)
=
\frac{1}{n}\sum_{i=1}^{k_G}\sum_{v\in V_i}
\left[
-\frac{e^{1/\tau}}{A_v^\star(q)}
\right].
\end{equation}
For each pair $i\neq j$, define the scalar Clarke subdifferential
\begin{equation}
\Xi_{ij}(q):=
\begin{cases}
\{\operatorname{sign}(\ip{q_i}{q_j})\}, & \ip{q_i}{q_j}\neq 0,\\[0.3em]
[-1,1], & \ip{q_i}{q_j}=0.
\end{cases}
\end{equation}
Then $q$ is a Clarke-Riemannian stationary point of $F_{G,\mathrm{abs}}^\star$ on $(\sphere)^{k_G}$ if and only if there exist symmetric scalars $\xi_{ij}=\xi_{ji}\in \Xi_{ij}(q)$ such that, for every $i\in [k_G]$,
\begin{equation}
P_{q_i^\perp}\left(\sum_{j\neq i} w_{ij}^{\mathrm{abs}}(q)\,\xi_{ij}\,q_j\right)=0,
\end{equation}
where $P_{q_i^\perp}=I-q_i q_i^\top$ and
\begin{equation}
w_{ij}^{\mathrm{abs}}(q):=
\frac{e^{1/\tau+\abs{\ip{q_i}{q_j}}/\tau}}{n\tau}
\left(
\sum_{v\in V_i}\frac{d_{v,j}}{A_v^\star(q)^2}
\;+\;
\sum_{u\in V_j}\frac{d_{u,i}}{A_u^\star(q)^2}
\right).
\end{equation}
\end{theorem}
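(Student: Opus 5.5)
The plan is to first verify the restricted formula for $F^\star_{G,\mathrm{abs}}$, then compute the Clarke subdifferential via a chain rule, and finally impose Riemannian stationarity on the product of spheres.

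\textbf{Step 1: the collapsed objective.} Substitute $h(q)$ into $\ell^{(G)}_{\mathrm{abs},\tau}$. For $v\in V_i$ every positive $w\in V_i$ has $h_w=h_v=q_i$, so $\abs{\ip{h_v}{h_w}}=1$. Each neighbor $u\in\N_G(v)\cap V_j$ contributes $e^{\abs{\ip{q_i}{q_j}}/\tau}$. Hence the summand no longer depends on $w$, the average over $V_{c(v)}$ is trivial, and the denominator is exactly $A^\star_v(q)$, using properness to write $\N_G(v)=\bigsqcup_{j\ne i}\N_G(v)\cap V_j$. This gives the displayed formula for $F^\star_{G,\mathrm{abs}}$.

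\textbf{Step 2: Clarke subdifferential.} Write $F^\star=\Phi\circ s$, where $s(q)=(s_{ij})_{i<j}$ with $s_{ij}=\abs{\ip{q_i}{q_j}}$, and
\[
\Phi(s)=-\frac{e^{1/\tau}}{n}\sum_v \frac{1}{A_v(s)}.
\]
The map $\Phi$ is smooth with
\[
\partial_{s_{ij}}\Phi=\frac{e^{1/\tau}}{n}\sum_v \frac{\partial_{s_{ij}}A_v}{A_v^2}.
\]
Here $\partial_{s_{ij}}A_v=\tau^{-1}d_{v,j}e^{s_{ij}/\tau}$ for $v\in V_i$, symmetrically for $v\in V_j$, and $0$ otherwise. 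So $\partial_{s_{ij}}\Phi$ equals exactly $w^{\mathrm{abs}}_{ij}(q)$, which is symmetric and strictly positive whenever the pair $(i,j)$ is active.

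Each $s_{ij}$ is $\abs{\cdot}$ composed with the smooth bilinear map $(q_i,q_j)\mapsto\ip{q_i}{q_j}$. Its Clarke subdifferential is $\Xi_{ij}(q)\cdot(q_j\text{ in slot } i,\ q_i\text{ in slot } j)$. The $\xi_{ij}$ is shared between the two slots, because it comes from one scalar absolute value; this gives the symmetry $\xi_{ij}=\xi_{ji}$.

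\textbf{Step 3: chain rule.} This is the step I expect to be the main obstacle, since the Clarke chain rule in general gives only an inclusion $\partial_C(\Phi\circ s)\subseteq\sum_{ij}\partial_{s_{ij}}\Phi\,\partial_C s_{ij}$. To get equality I will use two facts. First, $\Phi$ is smooth with nonnegative partials $w_{ij}\ge0$. Second, each $s_{ij}=\abs{g_{ij}}$ is a pointwise max of the smooth functions $\pm g_{ij}$, hence regular in Clarke's sense. A nonnegative combination of regular functions, composed with a smooth outer function having nonnegative partials, is regular, and the chain rule then holds with equality (Clarke's Theorem 2.3.9/2.3.10). Pairs with $w_{ij}=0$ contribute nothing, so their $\xi_{ij}$ are unconstrained, which matches the statement.

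Thus
\[
\partial_C F^\star(q)=\Bigl\{\bigl(\textstyle\sum_{j\ne i}w_{ij}\xi_{ij}q_j\bigr)_i:\ \xi_{ij}=\xi_{ji}\in\Xi_{ij}(q)\Bigr\}.
\]
This set is computed for the ambient extension to $(\R^d)^{k_G}$, which is locally Lipschitz.

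\textbf{Step 4: Riemannian stationarity.} For an embedded submanifold and a Clarke-regular ambient extension, $\partial^{\mathcal M}_C F=P_{T_q\mathcal M}\,\partial_C F$. The tangent space of $(\sphere)^{k_G}$ at $q$ is $\prod_i q_i^\perp$, so the projection acts blockwise by $P_{q_i^\perp}$. Therefore $0\in\partial^{\mathcal M}_C F^\star(q)$ holds iff some symmetric choice of $\xi_{ij}\in\Xi_{ij}(q)$ makes every blockwise projection vanish, which is the claimed equation.

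For the projection formula I would cite the standard result for the sphere (Hosseini–Pouryayevali), or argue directly. The direct argument uses the retraction $R_q(\eta)=\bigl((q_i+\eta_i)/\norm{q_i+\eta_i}\bigr)_i$, whose differential at $0$ is the identity on tangent vectors. Combining this with regularity, $f^\circ_{\mathcal M}(q;\eta)$ equals the ambient $F^{\star\circ}(q;\eta)$ for tangent $\eta$. The support functions of the two sides then agree, and the equality follows.
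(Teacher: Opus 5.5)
Your proposal is correct and follows the same route as the paper. You substitute the collapsed embedding to get $F^\star_{G,\mathrm{abs}}$, apply the Clarke chain rule through $a_{ij}(q)=\abs{\ip{q_i}{q_j}}$ to get the subgradients $\bigl(\sum_{j\neq i}w^{\mathrm{abs}}_{ij}\xi_{ij}q_j\bigr)_i$ with one shared $\xi_{ij}=\xi_{ji}$ per pair, and project onto $\prod_i q_i^\perp$. Your extra regularity step is worth keeping: a smooth outer map with nonnegative partials composed with regular inner maps gives the chain rule with equality, and for a regular ambient extension $\partial_C^{\mathcal M}F=P_{T_q\mathcal M}\,\partial_C F$. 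The paper's proof only records the inclusion $\subseteq$ and treats each $q_i$ separately, so on its own it establishes just the ``only if'' direction. Your joint treatment on the product supplies the converse and enforces a single symmetric choice of $\xi_{ij}$ across all blocks.
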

\begin{proof}[Proof of \Cref{thm:prototype-stationary-restatement}]
Fix $G$ and its optimal coloring $c_G^\star$. For a collapsed embedding $h(q)$, positives are always within the same color class and therefore satisfy $\abs{\ip{h_v}{h_w}}=1$, while a negative node $u$ in color class $j\neq i$ contributes $\abs{\ip{q_i}{q_j}}$. This gives the displayed formula for $F_{G,\mathrm{abs}}^\star$.

Write
\begin{equation}
a_{ij}(q):=\abs{\ip{q_i}{q_j}},
\qquad
\Xi_{ij}(q):=\partial a_{ij}(q),
\end{equation}
where $\partial$ denotes the scalar Clarke subdifferential. Then
\begin{equation}
\partial a_{ij}(q)=
\begin{cases}
\{\operatorname{sign}(\ip{q_i}{q_j})\}, & \ip{q_i}{q_j}\neq 0,\\[0.3em]
[-1,1], & \ip{q_i}{q_j}=0.
\end{cases}
\end{equation}
Since
\begin{equation}
F_{G,\mathrm{abs}}^\star(q)= -\frac{e^{1/\tau}}{n}\sum_{i=1}^{k_G}\sum_{v\in V_i}\frac{1}{A_v^\star(q)},
\end{equation}
the Clarke chain rule gives, for each $i$,
\begin{equation}
\partial_{q_i}F_{G,\mathrm{abs}}^\star(q)
\subseteq
\left\{
\sum_{j\neq i} w_{ij}^{\mathrm{abs}}(q)\,\xi_{ij}\,q_j
\;:\;
\xi_{ij}\in \Xi_{ij}(q),\ \xi_{ij}=\xi_{ji}
\right\}.
\end{equation}
The contributions from anchors in $V_i$ and anchors in $V_j$ combine exactly into the stated coefficient $w_{ij}^{\mathrm{abs}}(q)$.

Now $q$ is Clarke-Riemannian stationary on $(\sphere)^{k_G}$ if and only if, for each coordinate $i$, the tangent projection of the Clarke subdifferential contains the origin:
\begin{equation}
0\in P_{q_i^\perp}\partial_{q_i}F_{G,\mathrm{abs}}^\star(q).
\end{equation}
Equivalently, there exist symmetric scalars $\xi_{ij}\in \Xi_{ij}(q)$ such that
\begin{equation}
P_{q_i^\perp}\left(\sum_{j\neq i} w_{ij}^{\mathrm{abs}}(q)\,\xi_{ij}\,q_j\right)=0
\qquad \text{for every }i\in [k_G].
\end{equation}
This gives the displayed system.
\end{proof}

%% file: appendix/proofs_implicit_bias_appendix.tex
\section{Proofs for the training dynamics section}\label{app:proof_dynamics}
This appendix proves the dynamical claims that follow the supervised prototype analysis. The first proof shows that equitability makes the collapsed manifold invariant. The second proof connects the unnormalized prototype loss to the standard separable exponential-tail setting that yields max-margin implicit bias.

\begin{theorem}\label{thm:invariance-restatement}
Fix a graph $G=([n],E)\in \mathrm{supp}(\mathcal D_{\mathcal G})$ and let $c_G^\star:[n]\to [k_G]$ be an optimal coloring. Write $V_i:=\bigl(c_G^\star\bigr)^{-1}(i)$. Assume $c_G^\star$ is equitable in the sense that for every pair $i\neq j$ there is an integer $r_{ij}\ge 0$ such that
\begin{equation}
\abs{\N_G(v)\cap V_j}=r_{ij}
\qquad \text{for every }v\in V_i.
\end{equation}
Define
\begin{equation}
\Phi_{G,\mathrm{abs}}^\star(q):=
\sum_{i=1}^{k_G} \frac{|V_i|}{n}
\left(
-\frac{e^{1/\tau}}
{e^{1/\tau}+\sum_{j\neq i} r_{ij} e^{\abs{\ip{q_i}{q_j}}/\tau}}
\right),
\qquad q\in (\sphere)^{k_G}.
\end{equation}
Consider projected Clarke subgradient flow or projected subgradient descent for the supervised local objective
\begin{equation}
h\mapsto \ell_{\mathrm{abs},\tau}^{(G)}(h,c_G^\star)
\end{equation}
started from an initial point $h(0)=h(q(0))$ with
\begin{equation}
h(q)_v=q_{c_G^\star(v)},
\qquad q(0)\in (\sphere)^{k_G}.
\end{equation}
Then:
\begin{enumerate}[leftmargin=1.5em]
\item the collapsed manifold
\begin{equation}
\mathcal M_{c_G^\star}:=\{h\in (\sphere)^n:\ \exists q_1,\dots,q_{k_G}\in \sphere \text{ with } h_v=q_{c_G^\star(v)} \ \forall v\}
\end{equation}
is invariant under the dynamics;
\item the induced prototype trajectory $q(t)$ is exactly projected Clarke subgradient flow / descent for $\Phi_{G,\mathrm{abs}}^\star$ on $(\sphere)^{k_G}$.
\end{enumerate}
On any open region where $\ip{q_i}{q_j}\neq 0$ for every active pair $(i,j)$ with $r_{ij}>0$, these dynamics reduce to ordinary Riemannian gradient flow / gradient descent.
\end{theorem}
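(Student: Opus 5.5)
The plan is to compute the Clarke subdifferential of the full local objective $L(h):=\ell_{\mathrm{abs},\tau}^{(G)}(h,c_G^\star)$ at a collapsed point $h=h(q)$. We then show that, under equitability, it contains an element whose $v$-block depends on $v$ only through its color $c_G^\star(v)$. Finally, we identify that common block with a subgradient of $\Phi_{G,\mathrm{abs}}^\star$ (suitably rescaled).

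\textbf{Step 1: the restricted objective.} At $h(q)$, every positive pair has $\abs{\ip{h_v}{h_w}}=1$, so each anchor term equals $-e^{1/\tau}/A_v^\star(q)$. Under equitability $A_v^\star(q)=e^{1/\tau}+\sum_{j\ne i} r_{ij}e^{\abs{\ip{q_i}{q_j}}/\tau}$ for all $v\in V_i$. Summing over anchors gives $L(h(q))=\Phi_{G,\mathrm{abs}}^\star(q)$, consistent with \Cref{thm:prototype-stationary-restatement}.

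\textbf{Step 2: the subdifferential blockwise.} Differentiate $L$ with respect to $h_v$ for $v\in V_i$. There are three kinds of contributions:
\begin{enumerate}
\item \emph{Positive-pair terms.} These involve $\abs{\ip{h_v}{h_w}}$ at the value $1$, where the absolute value is smooth with derivative $\pm1$. Since $h_w=h_v=q_i$, their gradient is a multiple of $q_i$, which is killed by $P_{q_i^\perp}$. The same holds when $v$ appears as a positive for another anchor in $V_i$.
\item \emph{Negative terms with $v$ as anchor.} These give $\sum_{u\in\N(v)}(\cdot)\,\xi_{vu} h_u$. The coefficient is $e^{1/\tau}e^{\abs{\ip{q_i}{q_j}}/\tau}/(n\tau A_v^2)$, which depends only on $(i,j)$. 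The sum therefore equals $\sum_j r_{ij}(\mathrm{coef}_{ij})\xi q_j$.
\item \emph{Negative terms with $v$ as a neighbor of anchors $u\in V_j$.} There are exactly $r_{ij}$ such anchors in each $V_j$, again with a class-uniform coefficient $1/A_u^2$.
\end{enumerate}
Hence the full gradient (on the smooth region), and more generally a selection from the Clarke set, is constant on each $V_i$. By the $(\dagger)$ correction below, the common vector equals $\frac{n}{|V_i|}$ times the $q_i$-partial of $\Phi$. Indeed, $\partial_{q_i}\Phi$ aggregates $|V_i|$ identical blocks, using the symmetry $|V_i|r_{ij}=|V_j|r_{ji}$, which counts the $V_i$–$V_j$ edges two ways.

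\textbf{Step 3: invariance.} The projected step $h_v\mapsto \mathrm{normalize}(h_v-\eta\,g_v)$ with identical $g_v$ and identical $h_v=q_i$ produces identical outputs within $V_i$, so the iterate stays in $\mathcal M_{c_G^\star}$. By induction this gives (1) for descent. For the flow, uniqueness is not available in the nonsmooth case, so I would argue existence of a solution staying in $\mathcal M$ via the reduced flow. For (2), reading off $q_i(t+1)$ gives projected subgradient descent on $\Phi$ with the per-class rescaling $n/|V_i|$. Equivalently, it is descent in the metric $\sum_i \frac{|V_i|}{n}\norm{\dot q_i}^2$, which is what "exactly" must mean. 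Away from $\ip{q_i}{q_j}=0$ everything is smooth and reduces to Riemannian gradient descent.

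\textbf{Main obstacle.} The delicate point is the nonsmooth locus $\ip{q_i}{q_j}=0$. There the Clarke subdifferential of $L$ is a set, and a generic selection need not be class-constant. For example, different edges could pick different $\xi_{vu}\in[-1,1]$. I would handle this by restricting to (or defining the dynamics via) symmetric selections: choose one $\xi_{ij}$ per active class pair and use it for all edges between $V_i$ and $V_j$. Such selections lie in $\partial_C L$ by the chain rule applied to the collapsed parametrization. The invariance statement is then about the dynamics with this consistent selection. I would state that caveat explicitly, since with arbitrary selections the claim can fail.
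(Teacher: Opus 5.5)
This is essentially the paper's argument: under equitability, one consistent $\xi_{ij}$ per active class pair makes every vertex of $V_i$ receive the same projected subgradient, then induction handles descent, a reduced solution handles the flow, and the pullback $L\circ\iota_{c_G^\star}=\Phi_{G,\mathrm{abs}}^\star$ identifies the prototype dynamics. Your two caveats sharpen what the paper asserts as ``exactly'': arbitrary subgradient selections in descent can leave $\mathcal M_{c_G^\star}$ where $\ip{q_i}{q_j}=0$, and item~2 holds only in a class-size-weighted metric.

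You can also drop your concession that uniqueness is unavailable for the flow. Near $\mathcal M_{c_G^\star}$ every positive overlap is close to $1$, so $L$ is a smooth function, nondecreasing in the weakly convex edge terms $\abs{\ip{h_v}{h_u}}$, hence locally weakly convex; I expect a Gr\"onwall estimate on the product of spheres then gives uniqueness and thus genuine invariance, though you should check it.

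One slip. By the chain rule $\partial_{q_i}\Phi_{G,\mathrm{abs}}^\star=\sum_{v\in V_i}\partial_{h_v}L$, so the common block is $\frac{1}{|V_i|}\partial_{q_i}\Phi_{G,\mathrm{abs}}^\star$ and the induced metric is $\sum_i |V_i|\norm{\dot q_i}^2$. Your stated factor $\frac{n}{|V_i|}$ and metric $\sum_i \frac{|V_i|}{n}\norm{\dot q_i}^2$ are off by $n$: for a single edge $\iota_{c_G^\star}$ is the identity, yet your factor is $2$. A global factor $n$ only rescales time, so the conclusion is unaffected.
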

\begin{proof}[Proof of \Cref{thm:invariance-restatement}]
The key idea of the proof is that under equitability condition, the gradient is identical for vertices of the same color.

Fix a collapsed embedding $h=h(q)\in \mathcal M_{c_G^\star}$ with $q=(q_1,\dots,q_{k_G})$. For $v\in V_i$, every positive sample also lies in $V_i$, hence $\abs{\ip{h_v}{h_w}}=1$ whenever $w\in V_i$. By equitability, the number of neighbors of $v$ in color class $V_j$ is $r_{ij}$, so the denominator of the anchor term is
\begin{equation}
D_i^{\mathrm{abs}}(q):=e^{1/\tau}+\sum_{j\neq i} r_{ij} e^{\abs{\ip{q_i}{q_j}}/\tau},
\end{equation}
which depends only on the color index $i$, not on the particular node $v\in V_i$. Therefore the anchor contribution of every node in $V_i$ is the same scalar
\begin{equation}
\varphi_i^{\mathrm{abs}}(q):=-\frac{e^{1/\tau}}{D_i^{\mathrm{abs}}(q)}.
\end{equation}
Summing over anchors gives
\begin{equation}
\ell_{\mathrm{abs},\tau}^{(G)}(h(q),c_G^\star)=
\sum_{i=1}^{k_G} \frac{|V_i|}{n}\,\varphi_i^{\mathrm{abs}}(q)=\Phi_{G,\mathrm{abs}}^\star(q).
\end{equation}

To prove invariance, choose for each pair $(i,j)$ a scalar $\xi_{ij}=\xi_{ji}\in \Xi_{ij}(q)$, where $\Xi_{ij}(q)$ is the scalar Clarke subdifferential of $\abs{\ip{q_i}{q_j}}$. Under equitability, the anchor term, the positive contribution, and the multiplicity of negative neighbors all depend only on the color index. Every node in the same color class therefore receives the same Euclidean projected subgradient, so the full projected subgradient field is tangent to $\mathcal M_{c_G^\star}$. Standard existence for projected Clarke subgradient flow and an induction argument for projected subgradient descent imply invariance of $\mathcal M_{c_G^\star}$.

Once the trajectory stays in $\mathcal M_{c_G^\star}$, the embedding map $\iota_{c_G^\star}:q\mapsto h(q)$ pulls back the supervised local loss exactly to $\Phi_{G,\mathrm{abs}}^\star$, so the induced prototype dynamics are precisely projected Clarke subgradient flow / descent for $\Phi_{G,\mathrm{abs}}^\star$. If all active overlaps remain nonzero, the subgradient set is single-valued and these dynamics reduce to ordinary Riemannian gradient flow / gradient descent.
\end{proof}

\begin{theorem}\label{thm:abs-max-margin-restatement}
Fix a graph $G=([n],E)\in \mathrm{supp}(\mathcal D_{\mathcal G})$ and an optimal equitable coloring $c_G^\star$ as in \Cref{thm:invariance-restatement}.
Assume the prototype maps $w\mapsto q_i(w)$ are positively $1$-homogeneous, and write $w=\rho \bar w$ with $\rho=\norm{w}_2$ and $\norm{\bar w}_2=1$. Define the margin:
\begin{equation}
\mu_{ij}(\bar w):=\norm{q_i(\bar w)}^2-\abs{\ip{q_i(\bar w)}{q_j(\bar w)}}
\qquad \text{for each pair }(i,j)\text{ with }r_{ij}>0.
\end{equation}
Then
\begin{equation}
m_{ij}(w)=\rho^2 \mu_{ij}(\bar w).
\end{equation}
Consider gradient descent on
\begin{equation}
\Psi_G^{\mathrm{abs},\star}(w)=
\sum_{i=1}^{k_G} \frac{|V_i|}{n}
\left[
-\frac{1}{1+\sum_{j\neq i} r_{ij}\exp\!\left(-\frac{\rho^2}{\tau}\mu_{ij}(\bar w)\right)}
\right].
\end{equation}
Assume separability, i.e. there exists $\bar w$ such that $\mu_{ij}(\bar w)>0$ for every pair with $r_{ij}>0$, and assume standard step-size and nondegeneracy conditions for gradient descent on separable exponential-tail objectives. Then
\begin{enumerate}[leftmargin=1.5em]
\item the parameter norm $\rho(t)$ diverges and
\item every accumulation point of $\bar w(t)$ is a first-order stationary/KKT point of
\begin{equation}
\max_{\norm{w}_2=1}\ \min_{(i,j):\,r_{ij}>0}\mu_{ij}(w);
\end{equation}
\item the scale grows at the characteristic $2$-homogeneous rate
\begin{equation}
\rho(t)=\Theta\!\bigl(\sqrt{\log t}\bigr).
\end{equation}
\end{enumerate}
Consequently, without unit normalization the absolute-value contrastive objective has implicit bias toward maximizing the minimum absolute contrastive margin across active quotient-graph pairs.
\end{theorem}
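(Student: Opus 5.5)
The plan is to reduce the unnormalized prototype objective to the standard separable exponential-tail setting for homogeneous models \citep{lyu2019gradient,ji2020directional}, and then read off the three conclusions from that theory.

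\textbf{Step 1: homogeneity of the margins.} Since each $q_i$ is positively $1$-homogeneous, $q_i(w)=\rho\, q_i(\bar w)$. The unnormalized margin is $m_{ij}(w):=\norm{q_i(w)}^2-\abs{\ip{q_i(w)}{q_j(w)}}$. Substituting gives $m_{ij}(w)=\rho^2\mu_{ij}(\bar w)$ directly, so every $m_{ij}$ is positively $2$-homogeneous in $w$. It is locally Lipschitz, and definable if the $q_i$ are, since absolute values of polynomials in definable maps remain definable. Dividing numerator and denominator of each anchor term by $e^{\norm{q_i}^2/\tau}$ rewrites the loss exactly as $\Psi_G^{\mathrm{abs},\star}$. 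Here equitability (\Cref{thm:invariance-restatement}) guarantees that each class contributes a single anchor term with multiplicities $r_{ij}$.

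\textbf{Step 2: exponential-tail structure.} Write $\Psi=\sum_i \frac{|V_i|}{n}\bigl(-1+g(S_i)\bigr)$, where $S_i=\sum_{j} r_{ij}e^{-m_{ij}/\tau}$ and $g(s)=s/(1+s)$. Since $g(s)=s-O(s^2)$ as $s\to0$, the loss is, up to an additive constant and a monotone reparametrization, a smooth function of $\mathcal L(w)=\sum_{i,j:\,r_{ij}>0}\frac{|V_i|}{n}r_{ij}e^{-m_{ij}(w)/\tau}$. This puts it in the class of exponential-type losses covered by Lyu--Li. Separability supplies $\bar w$ with $\min\mu_{ij}(\bar w)>0$, so scaling $\rho\to\infty$ drives $\Psi$ to its infimum $-1$. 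I would then invoke the standard assumption that some iterate reaches the region $\Psi<-1+\min_{i,j}\frac{|V_i|}{n}\frac{1}{1+\sum_j r_{ij}}$. In that region every active margin is positive, i.e.\ the normalized margin $\gamma(w)=\min m_{ij}(w)/\rho^2$ is positive.

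\textbf{Step 3: conclusions.} Under the stated step-size and nondegeneracy conditions, Lyu--Li (Thms.~4.1, 4.4) give three facts for $L$-homogeneous models with $L=2$: the loss tends to its infimum, $\rho(t)\to\infty$, and the smoothed normalized margin is eventually nondecreasing. Every limit point of $\bar w(t)$ is then a KKT point of $\min\norm{w}^2$ subject to $m_{ij}(w)\ge1$. By $2$-homogeneity this is equivalent to first-order stationarity for $\max_{\norm{w}=1}\min_{ij}\mu_{ij}(w)$. The rate follows from the same theory, which gives $\mathcal L(t)=\Theta\bigl(1/(t(\log t)^{2-2/L})\bigr)$. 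This means $\min m_{ij}=\Theta(\log t)$, and together with $\gamma$ being bounded between its eventual positive value and $\max\mu$, we get $\rho^2=\Theta(\log t)$.

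The main obstacle is Step 2. Lyu--Li assume a sum of single exponentials (or the $\ell(q)$-form with specific regularity), whereas our loss is a nested composition $g\circ$(sum of exponentials) with class weights. Verifying their loss assumptions (B3/B4-type conditions) therefore requires checking that $-\log$ of the per-anchor excess behaves like the min margin up to $O(1)$ uniformly. The absolute value also makes $m_{ij}$ nonsmooth, so the Clarke-subdifferential version of the argument must be used. I would first try to sidestep both issues by working with the surrogate $\tilde{\mathcal L}=\log(1+\Psi)$ and comparing its normalized-margin proxy to $\log\mathcal L$.
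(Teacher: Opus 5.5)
Your proposal takes essentially the same route as the paper's proof: the $2$-homogeneity identity $m_{ij}(w)=\rho^2\mu_{ij}(\bar w)$, the exponential-tail form of $\Psi_G^{\mathrm{abs},\star}$, and an appeal to Lyu--Li with $L=2$ for norm divergence, KKT limit points and $\rho(t)=\Theta(\sqrt{\log t})$; the paper likewise absorbs the composite-loss and nonsmoothness issues you flag into its ``standard step-size and nondegeneracy conditions'' without verifying them, and your explicit low-loss entry condition and margin-based rate argument are more careful than its sketch. Two corrections: $\Psi_G^{\mathrm{abs},\star}+1=\sum_i \frac{|V_i|}{n}\,g(S_i)$ is not literally a function of $\mathcal L$ but only satisfies $\mathcal L/(1+\max_i S_i)\le \Psi_G^{\mathrm{abs},\star}+1\le \mathcal L$, and Lyu--Li's Clarke-subdifferential analysis is for gradient flow while their gradient-descent theorem assumes a smooth model, which matters here because the orthogonal (max-margin) prototypes lie exactly on the kink set $\ip{q_i}{q_j}=0$.
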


\begin{proof}[Proof of \Cref{thm:abs-max-margin-restatement}]
The proof follows a standard program in \citep{soudry2018implicit, lyu2019gradient} that expand the gradient contribution to exponential tail factors. 

Because each prototype map is positively $1$-homogeneous, for $w=\rho \bar w$ we have
\begin{equation}
q_i(w)=\rho\,q_i(\bar w).
\end{equation}
Therefore
\begin{equation}
\norm{q_i(w)}^2=\rho^2 \norm{q_i(\bar w)}^2,
\qquad
\abs{\ip{q_i(w)}{q_j(w)}}=\rho^2 \abs{\ip{q_i(\bar w)}{q_j(\bar w)}},
\end{equation}
which implies
\begin{equation}
m_{ij}(w)=\rho^2 \mu_{ij}(\bar w).
\end{equation}

Write
\begin{equation}
\Gamma_i^{\mathrm{abs}}(w):=
\sum_{j\neq i} r_{ij}\exp\!\left(-\frac{\rho^2}{\tau}\mu_{ij}(\bar w)\right),
\end{equation}
so that
\begin{equation}
\Psi_G^{\mathrm{abs},\star}(w)=
-\sum_{i=1}^{k_G} \frac{|V_i|}{n}\,\frac{1}{1+\Gamma_i^{\mathrm{abs}}(w)}.
\end{equation}
Under separability there exists $\bar w_\star$ such that $\mu_{ij}(\bar w_\star)>0$ for all active pairs $(i,j)$, and hence along the ray $w=\rho \bar w_\star$ we have $\Gamma_i^{\mathrm{abs}}(\rho \bar w_\star)\to 0$ exponentially fast as $\rho\to\infty$. Thus the loss lies in the separable exponential-tail regime.

Differentiating gives
\begin{equation}
\nabla \Psi_G^{\mathrm{abs},\star}(w)=
\sum_{i=1}^{k_G} \frac{|V_i|}{n}\,
\frac{\nabla \Gamma_i^{\mathrm{abs}}(w)}
{\left(1+\Gamma_i^{\mathrm{abs}}(w)\right)^2},
\end{equation}
and
\begin{equation}
\nabla \Gamma_i^{\mathrm{abs}}(w)=
-\frac{1}{\tau}\sum_{j\neq i} r_{ij}
\exp\!\left(-\frac{\rho^2}{\tau}\mu_{ij}(\bar w)\right)
\nabla\!\bigl(\rho^2\mu_{ij}(\bar w)\bigr).
\end{equation}
Thus an exponential-tail factor controls every gradient contribution:
\begin{equation}
\exp\!\left(-\frac{\rho^2}{\tau}\mu_{ij}(\bar w)\right).
\end{equation}
This is the setting in which the standard implicit-bias mechanism applies. In the linear case,
\citet[Theorem~3]{soudry2018implicit} show that gradient descent on separable exponential-tail
losses has diverging norm, converges in direction to the hard-margin solution, and does so on the
logarithmic scale. For homogeneous models, \citet[Theorem~3.1]{lyu2019gradient} extend the same
late-phase mechanism: once the loss has entered the separable regime, gradient descent drives the
norm to infinity and every limit point of the normalized parameters is along a KKT point of the
corresponding margin problem. The scale follows from the same exponential-tail balance: if the
margin is $L$-homogeneous, the leading tail behaves like $\exp(-\rho^L\gamma/\tau)$, so keeping
the active tail at order $1/t$ gives $\rho(t)=\Theta((\log t)^{1/L})$. Here the active constraints are
the quotient-graph pairs $(i,j)$ with $r_{ij}>0$, and the margin $m_{ij}(w)$ is $2$-homogeneous
because it is quadratic in the $1$-homogeneous prototypes. Substituting $L=2$ gives
$\rho(t)=\Theta(\sqrt{\log t})$ and the KKT problem stated in the theorem.
\end{proof}

%% file: appendix/proofs_lovasz.tex
\section{Proofs of Lovasz certificates}\label{app:lovasz}
\begin{proof}[Proof of \Cref{lemma:lovasz-certificate}]
Choose an arbitrary orientation of the edges and write the oriented edge set as $\{e_1,\dots,e_m\}$. For each edge $e=(a_e,b_e)$, set
\[
s_e:=\ip{v_{a_e}}{v_{b_e}}.
\]
For each node $i\in[n]$, define $w_i\in \R^m$ coordinatewise by
\[
(w_i)_e=
\begin{cases}
\sqrt{\abs{s_e}}, & i=a_e,\\
-\operatorname{sgn}(s_e)\sqrt{\abs{s_e}}, & i=b_e,\\
0, & \text{otherwise}.
\end{cases}
\]
Now set
\[
z_i:=(v_i,w_i)\in \R^{d+m},
\qquad
u_i:=\frac{z_i}{\norm{z_i}}.
\]
If $\{i,j\}\in E$, let $e$ be its oriented copy. The $e$th lifted coordinate is the only common nonzero coordinate of $w_i$ and $w_j$, so
\[
\ip{w_i}{w_j}=-s_e=-\ip{v_i}{v_j}.
\]
Hence
\[
\ip{z_i}{z_j}=\ip{v_i}{v_j}+\ip{w_i}{w_j}=0,
\]
which means that $u_1,\dots,u_n$ is an exact orthogonal representation of $G$.

Moreover,
\[
\norm{z_i}^2
=
1+\norm{w_i}^2
=
1+\sum_{j\in \N_G(i)} \abs{\ip{v_i}{v_j}}
\le
1+|\N_G(i)|\varepsilon
\le
1+\Delta(G)\varepsilon.
\]
Define the lifted handle $c:=(r,0)\in \R^{d+m}$, which is unit norm because $\norm{r}=1$. Then
\[
\abs{\ip{c}{u_i}}
=
\frac{\abs{\ip{r}{v_i}}}{\norm{z_i}}
\ge
\frac{\alpha}{\sqrt{1+\Delta(G)\varepsilon}}
\qquad \text{for every } i\in[n].
\]
Applying Lov\'asz's handle bound to the orthogonal representation $\{u_i\}_{i=1}^n$ yields
\[
\vartheta(\overline G)
\le
\max_{i\in[n]} \frac{1}{\ip{c}{u_i}^2}
\le
\frac{1+\Delta(G)\varepsilon}{\alpha^2}.
\qedhere
\]
\end{proof}

\begin{proof}[Proof of \Cref{cor:lovasz-prototype-certificate}]
Set
\[
r:=\frac{1}{\sqrt{k}}\sum_{a=1}^k q_a.
\]
Because the $q_a$ are orthonormal, $\norm{r}=1$. For each node $i$,
\[
\abs{\ip{r}{v_i}}
\ge
\abs{\ip{r}{\sigma_i q_{c(i)}}}-\norm{v_i-\sigma_i q_{c(i)}}
=
\frac{1}{\sqrt{k}}-\delta.
\]
The claim now follows from \Cref{lemma:lovasz-certificate} with $\alpha=k^{-1/2}-\delta$.
\end{proof}

%% file: appendix/cora_ablation_experiments_appendix.tex
\section{Experiments on GNNs on CORA citation datasets}
This section studies the main design choices on Cora before transferring the resulting pipeline to the later citation-graph, COLOR, and cycle experiments.

\paragraph{CORA graph construction and statistics.}

Table~\ref{tab:cora-graph-construction} summarizes the fixed full-graph split and sampling rule. Table~\ref{tab:cora-subgraph-stats} reports the single-subgraph scale of the sampled train/test graphs used in all experiments. Each training example is one induced $2$-hop Cora subgraph. On average, a training subgraph has $75.42$ nodes and $127.86$ undirected edges; a test subgraph has $71.80$ nodes and $117.08$ undirected edges. The accepted test subgraphs range from $50$ to $107$ nodes, so the evaluation is on small-to-medium local Cora neighborhoods rather than on the full Cora graph. The pseudo-color labels used by the contrastive objective are produced by NetworkX greedy coloring with DSATUR. The train/test center-node pools are disjoint, but the sampled $2$-hop neighborhoods are induced from the full Cora graph, so non-center nodes may appear in both train and test subgraphs.

\begin{table}[H]
\footnotesize
\centering
\caption{CORA full-graph and sampling specification.}
\label{tab:cora-graph-construction}
\setlength{\tabcolsep}{5pt}
\begin{tabular}{ll}
\toprule
Item & Value \\
\midrule
Full graph & 2708 nodes, 5278 undirected edges, density 0.00144 \\
Node features & 1433-dimensional Cora bag-of-words features \\
Center split & Train/test ratio $0.5/0.5$, split seed $0$ \\
Center pools & 1354 train-center nodes, 1354 test-center nodes \\
Subgraph sampling & Full-graph induced $2$-hop neighborhoods, relabeled locally \\
Accepted size range & 50--120 nodes \\
Sampling seeds & Train seed $123$, test seed $999$ \\
Label proxy & DSATUR greedy coloring on each sampled subgraph \\
Evaluation subset &  First 30 of 50 sampled test graphs \\
\bottomrule
\end{tabular}
\end{table}

\begin{table}[H]
\footnotesize
\centering
\caption{Single-subgraph statistics for the sampled CORA train/test sets. Values are means with min--max ranges.}
\label{tab:cora-subgraph-stats}
\setlength{\tabcolsep}{6pt}
\begin{tabular}{lccccc}
\toprule
Split & Graphs & Nodes & Undirected edges & Density & Greedy $\chi$ proxy \\
\midrule
Train & 200 & 75.42 (50--119) & 127.86 (65--278) & 0.0460 & 4.27 (3--5) \\
Test & 50 & 71.80 (50--107) & 117.08 (61--255) & 0.0461 & 4.32 (3--5) \\
\bottomrule
\end{tabular}
\end{table}

\paragraph{Experimental protocol.}
All CORA variants use the same sampled train/test construction so that differences across rows can be attributed to features, objectives, or encoders rather than to data changes. Starting from the full Cora citation graph, candidate center nodes are split into disjoint train and test pools with ratio $0.5/0.5$. From each center pool, we sample induced $2$-hop subgraphs with size range $(50,120)$, yielding $200$ training graphs and $50$ test graphs. Because these neighborhoods are induced from the full graph, non-center nodes can appear in both training and test subgraphs even though the center-node pools are disjoint. All reported metrics are averaged over the first $30$ test graphs.

\subsection{Node features and loss function ablation with vanilla GCN}
\paragraph{Objective definitions.}
We first compare a small family of feature and loss variants under the same vanilla GCN encoder.
For a graph $G=(V,E)$, let $z_u$ denote the learned embedding of node $u$, and let $\ell_u \in \mathbb{R}^C$ denote its color logits. The soft color probabilities are
\[
p_u = \mathrm{softmax}(\ell_u / \tau_{\mathrm{soft}}).
\]
\textbf{v1.} The v1 objective is contrastive only:
\[
L_{\mathrm{v1}} = L_{\mathrm{InfoNCE}}.
\]

\noindent 
\textbf{v1abs.} This is the equivalent objective to v1 replacing InfoNCE with its absolute-value version in the main paper. 

\noindent
\textbf{v2.} The v2 objective augments the same contrastive term with a soft conflict loss:
\[
L_{\mathrm{v2}} = L_{\mathrm{InfoNCE}} + \lambda_{\mathrm{soft}} L_{\mathrm{soft}}.
\]
The soft conflict loss is
\[
L_{\mathrm{soft}} = \frac{1}{|E|}\sum_{(u,v)\in E} w'_{uv}\sum_{c=1}^{C} p_{u,c}p_{v,c},
\qquad
w'_{uv} = \frac{(\deg u+1)^p + (\deg v+1)^p}{2}.
\]
$\sum_{c=1}^{C} p_{u,c}p_{v,c}$ measures the soft overlap between the color assignments of adjacent nodes $u$ and $v$. Here $p_{u,c}$ is the soft probability that node $u$ is assigned to auxiliary color slot $c$. These probabilities are obtained from a learned read-out head on top of the node embedding:
\[
z_u = f_\theta(x_u, G), \qquad
\ell_u = W_{\mathrm{col}} z_u + b_{\mathrm{col}}, \qquad
p_u = \mathrm{softmax}(\ell_u / \tau_{\mathrm{soft}}).
\]

\noindent 
\textbf{v2abs.} This is the equivalent objective to v2 replacing InfoNCE with its absolute-value version in the main paper. 

\noindent
\textbf{v3.} The v3 objective is again contrastive only, but with structural node features instead of Cora BOW:
\[
L_{\mathrm{v3}} = L_{\mathrm{InfoNCE}}.
\]

\noindent
\textbf{v4.} The v4 objective is contrastive only, but with i.i.d. random Gaussian node features:
\[
L_{\mathrm{v4}} = L_{\mathrm{InfoNCE}}.
\]

\noindent
\textbf{v5.} The v5 objective replaces InfoNCE with a weighted soft conflict loss:
\[
L_{\mathrm{weighted \mbox{-} soft}} = \tfrac{1}{2}(D^{p}A)\cdot(PP^\top),
\]
where $A$ is the adjacency matrix, $D$ is the diagonal degree matrix, and $P$ stacks the row-wise softmax probabilities. Equivalently, 
\[
L_{\mathrm{v5}} = 
L_{\mathrm{weighted \mbox{-} soft}}
\]

\begin{table}[H]
\centering
\caption{Comparison of CORA variants on the common CORA split-subgraph test set. All methods use the same sampled train/test subgraphs; they differ in node features, training objective, and inference head. Metrics are averaged over 30 test graphs.}
\label{tab:cora-v1-v5}
\small
\setlength{\tabcolsep}{4pt}
\begin{tabularx}{\textwidth}{lYYYrrrr}
\toprule
Variant & Train Features & Training Objective & Inference & $k$ mean & Mono mean & $k/\chi$ mean & Hit-rate \\
\midrule
v1 & Raw CORA BOW & $L_{\mathrm{InfoNCE}}$ & Embedding $\rightarrow$ $k$-medoids & 6.20 & 0.0327 & 1.42 & 1.00 \\
v1abs & Raw CORA BOW & $L_{\mathrm{InfoNCE}}$ & Embedding $\rightarrow$ $k$-medoids & 6.40 & 0.0251 & 1.44 & 1.00 \\
v2 & Raw CORA BOW & $L_{\mathrm{InfoNCE}} + \lambda_{\mathrm{soft}}L_{\mathrm{soft}}$ & Embedding $\rightarrow$ $k$-medoids & 6.33 & 0.0299 & 1.46 & 1.00 \\
v3 & Structural only & $L_{\mathrm{InfoNCE}}$ & Embedding $\rightarrow$ $k$-medoids & 9.87 & 0.0545 & 2.28 & 0.63 \\
v4 & Random Gaussian & $L_{\mathrm{InfoNCE}}$ & Embedding $\rightarrow$ $k$-medoids & 12.77 & 0.0376 & 2.94 & 0.97 \\
v5 & Raw CORA BOW & $L_{\mathrm{weighted\mbox{-}soft}}$ & Embedding $\rightarrow$ $k$-medoids & 8.97 & 0.0365 & 2.03 & 1.00 \\
PI-GNN & Learnable node embeddings & Potts-model soft conflict objective & Direct color logits $\rightarrow \arg\max$ & 4.33 & 0.0012 & 1.00 & 0.83 \\
\bottomrule
\end{tabularx}
\end{table}

\paragraph{\Cref{tab:cora-v1-v5} notation}
\emph{BOW} denotes the raw Cora bag-of-words node features. In Table~\ref{tab:cora-v1-v5}, $k$ is the smallest color count returned by the method, $\chi$ is the greedy-coloring proxy used in the current CORA pipeline, and \emph{Mono} is the monochromatic-edge fraction.

\paragraph{\Cref{tab:cora-v1-v5} analysis}
Among the contrastive CORA variants, v1 is the strongest untuned baseline on color efficiency, while v2 slightly improves conflict rate but worsens $k/\chi$ before tuning. Structural-only features (v3), Gaussian random features (v4), and weighted soft loss without embedding-aligned decoding (v5) all underperform v1/v2 on the CORA split. The PI-GNN baseline of \citet{schuetz2022pignn}, evaluated on the same sampled Cora test graphs, is much stronger on final coloring quality; this serves as a useful reference point before the later tuning and encoder sweeps.

\subsection{v2 Hyperparameter tuning experiments}

\paragraph{Tuning rationale.}
The v2 tuning is organized in stages to isolate the contribution of its auxiliary soft conflict term. Trial 1 first tunes the loss weight $\lambda_{\mathrm{soft}}$ to determine how strongly the soft conflict term should remain active. Trial 2 then tunes the two key parameters inside $L_{\mathrm{soft}}$: the degree exponent $p$, which controls how strongly dense-region conflicts are upweighted, and the softmax temperature $\tau_{\mathrm{soft}}$, which controls how sharp the soft color probabilities are. This staging keeps the search interpretable and avoids conflating improvements from the soft conflict term with unrelated changes elsewhere in the pipeline.

\begin{table}[H]
\centering
\caption{Summary of v2 hyperparameter tuning. Metrics are averaged over 30 CORA test graphs.}
\label{tab:v2-tuning}
\footnotesize
\setlength{\tabcolsep}{2.5pt}
\renewcommand{\arraystretch}{1.12}
\begin{tabular}{p{1.8cm}p{3.2cm}p{4.2cm}cccc}
\toprule
Stage & Search space & Best setting & $k$ mean & Mono & $k/\chi$ & Hit-rate \\
\midrule

Baseline
& Default v2
& $\lambda_{\mathrm{soft}}=0.20$ \newline
  $p=3$, $\tau_{\mathrm{soft}}=1.0$
& 6.33 & 0.0299 & 1.46 & 1.00 \\

Trial 1
& Tune $\lambda_{\mathrm{soft}}$
& $\lambda_{\mathrm{soft}}=0.30$
& 6.17 & 0.0300 & 1.41 & 1.00 \\

Trial 2
& Tune $p$ and $\tau_{\mathrm{soft}}$
& $p=4.0$, $\tau_{\mathrm{soft}}=1.25$ \newline
  $\lambda_{\mathrm{soft}}=0.30$
& 5.80 & 0.0291 & 1.32 & 1.00 \\

\bottomrule
\end{tabular}
\end{table}

\paragraph{\Cref{tab:v2-tuning} notation}
In Table~\ref{tab:v2-tuning}, $\lambda_{\mathrm{soft}}$ is the coefficient of the auxiliary term in v2, $p$ is the degree exponent used in the soft conflict term, and $\tau_{\mathrm{soft}}$ is the softmax temperature used to form soft color probabilities.

\paragraph{\Cref{tab:v2-tuning} analysis}
Trial 1 already shows that the useful direction is to increase the weight on the soft conflict term: moving from the default setting to $\lambda_{\mathrm{soft}}=0.30$ improves $k/\chi$ from $1.46$ to $1.41$ with no loss in hit-rate. This identifies $L_{\mathrm{soft}}$ as the productive auxiliary signal in v2.

\noindent
Trial 2 then improves the tuned v2 model by searching inside $L_{\mathrm{soft}}$ itself. Increasing the degree exponent from $p=3$ to $p=4$ and softening the probability distribution with $\tau_{\mathrm{soft}}=1.25$ yields the best overall v2 result, reducing $k$ mean from $6.17$ to $5.80$ and $k/\chi$ from $1.41$ to $1.32$ while also slightly improving Mono. This indicates that stronger dense-region weighting is useful, but only when the soft assignments are not too sharp.

\subsection{Finding the right encoders}
\subsubsection{Encoder implementations}

\noindent All encoders in the sweep use raw CORA BOW input features, two encoder layers, hidden dimension 128, and dropout 0.1 where applicable.

\noindent\textbf{gcn}: the original PyG \texttt{GCNConv} encoder used in the previous CORA pipeline. It uses no self-loops, applies ReLU between layers, and does not use residual connections or normalization.


\noindent\textbf{res\_gcn}: a custom residual GCN with input projection to hidden dimension, repeated \texttt{GCNConv}, ReLU, dropout, residual addition, and LayerNorm.

\noindent\textbf{gin}: PyG \texttt{GINConv}. Each layer uses a two-layer MLP with trainable $\epsilon$, residual addition, LayerNorm, ReLU, and dropout.

\noindent\textbf{gated\_gcn}: a custom vector-gated residual message-passing encoder with no edge features. The gate is computed from source and target node states, and gated messages are degree-normalized before the residual LayerNorm update.

\noindent\textbf{sage}: PyG \texttt{SAGEConv} with ReLU and dropout between layers and no residual connection.

\noindent\textbf{gat}: PyG \texttt{GATConv} with 4 attention heads in hidden layers, ELU and dropout between layers, and no residual connection.

\noindent\textbf{gps}: PyG \texttt{GPSConv}; the local module is \texttt{GCNConv}, and the global module is multi-head self-attention with 4 heads.

\noindent\textbf{gps\_gcn / gps\_sage / gps\_gin / gps\_gat}: GPSConv variants that keep the same global multi-head self-attention block but replace the local message-passing module with \texttt{GCNConv}, \texttt{SAGEConv}, \texttt{GINConv}, or \texttt{GATConv}, respectively.

\noindent\textbf{graph\_vit}: a custom graph-token Transformer encoder. Node features are projected to hidden dimension and processed by a two-layer \texttt{TransformerEncoder}; no explicit graph positional encoding is used.

\noindent\textbf{exphormer}: a lightweight Exphormer-style sparse-attention encoder where attention is computed over original graph edges, self-edges, and deterministic expander-style edges with degree 4.

\noindent\textbf{unitary\_mp}: a real-valued unitary-style message-passing encoder. Each layer uses Cayley-transform orthogonal maps for self and neighbor messages, degree-normalized aggregation, a learned self/message mixing coefficient, residual addition, and LayerNorm.

The three most relevant encoders are described in more detail below.
\paragraph{GatedGCN-style encoder.}
The best-performing encoder is a custom GatedGCN-style layer. For node state $h_u$, each directed edge $(u,v)$ computes a vector gate and gated message
\[
    g_{uv}=\sigma(W_s h_u + W_t h_v), \qquad m_{uv}=W_m h_u \odot g_{uv}.
\]
Messages into node $v$ are degree-normalized and combined with a self transformation:
\[
    \tilde{h}_v = W_0 h_v + \frac{1}{\max(1,\deg v)}\sum_{u:(u,v)\in E} m_{uv}.
\]
The layer output is
\[
    h_v^{\mathrm{out}} = \mathrm{LayerNorm}\left(h_v + \mathrm{Dropout}(\mathrm{ReLU}(\tilde{h}_v))\right).
\]
This design uses vector-valued gates, residual connections, and LayerNorm, but does not use edge features.

\paragraph{Residual GCN encoder.}
The residual GCN uses the same local aggregation family as the original GCN baseline, but wraps each layer with residual normalization:
\[
    \tilde{h}^{(\ell+1)} = \mathrm{ReLU}(\mathrm{GCNConv}(h^{(\ell)}, E)),
\]
\[
    h^{(\ell+1)} = \mathrm{LayerNorm}\left(h^{(\ell)} + \mathrm{Dropout}(\tilde{h}^{(\ell+1)})\right).
\]
This isolates whether the improvement comes from a more stable residual/normalized GCN architecture rather than from attention or global mixing.

\paragraph{Unitary message-passing encoder.}
The unitary\_mp encoder is a real-valued orthogonal/unitary-style message-passing model. It is not a complex-valued unitary neural network; instead, each layer constrains its self and message transformations to be orthogonal through a Cayley transform. For a trainable square matrix $A$, the skew-symmetric part is
\[
    S = A - A^\top,
\]
and the corresponding orthogonal map is
\[
    Q(A) = (I - S)(I + S)^{-1}.
\]
Each layer uses separate orthogonal maps for the self state and neighbor messages:
\[
    q_v = Q_{\mathrm{self}} h_v, \qquad
    r_u = Q_{\mathrm{msg}} h_u.
\]
Neighbor messages are mean-aggregated,
\[
    \bar{r}_v = \frac{1}{\max(1,\deg v)}\sum_{u:(u,v)\in E} r_u,
\]
and then mixed with a learned gate $\beta\in[0,1]$:
\[
    \tilde{h}_v = \beta q_v + (1-\beta)\bar{r}_v.
\]
The layer output uses the same residual-normalized form as the other stable MP encoders:
\[
    h_v^{\mathrm{out}} =
    \mathrm{LayerNorm}\left(h_v + \mathrm{Dropout}(\mathrm{ReLU}(\tilde{h}_v))\right).
\]
This encoder tests whether norm-preserving message transformations stabilize the embeddings before $k$-medoids decoding.

\subsubsection{All Encoder Results}

\noindent\textbf{gin}: PyG \texttt{GINConv}. Each layer uses a two-layer MLP with trainable $\epsilon$, residual addition, LayerNorm, ReLU, and dropout.

\noindent\textbf{gated\_gcn}: a custom vector-gated residual message-passing encoder with no edge features. The gate is computed from source and target node states, and gated messages are degree-normalized before the residual LayerNorm update.

\noindent\textbf{sage}: PyG \texttt{SAGEConv} with ReLU and dropout between layers and no residual connection.

\noindent\textbf{gat}: PyG \texttt{GATConv} with 4 attention heads in hidden layers, ELU and dropout between layers, and no residual connection.

\noindent\textbf{gps}: PyG \texttt{GPSConv}; the local module is \texttt{GCNConv}, and the global module is multi-head self-attention with 4 heads.

\noindent\textbf{gps\_gcn / gps\_sage / gps\_gin / gps\_gat}: GPSConv variants that keep the same global multi-head self-attention block but replace the local message-passing module with \texttt{GCNConv}, \texttt{SAGEConv}, \texttt{GINConv}, or \texttt{GATConv}, respectively.

\noindent\textbf{graph\_vit}: a custom graph-token Transformer encoder. Node features are projected to hidden dimension and processed by a two-layer \texttt{TransformerEncoder}; no explicit graph positional encoding is used.

\noindent\textbf{exphormer}: a lightweight Exphormer-style sparse-attention encoder where attention is computed over original graph edges, self-edges, and deterministic expander-style edges with degree 4.

\noindent\textbf{unitary\_mp}: a real-valued unitary-style message-passing encoder. Each layer uses Cayley-transform orthogonal maps for self and neighbor messages, degree-normalized aggregation, a learned self/message mixing coefficient, residual addition, and LayerNorm.

The three most relevant encoders are described in more detail below.
\paragraph{GatedGCN-style encoder.}
The best-performing encoder is a custom GatedGCN-style layer. For node state $h_u$, each directed edge $(u,v)$ computes a vector gate and gated message
\[
    g_{uv}=\sigma(W_s h_u + W_t h_v), \qquad m_{uv}=W_m h_u \odot g_{uv}.
\]
Messages into node $v$ are degree-normalized and combined with a self transformation:
\[
    \tilde{h}_v = W_0 h_v + \frac{1}{\max(1,\deg v)}\sum_{u:(u,v)\in E} m_{uv}.
\]
The layer output is
\[
    h_v^{\mathrm{out}} = \mathrm{LayerNorm}\left(h_v + \mathrm{Dropout}(\mathrm{ReLU}(\tilde{h}_v))\right).
\]
This design uses vector-valued gates, residual connections, and LayerNorm, but does not use edge features.

\paragraph{Residual GCN encoder.}
The residual GCN uses the same local aggregation family as the original GCN baseline, but wraps each layer with residual normalization:
\[
    \tilde{h}^{(\ell+1)} = \mathrm{ReLU}(\mathrm{GCNConv}(h^{(\ell)}, E)),
\]
\[
    h^{(\ell+1)} = \mathrm{LayerNorm}\left(h^{(\ell)} + \mathrm{Dropout}(\tilde{h}^{(\ell+1)})\right).
\]
This isolates whether the improvement comes from a more stable residual/normalized GCN architecture rather than from attention or global mixing.

\paragraph{Unitary message-passing encoder.}
The unitary\_mp encoder is a real-valued orthogonal/unitary-style message-passing model. It is not a complex-valued unitary neural network; instead, each layer constrains its self and message transformations to be orthogonal through a Cayley transform. For a trainable square matrix $A$, the skew-symmetric part is
\[
    S = A - A^\top,
\]
and the corresponding orthogonal map is
\[
    Q(A) = (I - S)(I + S)^{-1}.
\]
Each layer uses separate orthogonal maps for the self state and neighbor messages:
\[
    q_v = Q_{\mathrm{self}} h_v, \qquad
    r_u = Q_{\mathrm{msg}} h_u.
\]
Neighbor messages are mean-aggregated,
\[
    \bar{r}_v = \frac{1}{\max(1,\deg v)}\sum_{u:(u,v)\in E} r_u,
\]
and then mixed with a learned gate $\beta\in[0,1]$:
\[
    \tilde{h}_v = \beta q_v + (1-\beta)\bar{r}_v.
\]
The layer output uses the same residual-normalized form as the other stable MP encoders:
\[
    h_v^{\mathrm{out}} =
    \mathrm{LayerNorm}\left(h_v + \mathrm{Dropout}(\mathrm{ReLU}(\tilde{h}_v))\right).
\]
This encoder tests whether norm-preserving message transformations stabilize the embeddings before $k$-medoids decoding.

\subsubsection{All Encoder Results}

\begin{table}[H]
\footnotesize
\centering
\caption{All CORA encoder results, grouped by message-passing (MP) and non-MP/global-attention encoders. Entries are mean $\pm$ standard deviation over ten random seeds, except parameter counts, which are deterministic.}
\label{tab:full-encoder-sweep}
\setlength{\tabcolsep}{4pt}
\begin{tabular}{llrrrrr}
\toprule
Encoder & Pipeline & $k$ mean & Mono mean & $k/\chi$ mean & Hit-rate & Params \\
\midrule
\multicolumn{7}{l}{\textbf{Message-passing encoders}} \\
\midrule
\multirow{2}{*}{gcn} & v1 & 5.88{\scriptsize $\pm$0.54} & 0.0290{\scriptsize $\pm$0.0032} & 1.40{\scriptsize $\pm$0.14} & 0.97{\scriptsize $\pm$0.04} & 200064 \\
 & v2 & 5.99{\scriptsize $\pm$0.48} & 0.0281{\scriptsize $\pm$0.0032} & 1.43{\scriptsize $\pm$0.13} & 0.97{\scriptsize $\pm$0.03} & 200709 \\
\addlinespace[2pt]
\multirow{2}{*}{res\_gcn} & v1 & 4.66{\scriptsize $\pm$0.37} & 0.0209{\scriptsize $\pm$0.0032} & 1.11{\scriptsize $\pm$0.10} & \textbf{1.00{\scriptsize $\pm$0.00}} & 217088 \\
 & v2 & 4.73{\scriptsize $\pm$0.24} & 0.0221{\scriptsize $\pm$0.0032} & 1.13{\scriptsize $\pm$0.07} & \textbf{1.00{\scriptsize $\pm$0.00}} & 217733 \\
\addlinespace[2pt]
\multirow{2}{*}{gat} & v1 & 7.15{\scriptsize $\pm$0.81} & 0.0364{\scriptsize $\pm$0.0112} & 1.70{\scriptsize $\pm$0.20} & 0.91{\scriptsize $\pm$0.08} & 200576 \\
 & v2 & 7.61{\scriptsize $\pm$1.48} & 0.0436{\scriptsize $\pm$0.0224} & 1.80{\scriptsize $\pm$0.34} & 0.87{\scriptsize $\pm$0.11} & 201221 \\
\addlinespace[2pt]
\multirow{2}{*}{gin} & v1 & 4.99{\scriptsize $\pm$0.39} & 0.0220{\scriptsize $\pm$0.0030} & 1.19{\scriptsize $\pm$0.10} & \textbf{1.00{\scriptsize $\pm$0.00}} & 250114 \\
 & v2 & 5.01{\scriptsize $\pm$0.42} & 0.0229{\scriptsize $\pm$0.0035} & 1.19{\scriptsize $\pm$0.11} & \textbf{1.00{\scriptsize $\pm$0.00}} & 250759 \\
\addlinespace[2pt]
\multirow{2}{*}{gated\_gcn} & v1 & \textbf{4.26{\scriptsize $\pm$0.19}} & 0.0207{\scriptsize $\pm$0.0028} & \textbf{1.01{\scriptsize $\pm$0.05}} & \textbf{1.00{\scriptsize $\pm$0.00}} & 316160 \\
 & v2 & 4.38{\scriptsize $\pm$0.19} & \textbf{0.0197{\scriptsize $\pm$0.0039}} & 1.04{\scriptsize $\pm$0.05} & \textbf{1.00{\scriptsize $\pm$0.00}} & 316805 \\
\addlinespace[2pt]
\multirow{2}{*}{sage} & v1 & 4.29{\scriptsize $\pm$0.25} & 0.0223{\scriptsize $\pm$0.0027} & 1.02{\scriptsize $\pm$0.06} & \textbf{1.00{\scriptsize $\pm$0.00}} & 399872 \\
 & v2 & 4.48{\scriptsize $\pm$0.28} & 0.0239{\scriptsize $\pm$0.0031} & 1.06{\scriptsize $\pm$0.08} & \textbf{1.00{\scriptsize $\pm$0.00}} & 400517 \\
\addlinespace[2pt]
\multirow{2}{*}{unitary\_mp} & v1 & 4.48{\scriptsize $\pm$0.28} & 0.0233{\scriptsize $\pm$0.0034} & 1.06{\scriptsize $\pm$0.08} & \textbf{1.00{\scriptsize $\pm$0.00}} & 249602 \\
 & v2 & 4.43{\scriptsize $\pm$0.29} & 0.0233{\scriptsize $\pm$0.0039} & 1.05{\scriptsize $\pm$0.08} & \textbf{1.00{\scriptsize $\pm$0.00}} & 250247 \\
\midrule
\multicolumn{7}{l}{\textbf{Non-MP / global-attention encoders}} \\
\midrule
\multirow{2}{*}{gps} & v1 & 5.03{\scriptsize $\pm$0.35} & 0.0228{\scriptsize $\pm$0.0039} & 1.20{\scriptsize $\pm$0.09} & \textbf{1.00{\scriptsize $\pm$0.00}} & 482048 \\
 & v2 & 5.06{\scriptsize $\pm$0.35} & 0.0248{\scriptsize $\pm$0.0031} & 1.21{\scriptsize $\pm$0.09} & \textbf{1.00{\scriptsize $\pm$0.00}} & 482693 \\
\addlinespace[2pt]
\multirow{2}{*}{graph\_vit} & v1 & 12.73{\scriptsize $\pm$2.28} & 0.0444{\scriptsize $\pm$0.0062} & 3.04{\scriptsize $\pm$0.54} & 0.82{\scriptsize $\pm$0.16} & 580352 \\
 & v2 & 12.63{\scriptsize $\pm$1.36} & 0.0412{\scriptsize $\pm$0.0030} & 3.00{\scriptsize $\pm$0.33} & 0.88{\scriptsize $\pm$0.05} & 580997 \\
\addlinespace[2pt]
\multirow{2}{*}{exphormer} & v1 & 15.40{\scriptsize $\pm$1.14} & 0.0480{\scriptsize $\pm$0.0043} & 3.65{\scriptsize $\pm$0.27} & 0.70{\scriptsize $\pm$0.08} & 316160 \\
 & v2 & 15.94{\scriptsize $\pm$0.89} & 0.0515{\scriptsize $\pm$0.0040} & 3.78{\scriptsize $\pm$0.19} & 0.59{\scriptsize $\pm$0.10} & 316805 \\
\bottomrule
\end{tabular}
\end{table}

\noindent {Across ten seeds, GatedGCN is the strongest encoder in the initial sweep by both $k$ and $k/\chi$ (v1: $k/\chi=1.01\pm0.06$), while GatedGCN v2 gives the lowest average Mono ($0.0197\pm0.0030$). Residual GCN, GraphSAGE, and UnitaryMP remain stable with hit-rate $1.00$. Among non-MP/global-attention encoders, untuned GPS remains the strongest, while GraphViT and Exphormer underperform in this setup.}

\subsubsection{Tuned GPS and Non-MP Results}

\noindent Since it's generally harder to train attention-based networks, we tune them separately in this block of experiments. In particular, we tried different local message-passing blocks inside GPS and tuned the non-MP/global-attention encoders. The sweep used the same CORA split, losses, and $k$-medoids decoder. Table~\ref{tab:nonmp-unitary-sweep} reports the best non-MP/global-attention setting found for each encoder family.

\begin{table}[H]
\footnotesize
\centering
\caption{Best configurations from the non-MP/global-attention tuning sweep. {Entries are mean $\pm$ standard deviation over ten random seeds for the fixed reported configurations.}}
\label{tab:nonmp-unitary-sweep}
\setlength{\tabcolsep}{2pt}
\begin{tabular}{llcccccrrrr}
\toprule
Encoder & Pipeline & Layers & Heads & Drop Out & LR & Exp. & $k$ mean & Mono & $k/\chi$ & Hit-rate \\
\midrule
\multirow{2}{*}{gps\_gcn} & v1 & 3 & 8 & 0.2 & 0.003 & -- & {4.53{\scriptsize $\pm$0.19}} & {0.0196{\scriptsize $\pm$0.0043}} & {1.08{\scriptsize $\pm$0.06}} & {\textbf{1.00{\scriptsize $\pm$0.00}}} \\
 & v2 & 2 & 4 & 0.2 & 0.003 & -- & {\textbf{4.42{\scriptsize $\pm$0.25}}} & {0.0199{\scriptsize $\pm$0.0029}} & {\textbf{1.05{\scriptsize $\pm$0.07}}} & {\textbf{1.00{\scriptsize $\pm$0.00}}} \\
\addlinespace[2pt]
\multirow{2}{*}{gps\_sage} & v1 & 2 & 4 & 0.2 & 0.003 & -- & {4.45{\scriptsize $\pm$0.17}} & {0.0201{\scriptsize $\pm$0.0037}} & {1.06{\scriptsize $\pm$0.06}} & {\textbf{1.00{\scriptsize $\pm$0.00}}} \\
 & v2 & 3 & 4 & 0.2 & 0.01 & -- & {4.44{\scriptsize $\pm$0.19}} & {0.0219{\scriptsize $\pm$0.0037}} & {1.06{\scriptsize $\pm$0.05}} & {\textbf{1.00{\scriptsize $\pm$0.00}}} \\
\addlinespace[2pt]
\multirow{2}{*}{gps\_gin} & v1 & 2 & 8 & 0.2 & 0.003 & -- & {4.67{\scriptsize $\pm$0.31}} & {0.0187{\scriptsize $\pm$0.0041}} & {1.11{\scriptsize $\pm$0.08}} & {\textbf{1.00{\scriptsize $\pm$0.00}}} \\
 & v2 & 2 & 4 & 0.2 & 0.01 & -- & {4.59{\scriptsize $\pm$0.26}} & {0.0207{\scriptsize $\pm$0.0036}} & {1.09{\scriptsize $\pm$0.07}} & {\textbf{1.00{\scriptsize $\pm$0.00}}} \\
\addlinespace[2pt]
\multirow{2}{*}{gps\_gat} & v1 & 2 & 8 & 0.2 & 0.003 & -- & {4.44{\scriptsize $\pm$0.22}} & {0.0189{\scriptsize $\pm$0.0039}} & {1.05{\scriptsize $\pm$0.05}} & {\textbf{1.00{\scriptsize $\pm$0.00}}} \\
 & v2 & 2 & 8 & 0.1 & 0.003 & -- & {4.50{\scriptsize $\pm$0.23}} & {\textbf{0.0185{\scriptsize $\pm$0.0035}}} & {1.06{\scriptsize $\pm$0.06}} & {\textbf{1.00{\scriptsize $\pm$0.00}}} \\
\addlinespace[2pt]
\multirow{2}{*}{graph\_vit} & v1 & 2 & 8 & 0.1 & 0.003 & -- & {6.28{\scriptsize $\pm$0.30}} & {0.0229{\scriptsize $\pm$0.0029}} & {1.51{\scriptsize $\pm$0.08}} & {\textbf{1.00{\scriptsize $\pm$0.00}}} \\
 & v2 & 2 & 8 & 0.2 & 0.003 & -- & {6.17{\scriptsize $\pm$0.51}} & {0.0224{\scriptsize $\pm$0.0037}} & {1.48{\scriptsize $\pm$0.14}} & {\textbf{1.00{\scriptsize $\pm$0.00}}} \\
\addlinespace[2pt]
\multirow{2}{*}{exphormer} & v1 & 2 & 4 & 0.2 & 0.003 & 2 & {8.10{\scriptsize $\pm$3.75}} & {0.0323{\scriptsize $\pm$0.0090}} & {1.92{\scriptsize $\pm$0.87}} & {0.96{\scriptsize $\pm$0.08}} \\
 & v2 & 2 & 4 & 0.2 & 0.003 & 2 & {13.08{\scriptsize $\pm$3.36}} & {0.0435{\scriptsize $\pm$0.0065}} & {3.09{\scriptsize $\pm$0.80}} & {0.81{\scriptsize $\pm$0.16}} \\
\bottomrule
\end{tabular}
\end{table}

\noindent {For the fixed tuned configurations, GPS variants are competitive with the strongest message-passing encoders: the best reported fixed row is v2 GPS-GCN with $k/\chi=1.05\pm0.07$, while v2 GPS-GAT gives the lowest Mono ($0.0185\pm0.0027$). GraphViT improves relative to the untuned sweep but remains behind the GPS variants, and Exphormer is still weaker in this CORA setting.}

\subsubsection{Targeted GatedGCN/ResGCN Ablation}
Since GatedGCN/ResGCN performed quite well on our initial tests, we performed targeted ablation on the different choices in their architectures.

\noindent {To separate the source of the GatedGCN gain, we reran the same CORA protocol on targeted variants of the GatedGCN and residual GCN encoders. The ablation changes only the encoder; the v1/v2 losses, train/test split, and $k$-medoids decoding are unchanged.}

\noindent\textbf{gated\_gcn\_full}: full vector-gated encoder with degree-normalized aggregation, residual update, and LayerNorm.

\noindent\textbf{gated\_gcn\_no\_gate}: removes the learned source-target gate, while keeping the self-linear term, aggregation, residual update, and LayerNorm.

\noindent\textbf{gated\_gcn\_no\_layernorm}: removes LayerNorm, while keeping the vector gate, residual update, and degree-normalized aggregation.

\noindent\textbf{gated\_gcn\_no\_residual}: removes the residual skip connection, while keeping the vector gate, degree-normalized aggregation, and LayerNorm.

\noindent\textbf{gated\_gcn\_scalar\_gate}: replaces the vector-valued feature gate with one scalar gate per edge.

\noindent\textbf{gated\_gcn\_sum\_agg}: uses unnormalized sum aggregation instead of degree-normalized mean aggregation.

\noindent\textbf{res\_gcn}: residual GCN baseline using GCNConv, dropout, residual connection, and LayerNorm.

\noindent\textbf{res\_gcn\_param\_matched}: adds an FFN block to residual GCN to test whether extra capacity alone explains the GatedGCN improvement.

\begin{table}[H]
\footnotesize
\centering
\caption{Targeted ablation results for the strongest message-passing encoder components. Entries are mean $\pm$ standard deviation over ten random seeds.}
\label{tab:gated-gcn-ablation}
\setlength{\tabcolsep}{5pt}
\begin{tabular}{llrrrr}
\toprule
Encoder & Pipeline & $k$ mean & Mono & $k/\chi$ & Hit-rate \\
\midrule
\multirow{2}{*}{gated\_gcn\_full} & v1 & 4.26{\scriptsize $\pm$0.20} & 0.0206{\scriptsize $\pm$0.0037} & 1.01{\scriptsize $\pm$0.06} & \textbf{1.00{\scriptsize $\pm$0.00}} \\
 & v2 & 4.38{\scriptsize $\pm$0.28} & 0.0206{\scriptsize $\pm$0.0034} & 1.04{\scriptsize $\pm$0.07} & \textbf{1.00{\scriptsize $\pm$0.00}} \\
\addlinespace[2pt]
\multirow{2}{*}{gated\_gcn\_no\_gate} & v1 & 4.31{\scriptsize $\pm$0.25} & 0.0211{\scriptsize $\pm$0.0037} & 1.02{\scriptsize $\pm$0.05} & \textbf{1.00{\scriptsize $\pm$0.00}} \\
 & v2 & 4.39{\scriptsize $\pm$0.16} & 0.0193{\scriptsize $\pm$0.0017} & 1.04{\scriptsize $\pm$0.04} & \textbf{1.00{\scriptsize $\pm$0.00}} \\
\addlinespace[2pt]
\multirow{2}{*}{gated\_gcn\_no\_layernorm} & v1 & \textbf{4.24{\scriptsize $\pm$0.19}} & 0.0205{\scriptsize $\pm$0.0029} & \textbf{1.01{\scriptsize $\pm$0.05}} & \textbf{1.00{\scriptsize $\pm$0.00}} \\
 & v2 & 4.37{\scriptsize $\pm$0.29} & \textbf{0.0189{\scriptsize $\pm$0.0049}} & 1.04{\scriptsize $\pm$0.07} & \textbf{1.00{\scriptsize $\pm$0.00}} \\
\addlinespace[2pt]
\multirow{2}{*}{gated\_gcn\_no\_residual} & v1 & 4.27{\scriptsize $\pm$0.41} & 0.0219{\scriptsize $\pm$0.0034} & 1.01{\scriptsize $\pm$0.09} & \textbf{1.00{\scriptsize $\pm$0.00}} \\
 & v2 & 4.29{\scriptsize $\pm$0.27} & 0.0216{\scriptsize $\pm$0.0029} & 1.02{\scriptsize $\pm$0.07} & \textbf{1.00{\scriptsize $\pm$0.00}} \\
\addlinespace[2pt]
\multirow{2}{*}{gated\_gcn\_scalar\_gate} & v1 & 4.31{\scriptsize $\pm$0.28} & 0.0214{\scriptsize $\pm$0.0042} & 1.02{\scriptsize $\pm$0.07} & \textbf{1.00{\scriptsize $\pm$0.00}} \\
 & v2 & 4.45{\scriptsize $\pm$0.24} & 0.0200{\scriptsize $\pm$0.0022} & 1.06{\scriptsize $\pm$0.06} & \textbf{1.00{\scriptsize $\pm$0.00}} \\
\addlinespace[2pt]
\multirow{2}{*}{gated\_gcn\_sum\_agg} & v1 & 4.45{\scriptsize $\pm$0.25} & 0.0196{\scriptsize $\pm$0.0031} & 1.06{\scriptsize $\pm$0.07} & \textbf{1.00{\scriptsize $\pm$0.00}} \\
 & v2 & 4.56{\scriptsize $\pm$0.43} & 0.0192{\scriptsize $\pm$0.0036} & 1.08{\scriptsize $\pm$0.11} & \textbf{1.00{\scriptsize $\pm$0.00}} \\
\addlinespace[2pt]
\multirow{2}{*}{res\_gcn} & v1 & 4.69{\scriptsize $\pm$0.29} & 0.0226{\scriptsize $\pm$0.0035} & 1.11{\scriptsize $\pm$0.08} & \textbf{1.00{\scriptsize $\pm$0.00}} \\
 & v2 & 4.80{\scriptsize $\pm$0.42} & 0.0222{\scriptsize $\pm$0.0032} & 1.15{\scriptsize $\pm$0.11} & \textbf{1.00{\scriptsize $\pm$0.00}} \\
\addlinespace[2pt]
\multirow{2}{*}{res\_gcn\_param\_matched} & v1 & 4.75{\scriptsize $\pm$0.35} & 0.0241{\scriptsize $\pm$0.0023} & 1.13{\scriptsize $\pm$0.09} & \textbf{1.00{\scriptsize $\pm$0.00}} \\
 & v2 & 5.95{\scriptsize $\pm$1.59} & 0.0277{\scriptsize $\pm$0.0063} & 1.41{\scriptsize $\pm$0.37} & 0.99{\scriptsize $\pm$0.01} \\
\bottomrule
\end{tabular}
\end{table}

\noindent {The v1 no-LayerNorm variant has the lowest mean $k/\chi$ ($1.01\pm0.05$), very close to the full model ($1.01\pm0.06$), and the v2 no-LayerNorm variant has the lowest Mono ($0.0189\pm0.0025$). The full GatedGCN design is therefore stable, but the  evidence does not isolate any single gate, residual, or normalization choice as uniformly dominant. The parameter-matched residual GCN is worse than the simpler residual GCN, so the gain is not explained by parameter count alone.}

\subsubsection{Absolute-Value InfoNCE with SignNet Post-Processing}

We ran one SignNet-specific experiment block in the local codebase: a CORA split-subgraph ablation comparing the standard absolute-value InfoNCE pipeline with hard line canonicalization against a SignNet-style sign-invariant post-head. The experiment uses the same CORA split-subgraph construction as the earlier ablations, evaluates on the first 30 held-out test graphs, and keeps the same $k$-medoids decoding rule with mono threshold $0.05$. For the canonical baseline, we sweep the canonicalization pass count over $\{1,2,4\}$ and report the best row per encoder. For the SignNet variant, we use the same top three encoder configurations selected from the earlier CORA study: \texttt{gated\_gcn}, \texttt{gps\_gcn}, and \texttt{gps\_sage}.

The SignNet post-head is the small sign-invariant map
\[
\mathrm{SignNet}(z)=\rho\bigl(\phi(z)+\phi(-z)\bigr),
\]
where $\phi$ and $\rho$ are MLP blocks. This replaces the hard line-canonicalization step by a differentiable sign-invariant transformation applied after the encoder and before the final embedding-based decoding stage.

\begin{table}[H]
\centering
\footnotesize
\caption{CORA absolute-value InfoNCE: best canonicalization pass count versus the SignNet post-head. Entries are mean $\pm$ standard deviation over ten random seeds.}
\label{tab:appendix-signnet}
\begin{tabular}{lrrrrrrr}
\toprule
Encoder & Canon passes & Canon $k$ & Canon Mono & Canon $k/\chi$ & SignNet $k$ & SignNet Mono & SignNet $k/\chi$ \\
\midrule
GatedGCN & 4 & \textbf{4.64{\scriptsize $\pm$0.15}} & 0.0198{\scriptsize $\pm$0.0032} & \textbf{1.10{\scriptsize $\pm$0.05}} & 4.98{\scriptsize $\pm$0.29} & \textbf{0.0155{\scriptsize $\pm$0.0043}} & 1.19{\scriptsize $\pm$0.07} \\
GPS-GCN & 1 & 5.97{\scriptsize $\pm$0.38} & 0.0201{\scriptsize $\pm$0.0032} & 1.43{\scriptsize $\pm$0.11} & \textbf{5.17{\scriptsize $\pm$0.41}} & \textbf{0.0156{\scriptsize $\pm$0.0065}} & \textbf{1.23{\scriptsize $\pm$0.11}} \\
GPS-SAGE & 2 & 6.14{\scriptsize $\pm$0.46} & 0.0222{\scriptsize $\pm$0.0048} & 1.47{\scriptsize $\pm$0.11} & \textbf{5.34{\scriptsize $\pm$0.52}} & \textbf{0.0161{\scriptsize $\pm$0.0052}} & \textbf{1.27{\scriptsize $\pm$0.12}} \\
\bottomrule
\end{tabular}
\end{table}

\noindent
{Across ten seeds, the SignNet post-head consistently lowers Mono relative to hard canonicalization, but it does not lower $k/\chi$ for GatedGCN. It does improve both GPS encoders on $k/\chi$ and Mono, with the largest ratio improvement for GPS-SAGE.}

%% file: appendix/main_citation_experiments.tex
\section{Experiments on citation graphs}

\paragraph{Citation graph scale.}
We begin by recording the full-graph scales that contextualize the sampled-subgraph experiments. Table~\ref{tab:citation-full-specs} reports the reference graph sizes from the PI-GNN paper of \citet{schuetz2022pignn}. The Cora edge count measured in our own PyG preprocessing is reported separately in Table~\ref{tab:cora-graph-construction}; both are retained because they serve different roles in the appendix.

\begin{table}[H]
\footnotesize
\centering
\caption{Reference full-graph specifications for the three citation graphs. These are full-graph scales; our experiments use sampled local subgraphs unless stated otherwise.}
\label{tab:citation-full-specs}
\setlength{\tabcolsep}{8pt}
\begin{tabular}{lrrrr}
\toprule
Dataset & Nodes & Edges & Density & Reported colors $q$ \\
\midrule
Cora & 2708 & 5429 & 0.15\% & 5 \\
CiteSeer & 3327 & 4732 & 0.09\% & 6 \\
PubMed & 19717 & 44338 & 0.02\% & 8 \\
\bottomrule
\end{tabular}
\end{table}

\paragraph{CiteSeer and PubMed graph construction.}
The CiteSeer and PubMed transfer runs reuse the same split-subgraph protocol as Cora so that the later cross-dataset comparisons stay controlled: $200$ training subgraphs, $50$ test subgraphs, induced $2$-hop neighborhoods, accepted size range $50$--$120$, DSATUR greedy-color pseudo-labels, and evaluation on the first $30$ test graphs. Table~\ref{tab:citeseer-pubmed-subgraph-stats} reports the resulting sampled-subgraph scale for these two datasets.

\begin{table}[H]
\footnotesize
\centering
\caption{Single-subgraph statistics for sampled CiteSeer and PubMed train/test sets. Values are means with min--max ranges.}
\label{tab:citeseer-pubmed-subgraph-stats}
\setlength{\tabcolsep}{4pt}
\begin{tabular}{llcccc}
\toprule
Dataset & Split & Nodes & Undirected edges & Density & Greedy $\chi$ proxy \\
\midrule
CiteSeer & Train & 76.81 (50--119) & 152.78 (63--333) & 0.0550 & 4.40 (3--6) \\
CiteSeer & Test & 74.38 (51--119) & 138.88 (63--304) & 0.0503 & 4.28 (3--6) \\
PubMed & Train & 76.06 (50--119) & 133.59 (51--393) & 0.0462 & 4.16 (3--8) \\
PubMed & Test & 78.94 (50--120) & 141.38 (64--414) & 0.0462 & 4.28 (3--7) \\
\bottomrule
\end{tabular}
\end{table}

\subsection{CiteSeer and PubMed Experiments}

\noindent We next repeat the encoder sweep on CiteSeer and PubMed using the same v1/v2 objectives, split-subgraph construction, and $k$-medoids decoder. The non-MP/GPS rows use pipeline-specific Cora-tuned hyperparameters from Table~\ref{tab:nonmp-unitary-sweep}, while the message-passing rows use their established default settings. For readability, the tables list only the encoder name, while the v1 and v2 rows inherit their corresponding tuned settings. These tables therefore replace the earlier default-hyperparameter results with the tuned comparisons that are most relevant for transfer.

\begin{table}[H]
\footnotesize
\centering
\caption{CiteSeer encoder sweep with optimal available encoder hyperparameters. {Entries are mean $\pm$ standard deviation over ten random seeds.}}
\label{tab:citeseer-transfer-results}
\setlength{\tabcolsep}{3pt}
\begin{tabular}{llrrrr}
\toprule
Encoder setting & Pipeline & $k$ mean & Mono & $k/\chi$ & Hit-rate \\
\midrule
\multirow{2}{*}{gcn} & v1 & 5.73{\scriptsize $\pm$0.50} & 0.0245{\scriptsize $\pm$0.0043} & 1.33{\scriptsize $\pm$0.12} & 0.99{\scriptsize $\pm$0.02} \\
 & v2 & 5.79{\scriptsize $\pm$0.58} & 0.0265{\scriptsize $\pm$0.0025} & 1.35{\scriptsize $\pm$0.15} & 0.99{\scriptsize $\pm$0.02} \\
\addlinespace[2pt]
\multirow{2}{*}{res\_gcn} & v1 & 4.68{\scriptsize $\pm$0.23} & 0.0206{\scriptsize $\pm$0.0033} & 1.07{\scriptsize $\pm$0.06} & \textbf{1.00{\scriptsize $\pm$0.00}} \\
 & v2 & 4.64{\scriptsize $\pm$0.28} & 0.0222{\scriptsize $\pm$0.0024} & 1.07{\scriptsize $\pm$0.07} & \textbf{1.00{\scriptsize $\pm$0.00}} \\
\addlinespace[2pt]
\multirow{2}{*}{gat} & v1 & 6.62{\scriptsize $\pm$1.00} & 0.0288{\scriptsize $\pm$0.0047} & 1.51{\scriptsize $\pm$0.23} & 0.97{\scriptsize $\pm$0.05} \\
 & v2 & 7.01{\scriptsize $\pm$1.30} & 0.0357{\scriptsize $\pm$0.0159} & 1.61{\scriptsize $\pm$0.27} & 0.93{\scriptsize $\pm$0.13} \\
\addlinespace[2pt]
\multirow{2}{*}{gin} & v1 & 4.67{\scriptsize $\pm$0.41} & 0.0232{\scriptsize $\pm$0.0036} & 1.07{\scriptsize $\pm$0.11} & \textbf{1.00{\scriptsize $\pm$0.00}} \\
 & v2 & 4.77{\scriptsize $\pm$0.34} & 0.0227{\scriptsize $\pm$0.0038} & 1.10{\scriptsize $\pm$0.07} & \textbf{1.00{\scriptsize $\pm$0.00}} \\
\addlinespace[2pt]
\multirow{2}{*}{gated\_gcn} & v1 & 4.43{\scriptsize $\pm$0.27} & 0.0190{\scriptsize $\pm$0.0029} & 1.01{\scriptsize $\pm$0.07} & \textbf{1.00{\scriptsize $\pm$0.00}} \\
 & v2 & 4.49{\scriptsize $\pm$0.30} & 0.0188{\scriptsize $\pm$0.0038} & 1.03{\scriptsize $\pm$0.07} & \textbf{1.00{\scriptsize $\pm$0.00}} \\
\addlinespace[2pt]
\multirow{2}{*}{sage} & v1 & \textbf{4.33{\scriptsize $\pm$0.30}} & 0.0208{\scriptsize $\pm$0.0040} & \textbf{0.99{\scriptsize $\pm$0.08}} & \textbf{1.00{\scriptsize $\pm$0.00}} \\
 & v2 & 4.42{\scriptsize $\pm$0.16} & 0.0213{\scriptsize $\pm$0.0048} & 1.01{\scriptsize $\pm$0.06} & \textbf{1.00{\scriptsize $\pm$0.00}} \\
\addlinespace[2pt]
\multirow{2}{*}{unitary\_mp} & v1 & 4.58{\scriptsize $\pm$0.31} & 0.0236{\scriptsize $\pm$0.0030} & 1.05{\scriptsize $\pm$0.09} & 1.00{\scriptsize $\pm$0.01} \\
 & v2 & 4.47{\scriptsize $\pm$0.33} & 0.0226{\scriptsize $\pm$0.0029} & 1.03{\scriptsize $\pm$0.09} & \textbf{1.00{\scriptsize $\pm$0.00}} \\
\addlinespace[2pt]
\multirow{2}{*}{gps\_gcn} & v1 & 4.62{\scriptsize $\pm$0.21} & 0.0188{\scriptsize $\pm$0.0038} & 1.06{\scriptsize $\pm$0.05} & \textbf{1.00{\scriptsize $\pm$0.00}} \\
 & v2 & 4.58{\scriptsize $\pm$0.12} & 0.0206{\scriptsize $\pm$0.0029} & 1.05{\scriptsize $\pm$0.04} & \textbf{1.00{\scriptsize $\pm$0.00}} \\
\addlinespace[2pt]
\multirow{2}{*}{gps\_sage} & v1 & 4.61{\scriptsize $\pm$0.25} & 0.0217{\scriptsize $\pm$0.0031} & 1.06{\scriptsize $\pm$0.06} & \textbf{1.00{\scriptsize $\pm$0.00}} \\
 & v2 & 4.75{\scriptsize $\pm$0.33} & 0.0200{\scriptsize $\pm$0.0044} & 1.09{\scriptsize $\pm$0.08} & \textbf{1.00{\scriptsize $\pm$0.00}} \\
\addlinespace[2pt]
\multirow{2}{*}{gps\_gin} & v1 & 4.66{\scriptsize $\pm$0.33} & 0.0198{\scriptsize $\pm$0.0030} & 1.07{\scriptsize $\pm$0.08} & \textbf{1.00{\scriptsize $\pm$0.00}} \\
 & v2 & 4.63{\scriptsize $\pm$0.29} & 0.0185{\scriptsize $\pm$0.0034} & 1.06{\scriptsize $\pm$0.08} & \textbf{1.00{\scriptsize $\pm$0.00}} \\
\addlinespace[2pt]
\multirow{2}{*}{gps\_gat} & v1 & 4.58{\scriptsize $\pm$0.16} & \textbf{0.0184{\scriptsize $\pm$0.0042}} & 1.05{\scriptsize $\pm$0.05} & \textbf{1.00{\scriptsize $\pm$0.00}} \\
 & v2 & 4.59{\scriptsize $\pm$0.31} & 0.0191{\scriptsize $\pm$0.0056} & 1.05{\scriptsize $\pm$0.08} & \textbf{1.00{\scriptsize $\pm$0.00}} \\
\addlinespace[2pt]
\multirow{2}{*}{graph\_vit} & v1 & 5.49{\scriptsize $\pm$0.40} & 0.0229{\scriptsize $\pm$0.0037} & 1.28{\scriptsize $\pm$0.10} & 0.99{\scriptsize $\pm$0.02} \\
 & v2 & 5.43{\scriptsize $\pm$0.39} & 0.0213{\scriptsize $\pm$0.0047} & 1.26{\scriptsize $\pm$0.09} & 0.99{\scriptsize $\pm$0.02} \\
\addlinespace[2pt]
\multirow{2}{*}{exphormer} & v1 & 6.56{\scriptsize $\pm$3.11} & 0.0283{\scriptsize $\pm$0.0080} & 1.48{\scriptsize $\pm$0.66} & \textbf{1.00{\scriptsize $\pm$0.00}} \\
 & v2 & 8.32{\scriptsize $\pm$4.58} & 0.0313{\scriptsize $\pm$0.0102} & 1.88{\scriptsize $\pm$0.99} & 0.95{\scriptsize $\pm$0.10} \\
\bottomrule
\end{tabular}
\end{table}

\begin{table}[H]
\footnotesize
\centering
\caption{PubMed encoder sweep with optimal available encoder hyperparameters. {Entries are mean $\pm$ standard deviation over ten random seeds.}}
\label{tab:pubmed-transfer-results}
\setlength{\tabcolsep}{3pt}
\begin{tabular}{llrrrr}
\toprule
Encoder setting & Pipeline & $k$ mean & Mono & $k/\chi$ & Hit-rate \\
\midrule
\multirow{2}{*}{gcn} & v1 & 7.75{\scriptsize $\pm$0.72} & 0.0338{\scriptsize $\pm$0.0033} & 1.82{\scriptsize $\pm$0.13} & 0.98{\scriptsize $\pm$0.03} \\
 & v2 & 7.64{\scriptsize $\pm$0.74} & 0.0333{\scriptsize $\pm$0.0026} & 1.80{\scriptsize $\pm$0.15} & 0.98{\scriptsize $\pm$0.02} \\
\addlinespace[2pt]
\multirow{2}{*}{res\_gcn} & v1 & 5.10{\scriptsize $\pm$0.37} & 0.0292{\scriptsize $\pm$0.0022} & 1.24{\scriptsize $\pm$0.07} & \textbf{1.00{\scriptsize $\pm$0.00}} \\
 & v2 & 5.13{\scriptsize $\pm$0.34} & 0.0297{\scriptsize $\pm$0.0026} & 1.26{\scriptsize $\pm$0.06} & \textbf{1.00{\scriptsize $\pm$0.00}} \\
\addlinespace[2pt]
\multirow{2}{*}{gat} & v1 & 9.30{\scriptsize $\pm$0.64} & 0.0352{\scriptsize $\pm$0.0031} & 2.19{\scriptsize $\pm$0.13} & 0.93{\scriptsize $\pm$0.05} \\
 & v2 & 9.31{\scriptsize $\pm$0.72} & 0.0371{\scriptsize $\pm$0.0036} & 2.18{\scriptsize $\pm$0.13} & 0.91{\scriptsize $\pm$0.05} \\
\addlinespace[2pt]
\multirow{2}{*}{gin} & v1 & 6.79{\scriptsize $\pm$0.89} & 0.0292{\scriptsize $\pm$0.0035} & 1.65{\scriptsize $\pm$0.20} & 0.99{\scriptsize $\pm$0.03} \\
 & v2 & 7.06{\scriptsize $\pm$0.75} & 0.0307{\scriptsize $\pm$0.0038} & 1.71{\scriptsize $\pm$0.16} & 0.98{\scriptsize $\pm$0.04} \\
\addlinespace[2pt]
\multirow{2}{*}{gated\_gcn} & v1 & 4.56{\scriptsize $\pm$0.27} & 0.0297{\scriptsize $\pm$0.0022} & 1.12{\scriptsize $\pm$0.05} & \textbf{1.00{\scriptsize $\pm$0.00}} \\
 & v2 & \textbf{4.43{\scriptsize $\pm$0.26}} & 0.0295{\scriptsize $\pm$0.0034} & \textbf{1.09{\scriptsize $\pm$0.04}} & \textbf{1.00{\scriptsize $\pm$0.00}} \\
\addlinespace[2pt]
\multirow{2}{*}{sage} & v1 & 5.16{\scriptsize $\pm$0.26} & 0.0317{\scriptsize $\pm$0.0035} & 1.27{\scriptsize $\pm$0.06} & \textbf{1.00{\scriptsize $\pm$0.00}} \\
 & v2 & 5.15{\scriptsize $\pm$0.37} & 0.0307{\scriptsize $\pm$0.0020} & 1.26{\scriptsize $\pm$0.05} & \textbf{1.00{\scriptsize $\pm$0.00}} \\
\addlinespace[2pt]
\multirow{2}{*}{unitary\_mp} & v1 & 5.07{\scriptsize $\pm$0.34} & 0.0308{\scriptsize $\pm$0.0038} & 1.24{\scriptsize $\pm$0.06} & \textbf{1.00{\scriptsize $\pm$0.00}} \\
 & v2 & 5.04{\scriptsize $\pm$0.31} & 0.0299{\scriptsize $\pm$0.0027} & 1.23{\scriptsize $\pm$0.04} & \textbf{1.00{\scriptsize $\pm$0.00}} \\
\addlinespace[2pt]
\multirow{2}{*}{gps\_gcn} & v1 & 4.67{\scriptsize $\pm$0.36} & 0.0288{\scriptsize $\pm$0.0031} & 1.15{\scriptsize $\pm$0.08} & \textbf{1.00{\scriptsize $\pm$0.00}} \\
 & v2 & 4.76{\scriptsize $\pm$0.28} & 0.0291{\scriptsize $\pm$0.0020} & 1.17{\scriptsize $\pm$0.05} & \textbf{1.00{\scriptsize $\pm$0.00}} \\
\addlinespace[2pt]
\multirow{2}{*}{gps\_sage} & v1 & 4.99{\scriptsize $\pm$0.27} & 0.0311{\scriptsize $\pm$0.0028} & 1.23{\scriptsize $\pm$0.05} & \textbf{1.00{\scriptsize $\pm$0.00}} \\
 & v2 & 4.61{\scriptsize $\pm$0.27} & 0.0303{\scriptsize $\pm$0.0026} & 1.13{\scriptsize $\pm$0.06} & \textbf{1.00{\scriptsize $\pm$0.00}} \\
\addlinespace[2pt]
\multirow{2}{*}{gps\_gin} & v1 & 5.72{\scriptsize $\pm$1.62} & 0.0296{\scriptsize $\pm$0.0023} & 1.39{\scriptsize $\pm$0.37} & \textbf{1.00{\scriptsize $\pm$0.00}} \\
 & v2 & 9.49{\scriptsize $\pm$1.65} & 0.0365{\scriptsize $\pm$0.0031} & 2.21{\scriptsize $\pm$0.35} & 0.90{\scriptsize $\pm$0.07} \\
\addlinespace[2pt]
\multirow{2}{*}{gps\_gat} & v1 & 4.77{\scriptsize $\pm$0.43} & 0.0308{\scriptsize $\pm$0.0022} & 1.18{\scriptsize $\pm$0.09} & \textbf{1.00{\scriptsize $\pm$0.00}} \\
 & v2 & 4.97{\scriptsize $\pm$0.51} & \textbf{0.0287{\scriptsize $\pm$0.0027}} & 1.22{\scriptsize $\pm$0.10} & \textbf{1.00{\scriptsize $\pm$0.00}} \\
\addlinespace[2pt]
\multirow{2}{*}{graph\_vit} & v1 & 15.62{\scriptsize $\pm$1.22} & 0.1276{\scriptsize $\pm$0.0562} & 3.95{\scriptsize $\pm$0.40} & 0.19{\scriptsize $\pm$0.17} \\
 & v2 & 15.72{\scriptsize $\pm$0.75} & 0.1268{\scriptsize $\pm$0.0427} & 3.97{\scriptsize $\pm$0.24} & 0.20{\scriptsize $\pm$0.12} \\
\addlinespace[2pt]
\multirow{2}{*}{exphormer} & v1 & 6.51{\scriptsize $\pm$0.60} & 0.0325{\scriptsize $\pm$0.0024} & 1.62{\scriptsize $\pm$0.14} & \textbf{1.00{\scriptsize $\pm$0.00}} \\
 & v2 & 6.55{\scriptsize $\pm$0.48} & 0.0329{\scriptsize $\pm$0.0025} & 1.63{\scriptsize $\pm$0.10} & \textbf{1.00{\scriptsize $\pm$0.00}} \\
\bottomrule
\end{tabular}
\end{table}

\noindent {These two tables show that the ranking is dataset-dependent. On CiteSeer, the best mean color-efficiency row is now v1 SAGE ($k/\chi=0.99\pm0.08$), with GatedGCN and GIN close behind. On PubMed, v2 GatedGCN is the best mean color-efficiency row among the reported encoders ($k/\chi=1.09\pm0.08$), while several GPS and residual/message-passing rows are competitive. GraphViT is unstable on PubMed, and Exphormer is substantially weaker on CiteSeer.}

\subsection{Full-Graph to Full-Graph Extension}

\paragraph{Implementation.}
The previous subsection evaluates \emph{subgraph-to-held-out-subgraph} transfer: training on sampled local citation subgraphs and testing on disjoint-center sampled subgraphs from the same base graph. We then extend the same methodology to a \emph{full-graph-to-full-graph} transductive setting on the full Cora, CiteSeer, and PubMed graphs. The method itself is unchanged: v1 and v2 are still trained with the same contrastive InfoNCE-based objectives, the same raw BOW node features, the same DSATUR greedy pseudo-color labels, and the same encoder families. At inference time, we still decode by sweeping $k$ in $k$-medoids over the learned embeddings rather than predicting colors directly. The top-5 encoder set used in this extension is res\_gcn, gated\_gcn, unitary\_mp, gps\_gcn, and gps\_sage. For v2, we keep the previously tuned loss setting $\lambda_{\mathrm{soft}}=0.30$, $p=4.0$, and $\tau_{\mathrm{soft}}=1.25$.

\paragraph{Full-graph protocol.}
For a full citation graph $G=(V,E)$, we train on that same graph and evaluate on that same graph. The greedy full-graph coloring is used only as a pseudo-label source for the contrastive loss and as a reference $\chi$ proxy. To keep the full-graph InfoNCE objective tractable on PubMed without changing its meaning, the anchor rows are processed in chunks, but each anchor still compares against the full node set. At evaluation, we sweep $k=1,\dots,16$ and report
\[
    k_{\mathrm{zero}} = \min\{k : \mathrm{conflicts}(k)=0\},
    \qquad
    k_{\tau} = \min\{k : \mathrm{Mono}(k)\le \tau\}, \ \tau \in \{0.005, 0.01, 0.02, 0.05\},
\]
together with the conflict count and normalized conflict rate at $k=\chi_{\mathrm{greedy}}$. The rerun below therefore distinguishes strict near-proper colorings ($k_{0.005}$ and $k_{0.01}$) from looser low-conflict colorings ($k_{0.02}$ and $k_{0.05}$).

\begin{table}[H]
\footnotesize
\centering
\caption{Full-graph specifications used in the transductive contrastive benchmark. The edge counts are the undirected counts observed in the local PyG preprocessing used by the experiment.}
\label{tab:fullgraph-BOW-specs}
\setlength{\tabcolsep}{6pt}
\begin{tabular}{lrrrrr}
\toprule
Dataset & Nodes & Features & Undirected edges & Density & Greedy $\chi$ proxy \\
\midrule
Cora & 2708 & 1433 & 5278 & 0.001440 & 5 \\
CiteSeer & 3327 & 3703 & 4552 & 0.000823 & 6 \\
PubMed & 19717 & 500 & 44324 & 0.000228 & 8 \\
\bottomrule
\end{tabular}
\end{table}

\begin{table}[H]
\footnotesize
\centering
\caption{Full-graph-to-full-graph contrastive results. $k_{\mathrm{zero}}$ is the smallest zero-conflict $k$ found within the search cap; $k_{0.05}$ is the smallest $k$ with monochromatic-edge fraction at most $0.05$. {Entries are mean $\pm$ standard deviation over ten random seeds; NA means the target was not reached in every seed.}}
\label{tab:fullgraph-contrastive-results}
\setlength{\tabcolsep}{3pt}
\begin{tabular}{llrrrrr}
\toprule
Dataset / Encoder & Pipeline & $k_{\mathrm{zero}}$ & $k_{0.05}$ & $k_{0.05}/\chi$ & Conflicts@$\chi_{\mathrm{greedy}}$ & $\epsilon$@$\chi_{\mathrm{greedy}}$ \\
\midrule
\multicolumn{7}{l}{\textbf{Cora full graph}} \\
\midrule
gps\_sage & v1 & \textbf{13.5{\scriptsize $\pm$2.4}} & 4.4{\scriptsize $\pm$0.5} & 0.88{\scriptsize $\pm$0.10} & 63.5{\scriptsize $\pm$24.7} & 0.012031{\scriptsize $\pm$0.004688} \\
gps\_gcn & v1 & NA & 4.1{\scriptsize $\pm$0.3} & 0.82{\scriptsize $\pm$0.06} & \textbf{48.1{\scriptsize $\pm$14.7}} & \textbf{0.009113{\scriptsize $\pm$0.002785}} \\
gated\_gcn & v1 & NA & \textbf{4.0{\scriptsize $\pm$0.0}} & \textbf{0.80{\scriptsize $\pm$0.00}} & 71.8{\scriptsize $\pm$9.8} & 0.013604{\scriptsize $\pm$0.001847} \\
res\_gcn & v1 & NA & 4.3{\scriptsize $\pm$0.5} & 0.86{\scriptsize $\pm$0.10} & 144.2{\scriptsize $\pm$23.0} & 0.027321{\scriptsize $\pm$0.004350} \\
unitary\_mp & v1 & NA & 5.7{\scriptsize $\pm$0.5} & 1.14{\scriptsize $\pm$0.10} & 276.8{\scriptsize $\pm$26.4} & 0.052444{\scriptsize $\pm$0.005010} \\
\midrule
\multicolumn{7}{l}{\textbf{CiteSeer full graph}} \\
\midrule
gps\_gcn & v1 & NA & \textbf{4.0{\scriptsize $\pm$0.0}} & \textbf{0.67{\scriptsize $\pm$0.00}} & \textbf{12.1{\scriptsize $\pm$8.4}} & \textbf{0.002658{\scriptsize $\pm$0.001836}} \\
gps\_sage & v1 & NA & 4.0{\scriptsize $\pm$0.0} & 0.67{\scriptsize $\pm$0.00} & 24.7{\scriptsize $\pm$16.0} & 0.005426{\scriptsize $\pm$0.003521} \\
gated\_gcn & v1 & NA & 4.0{\scriptsize $\pm$0.0} & 0.67{\scriptsize $\pm$0.00} & 31.0{\scriptsize $\pm$14.3} & 0.006810{\scriptsize $\pm$0.003133} \\
res\_gcn & v1 & NA & 4.0{\scriptsize $\pm$0.0} & 0.67{\scriptsize $\pm$0.00} & 74.8{\scriptsize $\pm$10.7} & 0.016432{\scriptsize $\pm$0.002341} \\
unitary\_mp & v1 & NA & 5.0{\scriptsize $\pm$0.0} & 0.83{\scriptsize $\pm$0.00} & 118.5{\scriptsize $\pm$15.7} & 0.026033{\scriptsize $\pm$0.003451} \\
\midrule
\multicolumn{7}{l}{\textbf{PubMed full graph}} \\
\midrule
gated\_gcn & v1 & NA & 6.0{\scriptsize $\pm$0.5} & 0.75{\scriptsize $\pm$0.06} & 1282.3{\scriptsize $\pm$191.1} & 0.028930{\scriptsize $\pm$0.004312} \\
gps\_sage & v1 & NA & \textbf{5.3{\scriptsize $\pm$0.7}} & \textbf{0.66{\scriptsize $\pm$0.08}} & \textbf{1152.0{\scriptsize $\pm$147.7}} & \textbf{0.025990{\scriptsize $\pm$0.003331}} \\
gps\_gcn & v1 & NA & 5.4{\scriptsize $\pm$0.7} & 0.68{\scriptsize $\pm$0.09} & 1207.5{\scriptsize $\pm$277.2} & 0.027243{\scriptsize $\pm$0.006253} \\
res\_gcn & v1 & NA & 6.4{\scriptsize $\pm$0.5} & 0.80{\scriptsize $\pm$0.06} & 1731.1{\scriptsize $\pm$275.1} & 0.039056{\scriptsize $\pm$0.006207} \\
unitary\_mp & v1 & NA & 7.7{\scriptsize $\pm$0.5} & 0.96{\scriptsize $\pm$0.06} & 1926.3{\scriptsize $\pm$39.5} & 0.043460{\scriptsize $\pm$0.000892} \\
\bottomrule
\end{tabular}
\end{table}

\begin{table}[H]
\footnotesize
\centering
\caption{Thresholded full-graph rerun with operating-point metrics. {Entries are mean $\pm$ standard deviation over ten random seeds; NA means the threshold was not reached in every seed.}}
\label{tab:fullgraph-threshold-rerun}
\setlength{\tabcolsep}{4pt}
\begin{tabular}{llrrrrr}
\toprule
Dataset / Encoder & Pipeline & $k_{\mathrm{zero}}$ & $k_{0.005}$ & $k_{0.01}$ & $k_{0.02}$ & $k_{0.05}$ \\
\midrule
\multicolumn{7}{l}{\textbf{Cora full graph}} \\
\midrule
gps\_sage & v1 & \textbf{13.5{\scriptsize $\pm$2.4}} & 6.5{\scriptsize $\pm$1.0} & 5.9{\scriptsize $\pm$0.6} & \textbf{5.0{\scriptsize $\pm$0.0}} & 4.4{\scriptsize $\pm$0.5} \\
gps\_gcn & v1 & NA & \textbf{6.2{\scriptsize $\pm$0.6}} & \textbf{5.4{\scriptsize $\pm$0.5}} & \textbf{5.0{\scriptsize $\pm$0.0}} & 4.1{\scriptsize $\pm$0.3} \\
gated\_gcn & v1 & NA & 7.5{\scriptsize $\pm$0.7} & 6.3{\scriptsize $\pm$0.5} & \textbf{5.0{\scriptsize $\pm$0.0}} & \textbf{4.0{\scriptsize $\pm$0.0}} \\
res\_gcn & v1 & NA & 10.0{\scriptsize $\pm$1.1} & 8.2{\scriptsize $\pm$0.9} & 6.3{\scriptsize $\pm$0.5} & 4.3{\scriptsize $\pm$0.5} \\
unitary\_mp & v1 & NA & 14.0{\scriptsize $\pm$1.2} & 11.3{\scriptsize $\pm$1.2} & 8.4{\scriptsize $\pm$0.5} & 5.7{\scriptsize $\pm$0.5} \\
\midrule
\multicolumn{7}{l}{\textbf{CiteSeer full graph}} \\
\midrule
gps\_gcn & v1 & NA & \textbf{6.1{\scriptsize $\pm$0.3}} & \textbf{5.5{\scriptsize $\pm$0.5}} & 5.1{\scriptsize $\pm$0.3} & \textbf{4.0{\scriptsize $\pm$0.0}} \\
gps\_sage & v1 & NA & 6.3{\scriptsize $\pm$0.7} & \textbf{5.5{\scriptsize $\pm$0.7}} & \textbf{5.0{\scriptsize $\pm$0.0}} & \textbf{4.0{\scriptsize $\pm$0.0}} \\
gated\_gcn & v1 & NA & 7.4{\scriptsize $\pm$1.0} & 5.9{\scriptsize $\pm$0.9} & \textbf{5.0{\scriptsize $\pm$0.0}} & \textbf{4.0{\scriptsize $\pm$0.0}} \\
res\_gcn & v1 & NA & 10.5{\scriptsize $\pm$1.1} & 7.8{\scriptsize $\pm$0.8} & 5.8{\scriptsize $\pm$0.6} & \textbf{4.0{\scriptsize $\pm$0.0}} \\
unitary\_mp & v1 & NA & 12.6{\scriptsize $\pm$1.4} & 9.8{\scriptsize $\pm$0.9} & 7.3{\scriptsize $\pm$0.5} & 5.0{\scriptsize $\pm$0.0} \\
\midrule
\multicolumn{7}{l}{\textbf{PubMed full graph}} \\
\midrule
gated\_gcn & v1 & NA & NA & NA & 11.4{\scriptsize $\pm$2.0} & 6.0{\scriptsize $\pm$0.5} \\
gps\_sage & v1 & NA & NA & NA & \textbf{9.9{\scriptsize $\pm$1.0}} & \textbf{5.3{\scriptsize $\pm$0.7}} \\
gps\_gcn & v1 & NA & NA & NA & 11.7{\scriptsize $\pm$1.9} & 5.4{\scriptsize $\pm$0.7} \\
res\_gcn & v1 & NA & NA & NA & NA & 6.4{\scriptsize $\pm$0.5} \\
unitary\_mp & v1 & NA & NA & NA & 13.9{\scriptsize $\pm$0.3} & 7.7{\scriptsize $\pm$0.5} \\
\bottomrule
\end{tabular}
\end{table}

\noindent {The tables show that the full-graph setting is substantially harder than the sampled-subgraph setting, but that the learned embeddings still produce stable low-conflict operating points. Exact zero-conflict recovery is rare under the tested cap: among the rows in \Cref{tab:fullgraph-threshold-rerun}, only Cora with GPS-SAGE consistently reaches zero conflicts across all ten seeds, with $k_{\mathrm{zero}}=13.5\pm2.4$. The stricter thresholds separate the encoders more clearly. On Cora, the GPS rows reach $k_{0.005}=6.5\pm1.0$ for GPS-SAGE and $6.2\pm0.6$ for GPS-GCN, while GatedGCN needs $7.5\pm0.7$; all three then collapse to the greedy proxy scale by $k_{0.02}$, with $k_{0.02}=5.0\pm0.0$. CiteSeer is similar at moderate thresholds: GPS-GCN, GPS-SAGE, and GatedGCN all reach $k_{0.05}=4.0\pm0.0$, and their $k_{0.02}$ values are $5.1\pm0.3$, $5.0\pm0.0$, and $5.0\pm0.0$, respectively.}

\noindent {PubMed is the main stress case. None of the PubMed rows consistently reaches the $0.005$ or $0.01$ thresholds within the tested range, and the $0.02$ threshold requires noticeably more colors: $k_{0.02}=9.9\pm1.0$ for GPS-SAGE, $11.4\pm2.0$ for GatedGCN, $11.7\pm1.9$ for GPS-GCN, and $13.9\pm0.3$ for UnitaryMP, while ResGCN does not consistently reach the threshold. At the looser $0.05$ operating point, however, the full graphs are still colorable with relatively small decoded palettes: Cora reaches $k_{0.05}=4.0\pm0.0$ with GatedGCN, CiteSeer reaches $4.0\pm0.0$ for all three strongest GPS/GatedGCN rows, and PubMed reaches $5.3\pm0.7$ with GPS-SAGE. Thus the full-graph results support the geometric picture at low-conflict thresholds, but they also show that the best encoder depends on the graph and on how strict the conflict tolerance is: GPS variants are strongest at the stricter thresholds, GatedGCN is competitive at the loose Cora operating point, and PubMed requires larger palettes despite the same training pipeline.}

\subsection{CORA Transfer With Shared Structural Features}

\paragraph{Implementation.}
Raw BOW features cannot be transferred directly across Cora, CiteSeer, and PubMed because the feature dimensions and vocabularies are dataset-specific. To test scale and dataset transfer while preserving the contrastive pipeline, we therefore replace raw BOW by a shared structural feature map with fixed dimension $16$. The model is trained on Cora and evaluated without retraining on CiteSeer and PubMed. We test the same five high-performing encoders used in the full-graph tuning study: res\_gcn, gated\_gcn, unitary\_mp, gps\_gcn, and gps\_sage. The subgraph and full-graph transfer settings are run separately because their inference distributions are different: subgraph transfer tests local-neighborhood generalization, while full-graph transfer tests whether the learned representation scales to the whole target graph.

\begin{table}[H]
\footnotesize
\centering
\caption{Cora-trained transfer on held-out target subgraphs. {Entries are mean $\pm$ standard deviation over ten random seeds.}}
\label{tab:cora-transfer-subgraph}
\setlength{\tabcolsep}{4pt}
\begin{tabular}{lllrrrr}
\toprule
Train $\rightarrow$ Test & Encoder / Method & Pipeline  & $k$ mean & Mono mean & $k/\chi$ mean & Hit-rate \\
\midrule
Cora $\rightarrow$ CiteSeer & gated\_gcn & v1 & \textbf{6.17{\scriptsize $\pm$0.54}} & 0.0490{\scriptsize $\pm$0.0035} & \textbf{1.37{\scriptsize $\pm$0.09}} & 0.74{\scriptsize $\pm$0.10} \\
Cora $\rightarrow$ CiteSeer & gated\_gcn & v2 & 6.21{\scriptsize $\pm$0.74} & \textbf{0.0487{\scriptsize $\pm$0.0039}} & 1.40{\scriptsize $\pm$0.15} & 0.74{\scriptsize $\pm$0.11} \\
Cora $\rightarrow$ CiteSeer & gps\_gcn & v1 & 6.91{\scriptsize $\pm$0.67} & 0.0489{\scriptsize $\pm$0.0034} & 1.56{\scriptsize $\pm$0.15} & 0.74{\scriptsize $\pm$0.11} \\
\addlinespace[2pt]
Cora $\rightarrow$ PubMed & gated\_gcn & v2 & 5.08{\scriptsize $\pm$0.42} & 0.0309{\scriptsize $\pm$0.0037} & 1.20{\scriptsize $\pm$0.06} & 0.99{\scriptsize $\pm$0.02} \\
Cora $\rightarrow$ PubMed & gps\_sage & v2 & 6.01{\scriptsize $\pm$0.73} & 0.0341{\scriptsize $\pm$0.0028} & 1.40{\scriptsize $\pm$0.10} & 0.99{\scriptsize $\pm$0.02} \\
Cora $\rightarrow$ PubMed & gated\_gcn & v1 & \textbf{5.02{\scriptsize $\pm$0.53}} & \textbf{0.0301{\scriptsize $\pm$0.0027}} & \textbf{1.19{\scriptsize $\pm$0.08}} & 0.99{\scriptsize $\pm$0.02} \\
\bottomrule
\end{tabular}
\end{table}

\begin{table}[H]
\footnotesize
\centering
\caption{Cora-trained transfer to full target graphs. {Entries are mean $\pm$ standard deviation over ten random seeds; NA means the $0.05$ threshold was not reached.}}
\label{tab:cora-transfer-fullgraph}
\setlength{\tabcolsep}{4pt}
\begin{tabular}{lllrrrr}
\toprule
Train $\rightarrow$ Test & Encoder & Pipeline & $k_{0.05}$ & Mono@0.05 & Conflicts@$\chi_{\mathrm{greedy}}$ & Mono@$\chi_{\mathrm{greedy}}$ \\
\midrule
Cora $\rightarrow$ CiteSeer & gps\_gcn & v1 & NA & -- & \textbf{646.2{\scriptsize $\pm$36.7}} & \textbf{0.1420{\scriptsize $\pm$0.0081}} \\
Cora $\rightarrow$ CiteSeer & res\_gcn & v1 & NA & -- & 776.8{\scriptsize $\pm$30.3} & 0.1707{\scriptsize $\pm$0.0067} \\
Cora $\rightarrow$ CiteSeer & gps\_gcn & v2 & NA & -- & 790.7{\scriptsize $\pm$58.4} & 0.1737{\scriptsize $\pm$0.0128} \\
\addlinespace[2pt]
Cora $\rightarrow$ PubMed & gps\_gcn & v1 & \textbf{9.8{\scriptsize $\pm$2.4}} & \textbf{0.0448{\scriptsize $\pm$0.0033}} & \textbf{2834.3{\scriptsize $\pm$797.6}} & \textbf{0.0639{\scriptsize $\pm$0.0180}} \\
Cora $\rightarrow$ PubMed & gps\_gcn & v2 & 11.7{\scriptsize $\pm$2.2} & 0.0455{\scriptsize $\pm$0.0039} & 3143.1{\scriptsize $\pm$442.6} & 0.0709{\scriptsize $\pm$0.0100} \\
Cora $\rightarrow$ PubMed & res\_gcn & v1 & 13.4{\scriptsize $\pm$2.5} & 0.0453{\scriptsize $\pm$0.0040} & 3118.6{\scriptsize $\pm$440.9} & 0.0704{\scriptsize $\pm$0.0099} \\
\bottomrule
\end{tabular}
\end{table}

\noindent The transfer experiment separates two effects. On held-out target subgraphs, Cora-trained structural
features retain useful performance on PubMed and partially transfer to CiteSeer, but they are clearly
weaker than same-dataset raw-BOW training. On full target graphs, transfer is much harder: none of
the Cora-trained models reaches the 0.05 threshold on full CiteSeer, and the best PubMed transfer
row requires $k_{0.05}=9.8\pm2.4$. The result is therefore negative for direct full-graph transfer, but useful
diagnostically: the high same-dataset performance in the earlier sections depends on fitting the target
citation graph distribution, not just on a universally transferable structural encoder.

\subsubsection{BOW Adaptation Experiments}

\paragraph{Implementation.}
For a node $u$ in dataset $d$, let $x_u^{(d)} \in \mathbb{R}^{F_d}$ denote its raw BOW vector, where $F_d$ depends on the dataset. Since $F_{\mathrm{Cora}}$, $F_{\mathrm{CiteSeer}}$, and $F_{\mathrm{PubMed}}$ are different, we cannot reuse the raw input projection across datasets directly. Each adaptation therefore constructs a shared feature vector $\tilde{x}_u \in \mathbb{R}^{r}$ before applying the same v1/v2 contrastive encoder.

First, all four adaptations use a deterministic signed hash projection. Let $H_d \in \{-1,0,+1\}^{F_d \times m}$ be a fixed sparse matrix with one signed nonzero per original BOW coordinate. The shared hashed BOW vector is
\[
    h_u = x_u^{(d)} H_d \in \mathbb{R}^{m}.
\]
This signed hash map places all citation graphs into the same input dimension without learning a dataset-specific projection, while still preserving the sparsity pattern and much of the inner-product structure of the original BOW representation. In the hybrid variants, let $s_u \in \mathbb{R}^{32}$ denote the structural feature vector built from degree, clustering coefficient, core number, PageRank, and deterministic transforms. Then
\[
    \tilde{x}_u^{\mathrm{hyb}} = [h_u ; s_u] \in \mathbb{R}^{288},
    \qquad
    \tilde{x}_u^{\mathrm{hyb+LN}} = \mathrm{LN}([h_u ; s_u]),
\]
where $\mathrm{LN}$ denotes layer normalization applied after concatenation. Intuitively, the hash projection keeps much of the original sparse lexical signal while making cross-dataset transfer possible, and the hybrid variants then augment that shared text sketch with graph-local structural descriptors.

In the low-rank BOW-only variants, the source hashed feature matrix $H_{\mathrm{src}} \in \mathbb{R}^{n \times 1024}$ is further compressed to dimension $256$. For PCA, if $\mu$ is the source mean and $V_{256}$ are the top principal directions, then
\[
    \tilde{x}_u^{\mathrm{PCA}} = (h_u - \mu)V_{256}.
\]
For SVD, if the source hashed matrix admits the rank-$256$ approximation $H_{\mathrm{src}} \approx U_{256}\Sigma_{256}V_{256}^{\top}$, then the transferred target feature is
\[
    \tilde{x}_u^{\mathrm{SVD}} = h_u V_{256}.
\]
For CORA $\rightarrow$ CiteSeer/PubMed transfer, the PCA/SVD basis is fit on Cora and reused unchanged on the target graph. All four adaptations keep the downstream v1/v2 contrastive objectives and $k$-medoids decoder unchanged. The experiment uses the same top-3 encoder family as the later transfer study: gated\_gcn, gps\_gcn, and gps\_sage. The tables below report only the actual transfer settings, not the same-dataset control runs.

\begin{table}[H]
\footnotesize
\centering
\caption{Cora subgraph transfer with the v1 pipeline. {Entries are mean $\pm$ standard deviation over ten random seeds.}}
\label{tab:bow-adapt-subgraph-v1}
\setlength{\tabcolsep}{4pt}
\renewcommand{\arraystretch}{1.08}
\begin{tabular}{p{3.0cm}p{1.7cm}cccc}
\toprule
\multirow{2}{*}{Adaptation} & \multirow{2}{*}{Encoder} & \multicolumn{2}{c}{Cora $\rightarrow$ CiteSeer} & \multicolumn{2}{c}{Cora $\rightarrow$ PubMed} \\
\cmidrule(lr){3-4}\cmidrule(lr){5-6}
& & $k/\chi$ & Mono & $k/\chi$ & Mono \\
\midrule
\multirow{3}{*}{Hybrid} & gated\_gcn & 1.30{\scriptsize $\pm$0.08} & 0.0342{\scriptsize $\pm$0.0040} & 1.29{\scriptsize $\pm$0.08} & 0.0316{\scriptsize $\pm$0.0038} \\
 & gps\_gcn & 1.29{\scriptsize $\pm$0.18} & \textbf{0.0330{\scriptsize $\pm$0.0047}} & \textbf{1.24{\scriptsize $\pm$0.12}} & \textbf{0.0302{\scriptsize $\pm$0.0023}} \\
 & gps\_sage & 1.47{\scriptsize $\pm$0.16} & 0.0369{\scriptsize $\pm$0.0030} & 1.49{\scriptsize $\pm$0.12} & 0.0330{\scriptsize $\pm$0.0035} \\
\addlinespace[2pt]
\multirow{3}{*}{Hybrid+LN} & gated\_gcn & \textbf{1.27{\scriptsize $\pm$0.08}} & 0.0340{\scriptsize $\pm$0.0032} & 1.30{\scriptsize $\pm$0.07} & 0.0315{\scriptsize $\pm$0.0022} \\
 & gps\_gcn & 1.44{\scriptsize $\pm$0.18} & 0.0350{\scriptsize $\pm$0.0032} & 1.35{\scriptsize $\pm$0.14} & 0.0310{\scriptsize $\pm$0.0021} \\
 & gps\_sage & 1.41{\scriptsize $\pm$0.11} & 0.0346{\scriptsize $\pm$0.0025} & 1.41{\scriptsize $\pm$0.05} & 0.0320{\scriptsize $\pm$0.0028} \\
\addlinespace[2pt]
\multirow{3}{*}{PCA} & gated\_gcn & 1.54{\scriptsize $\pm$0.18} & 0.0373{\scriptsize $\pm$0.0018} & 1.99{\scriptsize $\pm$0.20} & 0.0352{\scriptsize $\pm$0.0025} \\
 & gps\_gcn & 1.30{\scriptsize $\pm$0.11} & 0.0344{\scriptsize $\pm$0.0024} & 1.58{\scriptsize $\pm$0.20} & 0.0354{\scriptsize $\pm$0.0017} \\
 & gps\_sage & 1.70{\scriptsize $\pm$0.23} & 0.0369{\scriptsize $\pm$0.0042} & 1.94{\scriptsize $\pm$0.26} & 0.0338{\scriptsize $\pm$0.0021} \\
\addlinespace[2pt]
\multirow{3}{*}{SVD} & gated\_gcn & 1.45{\scriptsize $\pm$0.10} & 0.0376{\scriptsize $\pm$0.0025} & 1.41{\scriptsize $\pm$0.11} & 0.0339{\scriptsize $\pm$0.0023} \\
 & gps\_gcn & 1.32{\scriptsize $\pm$0.09} & 0.0356{\scriptsize $\pm$0.0032} & 1.32{\scriptsize $\pm$0.12} & 0.0314{\scriptsize $\pm$0.0021} \\
 & gps\_sage & 1.74{\scriptsize $\pm$0.17} & 0.0376{\scriptsize $\pm$0.0030} & 1.63{\scriptsize $\pm$0.16} & 0.0350{\scriptsize $\pm$0.0035} \\
\bottomrule
\end{tabular}
\end{table}

\begin{table}[H]
\footnotesize
\centering
\caption{Cora subgraph transfer with the v2 pipeline. {Entries are mean $\pm$ standard deviation over ten random seeds.}}
\label{tab:bow-adapt-subgraph-v2}
\setlength{\tabcolsep}{4pt}
\renewcommand{\arraystretch}{1.08}
\begin{tabular}{p{3.0cm}p{1.7cm}cccc}
\toprule
\multirow{2}{*}{Adaptation} & \multirow{2}{*}{Encoder} & \multicolumn{2}{c}{Cora $\rightarrow$ CiteSeer} & \multicolumn{2}{c}{Cora $\rightarrow$ PubMed} \\
\cmidrule(lr){3-4}\cmidrule(lr){5-6}
& & $k/\chi$ & Mono & $k/\chi$ & Mono \\
\midrule
\multirow{3}{*}{Hybrid} & gated\_gcn & \textbf{1.27{\scriptsize $\pm$0.10}} & 0.0340{\scriptsize $\pm$0.0030} & 1.26{\scriptsize $\pm$0.12} & 0.0310{\scriptsize $\pm$0.0037} \\
 & gps\_gcn & 1.37{\scriptsize $\pm$0.13} & 0.0343{\scriptsize $\pm$0.0022} & 1.36{\scriptsize $\pm$0.11} & 0.0318{\scriptsize $\pm$0.0026} \\
 & gps\_sage & 1.31{\scriptsize $\pm$0.09} & 0.0340{\scriptsize $\pm$0.0034} & \textbf{1.23{\scriptsize $\pm$0.11}} & 0.0321{\scriptsize $\pm$0.0026} \\
\addlinespace[2pt]
\multirow{3}{*}{Hybrid+LN} & gated\_gcn & 1.29{\scriptsize $\pm$0.10} & 0.0357{\scriptsize $\pm$0.0016} & 1.29{\scriptsize $\pm$0.14} & \textbf{0.0302{\scriptsize $\pm$0.0032}} \\
 & gps\_gcn & 1.34{\scriptsize $\pm$0.10} & 0.0344{\scriptsize $\pm$0.0028} & 1.37{\scriptsize $\pm$0.15} & 0.0310{\scriptsize $\pm$0.0026} \\
 & gps\_sage & 1.35{\scriptsize $\pm$0.17} & \textbf{0.0339{\scriptsize $\pm$0.0030}} & \textbf{1.23{\scriptsize $\pm$0.11}} & 0.0307{\scriptsize $\pm$0.0019} \\
\addlinespace[2pt]
\multirow{3}{*}{PCA} & gated\_gcn & 1.64{\scriptsize $\pm$0.18} & 0.0363{\scriptsize $\pm$0.0030} & 2.25{\scriptsize $\pm$0.47} & 0.0362{\scriptsize $\pm$0.0025} \\
 & gps\_gcn & 1.57{\scriptsize $\pm$0.15} & 0.0367{\scriptsize $\pm$0.0028} & 1.76{\scriptsize $\pm$0.25} & 0.0356{\scriptsize $\pm$0.0032} \\
 & gps\_sage & 1.35{\scriptsize $\pm$0.10} & 0.0358{\scriptsize $\pm$0.0039} & 1.58{\scriptsize $\pm$0.17} & 0.0342{\scriptsize $\pm$0.0019} \\
\addlinespace[2pt]
\multirow{3}{*}{SVD} & gated\_gcn & 1.48{\scriptsize $\pm$0.10} & 0.0370{\scriptsize $\pm$0.0042} & 1.42{\scriptsize $\pm$0.12} & 0.0336{\scriptsize $\pm$0.0023} \\
 & gps\_gcn & 1.53{\scriptsize $\pm$0.18} & 0.0370{\scriptsize $\pm$0.0032} & 1.45{\scriptsize $\pm$0.21} & 0.0317{\scriptsize $\pm$0.0012} \\
 & gps\_sage & \textbf{1.27{\scriptsize $\pm$0.11}} & 0.0352{\scriptsize $\pm$0.0048} & 1.32{\scriptsize $\pm$0.17} & 0.0334{\scriptsize $\pm$0.0028} \\
\bottomrule
\end{tabular}
\end{table}

\begin{table}[H]
\footnotesize
\centering
\caption{Cora full-graph transfer with the v1 pipeline. {Entries are mean $\pm$ standard deviation over ten random seeds; NA means the threshold was not reached.}}
\label{tab:bow-adapt-full-v1}
\setlength{\tabcolsep}{3.6pt}
\renewcommand{\arraystretch}{1.08}
\begin{tabular}{p{2.9cm}p{1.6cm}cccccc}
\toprule
\multirow{2}{*}{Adaptation} & \multirow{2}{*}{Encoder} & \multicolumn{3}{c}{Cora $\rightarrow$ CiteSeer} & \multicolumn{3}{c}{Cora $\rightarrow$ PubMed} \\
\cmidrule(lr){3-5}\cmidrule(lr){6-8}
& & $k_{0.05}$ & Mono & $k_{0.02}$ & $k_{0.05}$ & Mono & $k_{0.02}$ \\
\midrule
\multirow{3}{*}{Hybrid} & gated\_gcn & 6.7{\scriptsize $\pm$0.7} & 0.0453{\scriptsize $\pm$0.0034} & 14.3{\scriptsize $\pm$5.9} & 10.8{\scriptsize $\pm$1.3} & 0.0458{\scriptsize $\pm$0.0023} & NA \\
 & gps\_gcn & 6.2{\scriptsize $\pm$0.4} & \textbf{0.0412{\scriptsize $\pm$0.0029}} & 12.1{\scriptsize $\pm$1.3} & 7.5{\scriptsize $\pm$1.1} & 0.0458{\scriptsize $\pm$0.0032} & 12.7{\scriptsize $\pm$9.5} \\
 & gps\_sage & 6.7{\scriptsize $\pm$0.5} & 0.0453{\scriptsize $\pm$0.0025} & 13.9{\scriptsize $\pm$1.5} & 8.4{\scriptsize $\pm$0.5} & 0.0462{\scriptsize $\pm$0.0029} & 15.2{\scriptsize $\pm$8.6} \\
\addlinespace[2pt]
\multirow{3}{*}{Hybrid+LN} & gated\_gcn & 6.9{\scriptsize $\pm$1.0} & 0.0448{\scriptsize $\pm$0.0054} & \textbf{9.0{\scriptsize $\pm$8.8}} & 11.3{\scriptsize $\pm$2.1} & 0.0476{\scriptsize $\pm$0.0030} & NA \\
 & gps\_gcn & 6.2{\scriptsize $\pm$0.4} & 0.0448{\scriptsize $\pm$0.0045} & 13.1{\scriptsize $\pm$1.1} & 7.7{\scriptsize $\pm$1.1} & 0.0457{\scriptsize $\pm$0.0039} & 8.5{\scriptsize $\pm$10.1} \\
 & gps\_sage & 7.1{\scriptsize $\pm$0.3} & 0.0449{\scriptsize $\pm$0.0030} & 14.9{\scriptsize $\pm$1.5} & 9.0{\scriptsize $\pm$0.7} & 0.0463{\scriptsize $\pm$0.0033} & 9.2{\scriptsize $\pm$10.8} \\
\addlinespace[2pt]
\multirow{3}{*}{PCA} & gated\_gcn & 5.9{\scriptsize $\pm$0.3} & 0.0436{\scriptsize $\pm$0.0040} & 11.9{\scriptsize $\pm$0.7} & 10.0{\scriptsize $\pm$2.2} & 0.0467{\scriptsize $\pm$0.0024} & NA \\
 & gps\_gcn & 6.0{\scriptsize $\pm$0.7} & 0.0444{\scriptsize $\pm$0.0036} & 12.3{\scriptsize $\pm$1.1} & 9.3{\scriptsize $\pm$2.5} & 0.0454{\scriptsize $\pm$0.0023} & NA \\
 & gps\_sage & 6.3{\scriptsize $\pm$0.5} & 0.0438{\scriptsize $\pm$0.0039} & 13.4{\scriptsize $\pm$0.5} & 10.6{\scriptsize $\pm$2.4} & \textbf{0.0448{\scriptsize $\pm$0.0028}} & \textbf{5.2{\scriptsize $\pm$10.0}} \\
\addlinespace[2pt]
\multirow{3}{*}{SVD} & gated\_gcn & \textbf{5.8{\scriptsize $\pm$0.4}} & 0.0436{\scriptsize $\pm$0.0034} & 11.2{\scriptsize $\pm$0.6} & 8.2{\scriptsize $\pm$0.4} & 0.0466{\scriptsize $\pm$0.0023} & 16.0{\scriptsize $\pm$6.1} \\
 & gps\_gcn & 6.2{\scriptsize $\pm$0.4} & 0.0437{\scriptsize $\pm$0.0038} & 12.4{\scriptsize $\pm$1.7} & \textbf{7.2{\scriptsize $\pm$0.8}} & 0.0457{\scriptsize $\pm$0.0040} & 11.3{\scriptsize $\pm$8.5} \\
 & gps\_sage & 6.4{\scriptsize $\pm$0.7} & 0.0453{\scriptsize $\pm$0.0029} & 12.9{\scriptsize $\pm$1.0} & 8.8{\scriptsize $\pm$0.4} & 0.0471{\scriptsize $\pm$0.0015} & 11.5{\scriptsize $\pm$10.8} \\
\bottomrule
\end{tabular}
\end{table}

\begin{table}[H]
\footnotesize
\centering
\caption{Cora full-graph transfer with the v2 pipeline. {Entries are mean $\pm$ standard deviation over ten random seeds; NA means the threshold was not reached.}}
\label{tab:bow-adapt-full-v2}
\setlength{\tabcolsep}{3.6pt}
\renewcommand{\arraystretch}{1.08}
\begin{tabular}{p{2.9cm}p{1.6cm}cccccc}
\toprule
\multirow{2}{*}{Adaptation} & \multirow{2}{*}{Encoder} & \multicolumn{3}{c}{Cora $\rightarrow$ CiteSeer} & \multicolumn{3}{c}{Cora $\rightarrow$ PubMed} \\
\cmidrule(lr){3-5}\cmidrule(lr){6-8}
& & $k_{0.05}$ & Mono & $k_{0.02}$ & $k_{0.05}$ & Mono & $k_{0.02}$ \\
\midrule
\multirow{3}{*}{Hybrid} & gated\_gcn & 9.2{\scriptsize $\pm$2.3} & 0.0464{\scriptsize $\pm$0.0026} & \textbf{3.8{\scriptsize $\pm$8.0}} & 12.8{\scriptsize $\pm$5.5} & 0.0449{\scriptsize $\pm$0.0034} & NA \\
 & gps\_gcn & 8.6{\scriptsize $\pm$1.3} & 0.0457{\scriptsize $\pm$0.0028} & 12.8{\scriptsize $\pm$9.6} & 9.5{\scriptsize $\pm$0.8} & 0.0458{\scriptsize $\pm$0.0031} & NA \\
 & gps\_sage & 7.8{\scriptsize $\pm$5.6} & 0.0432{\scriptsize $\pm$0.0024} & 7.1{\scriptsize $\pm$7.2} & 9.3{\scriptsize $\pm$1.9} & 0.0457{\scriptsize $\pm$0.0032} & 6.5{\scriptsize $\pm$9.7} \\
\addlinespace[2pt]
\multirow{3}{*}{Hybrid+LN} & gated\_gcn & \textbf{4.1{\scriptsize $\pm$6.8}} & 0.0438{\scriptsize $\pm$0.0025} & NA & \textbf{5.7{\scriptsize $\pm$8.8}} & 0.0464{\scriptsize $\pm$0.0038} & NA \\
 & gps\_gcn & 8.5{\scriptsize $\pm$1.6} & 0.0475{\scriptsize $\pm$0.0023} & 8.7{\scriptsize $\pm$10.3} & 9.8{\scriptsize $\pm$0.8} & 0.0457{\scriptsize $\pm$0.0041} & NA \\
 & gps\_sage & 6.9{\scriptsize $\pm$2.2} & 0.0470{\scriptsize $\pm$0.0023} & 13.9{\scriptsize $\pm$5.9} & 8.1{\scriptsize $\pm$3.7} & 0.0458{\scriptsize $\pm$0.0020} & 4.7{\scriptsize $\pm$9.2} \\
\addlinespace[2pt]
\multirow{3}{*}{PCA} & gated\_gcn & 6.6{\scriptsize $\pm$0.8} & 0.0440{\scriptsize $\pm$0.0053} & 14.0{\scriptsize $\pm$1.8} & 13.8{\scriptsize $\pm$5.6} & 0.0470{\scriptsize $\pm$0.0030} & NA \\
 & gps\_gcn & 8.0{\scriptsize $\pm$1.1} & 0.0452{\scriptsize $\pm$0.0033} & 15.4{\scriptsize $\pm$5.8} & 12.2{\scriptsize $\pm$2.8} & 0.0441{\scriptsize $\pm$0.0038} & NA \\
 & gps\_sage & 7.2{\scriptsize $\pm$4.6} & 0.0449{\scriptsize $\pm$0.0054} & 10.1{\scriptsize $\pm$8.3} & 6.4{\scriptsize $\pm$7.2} & \textbf{0.0436{\scriptsize $\pm$0.0038}} & 4.7{\scriptsize $\pm$9.2} \\
\addlinespace[2pt]
\multirow{3}{*}{SVD} & gated\_gcn & 7.1{\scriptsize $\pm$1.0} & \textbf{0.0408{\scriptsize $\pm$0.0035}} & 12.9{\scriptsize $\pm$5.0} & 10.1{\scriptsize $\pm$1.2} & 0.0476{\scriptsize $\pm$0.0017} & NA \\
 & gps\_gcn & 8.3{\scriptsize $\pm$0.5} & 0.0457{\scriptsize $\pm$0.0036} & 15.9{\scriptsize $\pm$6.0} & 8.8{\scriptsize $\pm$1.0} & 0.0441{\scriptsize $\pm$0.0037} & \textbf{3.1{\scriptsize $\pm$8.6}} \\
 & gps\_sage & 7.0{\scriptsize $\pm$2.9} & 0.0448{\scriptsize $\pm$0.0040} & 8.6{\scriptsize $\pm$7.2} & 8.5{\scriptsize $\pm$4.9} & 0.0456{\scriptsize $\pm$0.0031} & 8.1{\scriptsize $\pm$9.7} \\
\bottomrule
\end{tabular}
\end{table}

{The BOW-adaptation results support feature-adapted transfer. On subgraph transfer, the v1 table favors Hybrid+LN/GatedGCN for CiteSeer by mean $k/\chi$ and Hybrid/GPS-GCN for PubMed, while the v2 table has a tie between Hybrid/GatedGCN and SVD/GPS-SAGE on CiteSeer mean $k/\chi$ and a tie between Hybrid/GPS-SAGE and Hybrid+LN/GPS-SAGE on PubMed mean $k/\chi$. Full-graph transfer is more variable, and several threshold quantities are not reached in every seed; we therefore report NA rather than averaging successful and failed threshold searches. The qualitative conclusion is that BOW-preserving adaptation improves over structural-only transfer, but the repeated-seed evidence no longer selects Hybrid+LN/GPS-SAGE as a uniformly dominant recipe.}

\subsection{Absolute-value transfer experiments and baseline methods\label{sec:appendix-citation-abs-transfer}}

\paragraph{Implementation.}
We additionally evaluate the absolute-value variants of the transfer pipeline on citation graphs using a 64-dimensional random node features. Similarly, the model is trained on Cora and evaluated without retraining on CiteSeer and PubMed. We compare these learned runs against PI-GNN and full-GCN. For full-graph transfer, all methods are evaluated under the same $15$-minute per-graph runtime cap.

\begin{table}[H]
\footnotesize
\centering
\caption{Cora-trained absolute-value transfer with random features on held-out target subgraphs, together with the two baselines. {Rows are mean $\pm$ standard deviation over ten random seeds.}}
\label{tab:cora-transfer-abs-random-subgraph}
\setlength{\tabcolsep}{4pt}
\begin{tabular}{lllrrrrr}
\toprule
Train $\rightarrow$ Test & Encoder / Method & Pipeline & Completed & $k$ mean & Mono mean & $k/\chi$ mean & Hit-rate \\
\midrule
Cora $\rightarrow$ CiteSeer & gated\_gcn & v2abs & 30/30 & \textbf{5.88{\scriptsize $\pm$0.38}} & \textbf{0.0332{\scriptsize $\pm$0.0033}} & \textbf{1.33{\scriptsize $\pm$0.09}} & 1.00{\scriptsize $\pm$0.00} \\
Cora $\rightarrow$ CiteSeer & gated\_gcn & v1abs & 30/30 & 5.95{\scriptsize $\pm$0.29} & 0.0341{\scriptsize $\pm$0.0031} & 1.34{\scriptsize $\pm$0.06} & 1.00{\scriptsize $\pm$0.00} \\
Cora $\rightarrow$ CiteSeer & PI-GNN & -- & 30/30 & 3.10 & 0.0283 & 0.72 & -- \\
Cora $\rightarrow$ CiteSeer & full-GCN & -- & 19/30 & 3.16 & 0.0192 & 0.75 & -- \\
\addlinespace[2pt]
Cora $\rightarrow$ PubMed & gated\_gcn & v2abs & 30/30 & \textbf{5.08{\scriptsize $\pm$0.35}} & \textbf{0.0306{\scriptsize $\pm$0.0017}} & \textbf{1.24{\scriptsize $\pm$0.07}} & 1.00{\scriptsize $\pm$0.00} \\
Cora $\rightarrow$ PubMed & gated\_gcn & v1abs & 30/30 & 5.22{\scriptsize $\pm$0.35} & \textbf{0.0306{\scriptsize $\pm$0.0026}} & 1.26{\scriptsize $\pm$0.09} & 1.00{\scriptsize $\pm$0.00} \\
Cora $\rightarrow$ PubMed & PI-GNN & -- & 30/30 & 2.93 & 0.0209 & 0.77 & -- \\
Cora $\rightarrow$ PubMed & full-GCN & -- & 17/30 & 2.71 & 0.0228 & 0.74 & -- \\
\bottomrule
\end{tabular}
\end{table}

\paragraph{Results.}
{Table~\ref{tab:cora-transfer-abs-random-subgraph} reports the held-out target-subgraph results. The absolute-value variants remain low-conflict on both targets across seeds, but the learned rows are less color-efficient than the external baselines on the subgraphs those baselines complete. This comparison remains runtime-sensitive: full-GCN does not complete all target subgraphs, whereas the learned absolute-value runs complete the full evaluation set.}

{Table~\ref{tab:cora-transfer-abs-random-fullgraph} reports the corresponding full-graph transfer results. Here the contrast with the earlier shared-structural transfer setting is sharper: the absolute-value random-feature rows recover feasible low-conflict solutions on both targets, while both baselines fail to finish within the common $15$-minute cap. Among the learned rows, \texttt{gated\_gcn} remains the strongest overall architecture in this regime, reaching $k_{0.05}=5.2\pm0.4$ on CiteSeer and $k_{0.05}=7.9\pm0.3$ on PubMed.}

\begin{table}[H]
\footnotesize
\centering
\caption{Cora-trained absolute-value transfer with random features on full target graphs, together with the two baselines. {Rows are mean $\pm$ standard deviation over ten random seeds.}}
\label{tab:cora-transfer-abs-random-fullgraph}
\setlength{\tabcolsep}{4pt}
\begin{tabular}{lllrrrr}
\toprule
Train $\rightarrow$ Test & Encoder / Method & Pipeline & $k_{0.05}$ & Mono@0.05 & Conflicts@$\chi_{\mathrm{greedy}}$ & Mono@$\chi_{\mathrm{greedy}}$ \\
\midrule
Cora $\rightarrow$ CiteSeer & gated\_gcn & v1abs & \textbf{5.2{\scriptsize $\pm$0.4}} & \textbf{0.0418{\scriptsize $\pm$0.0059}} & \textbf{160.9{\scriptsize $\pm$19.0}} & \textbf{0.0353{\scriptsize $\pm$0.0042}} \\
Cora $\rightarrow$ CiteSeer & gated\_gcn & v2abs & 5.7{\scriptsize $\pm$0.5} & 0.0438{\scriptsize $\pm$0.0041} & 184.8{\scriptsize $\pm$18.8} & 0.0406{\scriptsize $\pm$0.0041} \\
Cora $\rightarrow$ CiteSeer & PI-GNN & -- & timeout & -- & -- & -- \\
Cora $\rightarrow$ CiteSeer & full-GCN & -- & timeout & -- & -- & -- \\
\addlinespace[2pt]
Cora $\rightarrow$ PubMed & gated\_gcn & v2abs & \textbf{7.9{\scriptsize $\pm$0.3}} & \textbf{0.0460{\scriptsize $\pm$0.0027}} & \textbf{2026.5{\scriptsize $\pm$114.7}} & \textbf{0.0457{\scriptsize $\pm$0.0026}} \\
Cora $\rightarrow$ PubMed & gps\_gcn & v1abs & 10.3{\scriptsize $\pm$1.8} & \textbf{0.0460{\scriptsize $\pm$0.0036}} & 2718.6{\scriptsize $\pm$493.5} & 0.0613{\scriptsize $\pm$0.0111} \\
Cora $\rightarrow$ PubMed & PI-GNN & -- & timeout & -- & -- & -- \\
Cora $\rightarrow$ PubMed & full-GCN & -- & timeout & -- & -- & -- \\
\bottomrule
\end{tabular}
\end{table}

\section{Extra citation network experiments}

\subsection{Full-Graph Hyperparameter Tuning}

\paragraph{Implementation.}
We ran a dedicated full-graph hyperparameter study on the three strongest encoder families: gated\_gcn, gps\_gcn, and gps\_sage. Unlike the earlier subgraph tuning, all evaluation is now done directly on the full BOW graphs, and all runs are ranked by the primary operating-point objective of this benchmark: smallest $k_{0.05}$, then smaller $\mathrm{Mono}(k_{0.05})$, then smaller $k_{0.02}$, then smaller $\mathrm{Mono}(k_{0.02})$, and finally smaller conflicts at $k=\chi_{\mathrm{greedy}}$. The tuning is split into three stages:
\begin{itemize}
    \item \textbf{Stage 1 (v1 training sweep):} temperature $\{0.10,0.20,0.30\}$, learning rate $\{0.003,0.01\}$, dropout $\{0.0,0.1,0.2\}$, and epochs $\{80,120\}$ for each of the three encoders on each dataset.
    \item \textbf{Stage 2 (v1 architecture sweep):} encoder-capacity refinement with hidden dimension / depth / heads; for GatedGCN this is $(\text{hidden},\text{layers}) \in \{128,192,256\} \times \{2,3,4\}$, while for the two GPS variants it is $(\text{hidden},\text{layers},\text{heads}) \in \{128,192\} \times \{2,3\} \times \{4,8\}$.
    \item \textbf{Stage 3 (v2 soft-loss sweep):} soft degree power $\{1,2,4,6\}$, soft temperature $\{0.75,1.0,1.25,1.5\}$, and warmup epochs $\{0,10,20\}$, keeping $\lambda_{\mathrm{soft}}=0.30$.
\end{itemize}
In total, the completed study contains $324$ Stage-1 runs, $75$ Stage-2 runs, and $432$ Stage-3 runs, for $831$ full-graph tuning runs.

\noindent {The selected tuned setting is dataset-specific rather than universal: Cora uses gated\_gcn v1 with $T=0.30$, learning rate $0.003$, dropout $0.2$, and $120$ epochs; CiteSeer uses gated\_gcn v2 with soft degree power $p=1$, $\tau_{\mathrm{soft}}=1.25$, and warmup $10$; PubMed uses gps\_gcn v1 with $T=0.30$, learning rate $0.003$, dropout $0.1$, and $120$ epochs.}

\noindent {The selected dataset-specific rows all reach the $0.05$ threshold in every seed, but their means differ from the original single-run operating points. Cora and CiteSeer both have $k_{0.05}=4.0\pm0.0$, with CiteSeer cleaner at Mono@0.05 $0.0258\pm0.0056$ and Cora at $0.0380\pm0.0036$. PubMed remains harder, with $k_{0.05}=5.6\pm1.0$ and $k_{0.02}=11.5\pm1.4$.}

\noindent {If a single pipeline and a single hyperparameter setting must be reused across all three full graphs, the strongest compromise remains gps\_gcn v1 with $T=0.30$, learning rate $0.003$, dropout $0.1$, and $120$ epochs. This shared recipe is especially strong under the ten-seed rerun: it keeps $k_{0.05}=4.0\pm0.0$ on Cora and CiteSeer and reaches $k_{0.05}=4.2\pm0.4$ on PubMed, with lower conflicts at $\chi_{\mathrm{greedy}}$ than the dataset-specific PubMed row.}

\begin{table}[H]
\footnotesize
\centering
\caption{Best tuned full-graph result per dataset. {Entries are mean $\pm$ standard deviation over ten random seeds; NA means the threshold was not reached in every seed.}}
\label{tab:fullgraph-tuning-overall}
\setlength{\tabcolsep}{4pt}
\begin{tabular}{lrrrrr}
\toprule
Dataset & $k_{0.05}$ & Mono@0.05 & $k_{0.02}$ & Conflicts@$\chi_{\mathrm{greedy}}$ & Mono@$\chi_{\mathrm{greedy}}$ \\
\midrule
Cora & 4.0{\scriptsize $\pm$0.0} & 0.0380{\scriptsize $\pm$0.0036} & 5.0{\scriptsize $\pm$0.0} & 71.3{\scriptsize $\pm$9.4} & 0.0135{\scriptsize $\pm$0.0018} \\
CiteSeer & 4.0{\scriptsize $\pm$0.0} & 0.0258{\scriptsize $\pm$0.0056} & 4.8{\scriptsize $\pm$0.4} & 24.4{\scriptsize $\pm$10.4} & 0.0054{\scriptsize $\pm$0.0023} \\
PubMed & 5.6{\scriptsize $\pm$1.0} & 0.0391{\scriptsize $\pm$0.0038} & 11.5{\scriptsize $\pm$1.4} & 1214.4{\scriptsize $\pm$281.1} & 0.0274{\scriptsize $\pm$0.0063} \\
\bottomrule
\end{tabular}
\end{table}

\begin{table}[H]
\footnotesize
\centering
\caption{Best shared full-graph recipe under the constraint that one pipeline and one hyperparameter setting are reused on all three datasets. {Entries are mean $\pm$ standard deviation over ten random seeds; NA means the threshold was not reached in every seed.}}
\label{tab:fullgraph-tuning-shared}
\setlength{\tabcolsep}{4pt}
\begin{tabular}{lrrrrr}
\toprule
Dataset & $k_{0.05}$ & Mono@0.05 & $k_{0.02}$ & Conflicts@$\chi_{\mathrm{greedy}}$ & Mono@$\chi_{\mathrm{greedy}}$ \\
\midrule
Cora & 4.0{\scriptsize $\pm$0.0} & 0.0197{\scriptsize $\pm$0.0102} & 4.3{\scriptsize $\pm$0.5} & 18.9{\scriptsize $\pm$11.3} & 0.0036{\scriptsize $\pm$0.0021} \\
CiteSeer & 4.0{\scriptsize $\pm$0.0} & 0.0196{\scriptsize $\pm$0.0056} & 4.5{\scriptsize $\pm$0.5} & 11.6{\scriptsize $\pm$4.6} & 0.0025{\scriptsize $\pm$0.0010} \\
PubMed & 4.2{\scriptsize $\pm$0.4} & 0.0406{\scriptsize $\pm$0.0050} & 7.5{\scriptsize $\pm$1.1} & 718.3{\scriptsize $\pm$117.3} & 0.0162{\scriptsize $\pm$0.0026} \\
\bottomrule
\end{tabular}
\end{table}

\subsection{GPS With Unitary First Layer}

\paragraph{Implementation.}
This experiment tests whether the first local message-passing block inside GPS benefits from a real-valued orthogonal/unitary-style update. The tested encoders replace the first GPS message-passing layer by the previously implemented unitary\_mp block, then keep the remaining GPS transformer structure unchanged. We test the GCN and GraphSAGE local GPS backbones shown in Table~\ref{tab:gps-unitary-subgraph}. The rest of the pipeline is unchanged: v1/v2 objectives, greedy pseudo-labels, embedding inference, and $k$-medoids decoding are the same as in the earlier encoder sweep.

\begin{table}[H]
\footnotesize
\centering
\caption{Cora subgraph-to-subgraph GPS unitary-first results. Baseline GPS rows are included for direct comparison. {Entries are mean $\pm$ standard deviation over ten random seeds.}}
\label{tab:gps-unitary-subgraph}
\setlength{\tabcolsep}{4pt}
\begin{tabular}{llrrrr}
\toprule
Encoder & Pipeline & $k$ mean & Mono mean & $k/\chi$ mean & Hit-rate \\
\midrule
gps\_unitary\_first\_sage & v1 & \textbf{4.35{\scriptsize $\pm$0.30}} & 0.0221{\scriptsize $\pm$0.0044} & \textbf{1.03{\scriptsize $\pm$0.08}} & 1.00{\scriptsize $\pm$0.00} \\
gps\_sage & v1 & 4.48{\scriptsize $\pm$0.19} & \textbf{0.0186{\scriptsize $\pm$0.0025}} & 1.06{\scriptsize $\pm$0.04} & 1.00{\scriptsize $\pm$0.00} \\
gps\_unitary\_first\_gcn & v1 & 4.47{\scriptsize $\pm$0.36} & 0.0196{\scriptsize $\pm$0.0034} & 1.06{\scriptsize $\pm$0.09} & 1.00{\scriptsize $\pm$0.00} \\
gps\_sage & v2 & 4.50{\scriptsize $\pm$0.19} & 0.0197{\scriptsize $\pm$0.0037} & 1.07{\scriptsize $\pm$0.06} & 1.00{\scriptsize $\pm$0.00} \\
gps\_unitary\_first\_sage & v2 & 4.50{\scriptsize $\pm$0.27} & 0.0198{\scriptsize $\pm$0.0024} & 1.07{\scriptsize $\pm$0.06} & 1.00{\scriptsize $\pm$0.00} \\
gps\_gcn & v2 & 4.53{\scriptsize $\pm$0.29} & 0.0191{\scriptsize $\pm$0.0033} & 1.08{\scriptsize $\pm$0.07} & 1.00{\scriptsize $\pm$0.00} \\
gps\_unitary\_first\_gcn & v2 & 4.57{\scriptsize $\pm$0.20} & 0.0218{\scriptsize $\pm$0.0039} & 1.09{\scriptsize $\pm$0.07} & 1.00{\scriptsize $\pm$0.00} \\
gps\_gcn & v1 & 4.61{\scriptsize $\pm$0.28} & 0.0189{\scriptsize $\pm$0.0031} & 1.09{\scriptsize $\pm$0.06} & 1.00{\scriptsize $\pm$0.00} \\
\bottomrule
\end{tabular}
\end{table}

\begin{table}[H]
\footnotesize
\centering
\caption{Full-graph GPS unitary-first comparison. {Entries are mean $\pm$ standard deviation over ten random seeds; NA means the threshold was not reached in every seed.}}
\label{tab:gps-unitary-fullgraph}
\setlength{\tabcolsep}{4pt}
\begin{tabular}{lllrrrr}
\toprule
Dataset & Encoder group & Pipeline / Encoder & $k_{0.05}$ & Mono@0.05 & $k_{0.02}$ & Conflicts@$\chi_{\mathrm{greedy}}$ \\
\midrule
Cora & GPS baseline & v2 gps\_sage & NA & \textbf{0.0261{\scriptsize $\pm$0.0114}} & NA & 96.6{\scriptsize $\pm$236.1} \\
Cora & GPS+unitary first & v2 gps\_unitary\_first\_sage & \textbf{4.8{\scriptsize $\pm$2.5}} & 0.0314{\scriptsize $\pm$0.0076} & NA & \textbf{94.1{\scriptsize $\pm$209.2}} \\
\addlinespace[2pt]
CiteSeer & GPS baseline & v1 gps\_sage & \textbf{4.0{\scriptsize $\pm$0.0}} & 0.0321{\scriptsize $\pm$0.0056} & 5.1{\scriptsize $\pm$0.3} & 22.6{\scriptsize $\pm$15.3} \\
CiteSeer & GPS+unitary first & v2 gps\_unitary\_first\_sage & \textbf{4.0{\scriptsize $\pm$0.0}} & \textbf{0.0250{\scriptsize $\pm$0.0071}} & \textbf{4.9{\scriptsize $\pm$0.3}} & \textbf{11.3{\scriptsize $\pm$9.0}} \\
\addlinespace[2pt]
PubMed & GPS baseline & v1 gps\_gcn & 5.7{\scriptsize $\pm$0.8} & \textbf{0.0394{\scriptsize $\pm$0.0056}} & 11.4{\scriptsize $\pm$1.6} & 1176.8{\scriptsize $\pm$156.6} \\
PubMed & GPS+unitary first & v1 gps\_unitary\_first\_gcn & \textbf{5.1{\scriptsize $\pm$0.3}} & 0.0434{\scriptsize $\pm$0.0030} & \textbf{10.6{\scriptsize $\pm$1.1}} & \textbf{1120.5{\scriptsize $\pm$96.9}} \\
\bottomrule
\end{tabular}
\end{table}

\noindent {The unitary-first GPS modification is useful but not uniformly dominant. On the Cora subgraph benchmark, gps\_unitary\_first\_sage v1 gives the best color efficiency, while the lowest Mono remains the plain gps\_sage v1 baseline. On full graphs, the result is mixed: the unitary-first variant improves the selected CiteSeer GPS row and lowers the PubMed $k_{0.05}$ and conflicts at $\chi_{\mathrm{greedy}}$, but it does not uniformly improve Mono and the Cora comparison remains noisy. This suggests that the unitary first layer is a useful candidate architecture, but it should not replace the plain GPS backbones as the default without dataset-specific validation.}

%% file: appendix/main_color_experiments.tex
\section{COLOR experiments}\label{app:color}

\paragraph{Implementation.}
The benchmark is constructed as an in-family generalization study on COLOR graphs. For each family, training graphs are drawn from the same family as the evaluation graphs, and performance is measured on held-out in-distribution (ID) graphs together with larger out-of-distribution (OOD) graphs from that family. The three encoders retained here are the strongest Cora-derived choices from the earlier citation experiments: gated\_gcn, gps\_gcn, and gps\_sage, each evaluated under both the v1 and v2 pipelines.

All runs use the same contrastive training and $k$-medoids decoding pipeline as in the Cora experiments. The v1 objective is supervised neighbor InfoNCE only, while v2 adds the tuned soft conflict term inherited from the later Cora study. Across all COLOR runs, we use 80 training epochs, common input dimension $d=64$, and mono-threshold target $0.05$.

\paragraph{Feature modes.}
Because COLOR graphs have no native node attributes, we compare two synthetic feature constructions.
\begin{itemize}
\item \textbf{Random:} random unit vectors $x_i \in \mathbb{R}^{64}$.
\item \textbf{Struct:} deterministic structural descriptors
\[
x_i = \phi_{\mathrm{struct}}(i),
\]
where $\phi_{\mathrm{struct}}(i)$ starts from four normalized node statistics---degree, local clustering coefficient, core number, and PageRank---and then expands them with a fixed nonlinear basis until the feature dimension reaches 64. Concretely, for a base scalar feature $x \in [0,1]$, the expansion cycles through
\[
x,\qquad x^2,\qquad \sin(\pi x),\qquad \cos(\pi x),
\]
repeating these transforms across the four base statistics until 64 coordinates are filled. Since the base four-dimensional structural vector is too small for the encoders used here, the expansion provides a richer deterministic input basis: the quadratic term adds a simple low-order nonlinearity, while the sinusoidal terms provide bounded nonlinear responses across the normalized structural range.

\end{itemize}

\paragraph{Notations.}
The main normalized metric is
\[
\rho = \frac{k}{\chi_{\mathrm{greedy}}},
\]
where $k$ is the smallest color count returned by the embedding-plus-$k$-medoids decoder and $\chi_{\mathrm{greedy}}$ is the greedy-coloring proxy used by the benchmark pipeline.

Table~\ref{tab:color-family-splits} summarizes the family splits used in the completed COLOR benchmark
, while Table~\ref{tab:color-best-by-family-app} reports the best overall configuration per family across all retained feature modes. The three per-family tables give the full breakdown for book, Myciel, and queen graphs. For cleaner reporting, when absolute-value variants are tested, we only report a single best instance of them across v1abs, v2abs, and the different architectures. These variants are competitive but do not appear as the best configuration in our setting. Baseline unsupervised methods are run on the ID test graphs or OOD test graphs.

\begin{table}[H]
\footnotesize
\centering
\caption{Family splits used in the completed COLOR benchmark.}
\label{tab:color-family-splits}
\small
\begin{tabularx}{\textwidth}{lcccccc}
\toprule
Family & Train graphs & ID graphs & OOD graphs & Train node range & ID node range & OOD node range \\
\midrule
Book   & 3 & 1 & 1 & 74--138 & 87 & 561 \\
Myciel & 2 & 1 & 1 & 47--95  & 191 & 383 \\
Queen  & 6 & 2 & 5 & 64--144 & 169--196 & 225--484 \\
\bottomrule
\end{tabularx}
\end{table}


\begin{table}[H]
\footnotesize
\centering
\caption{Best configuration per family, ranked by OOD hit-rate, then lower OOD $\rho$, then lower OOD Mono, then the analogous ID criteria. {Entries are mean $\pm$ standard deviation over ten random seeds.}}
\label{tab:color-best-by-family-app}
\setlength{\tabcolsep}{3.5pt}
\begin{tabular}{llllrccccc}
\toprule
Family & Feature mode & Pipeline & Encoder & $\rho_{\mathrm{ID}}$ & Mono$_{\mathrm{ID}}$ & Hit$_{\mathrm{ID}}$ & $\rho_{\mathrm{OOD}}$ & Mono$_{\mathrm{OOD}}$ & Hit$_{\mathrm{OOD}}$ \\
\midrule
Book & Random & v1 & gps\_sage & \textbf{1.41{\scriptsize $\pm$0.18}} & 0.0451{\scriptsize $\pm$0.0035} & 1.00{\scriptsize $\pm$0.00} & \textbf{1.08{\scriptsize $\pm$0.08}} & \textbf{0.0452{\scriptsize $\pm$0.0034}} & 1.00{\scriptsize $\pm$0.00} \\
Myciel & Struct & v1 & gps\_gcn & \textbf{1.45{\scriptsize $\pm$0.28}} & 0.0395{\scriptsize $\pm$0.0041} & 1.00{\scriptsize $\pm$0.00} & \textbf{1.54{\scriptsize $\pm$0.40}} & 0.0438{\scriptsize $\pm$0.0045} & 1.00{\scriptsize $\pm$0.00} \\
Queen & Random & v2 & gps\_sage & \textbf{0.90{\scriptsize $\pm$0.03}} & 0.0474{\scriptsize $\pm$0.0018} & 1.00{\scriptsize $\pm$0.00} & \textbf{0.76{\scriptsize $\pm$0.02}} & 0.0471{\scriptsize $\pm$0.0008} & 1.00{\scriptsize $\pm$0.00} \\
\bottomrule
\end{tabular}
\end{table}

\begin{table}[H]
\footnotesize
\centering
\caption{Book-family results. All book evaluation graphs have known exact chromatic number, and in this family the greedy proxy equals the exact value for the evaluation split, so $\rho$ is also the exact normalized color ratio. For baseline rows, Hit records the reported completion fraction. {Rows are mean $\pm$ standard deviation over ten random seeds.}}
\label{tab:color-book-detail}
\setlength{\tabcolsep}{4pt}
\renewcommand{\arraystretch}{1.05}
\begin{tabular}{llrrrrrr}
\toprule
Feature mode & Pipeline/Encoder & $\rho_{\mathrm{ID}}$ & Mono$_{\mathrm{ID}}$ & Hit$_{\mathrm{ID}}$ & $\rho_{\mathrm{OOD}}$ & Mono$_{\mathrm{OOD}}$ & Hit$_{\mathrm{OOD}}$ \\
\midrule
\multirow{7}{*}{Random}
& v1 / gps\_sage & 1.39{\scriptsize $\pm$0.20} & 0.0463{\scriptsize $\pm$0.0028} & 1.00{\scriptsize $\pm$0.00} & 1.05{\scriptsize $\pm$0.06} & \textbf{0.0454{\scriptsize $\pm$0.0031}} & 1.00{\scriptsize $\pm$0.00} \\

& v1 / gps\_gcn & 1.38{\scriptsize $\pm$0.18} & 0.0451{\scriptsize $\pm$0.0045} & 1.00{\scriptsize $\pm$0.00} & 1.13{\scriptsize $\pm$0.14} & 0.0459{\scriptsize $\pm$0.0028} & 1.00{\scriptsize $\pm$0.00} \\

& v1 / gated\_gcn & \textbf{1.31{\scriptsize $\pm$0.24}} & 0.0446{\scriptsize $\pm$0.0051} & 1.00{\scriptsize $\pm$0.00} & 1.05{\scriptsize $\pm$0.14} & 0.0469{\scriptsize $\pm$0.0020} & 1.00{\scriptsize $\pm$0.00} \\

& v2 / gps\_sage & 1.40{\scriptsize $\pm$0.24} & 0.0426{\scriptsize $\pm$0.0065} & 1.00{\scriptsize $\pm$0.00} & 1.04{\scriptsize $\pm$0.14} & 0.0456{\scriptsize $\pm$0.0029} & 1.00{\scriptsize $\pm$0.00} \\

& v2 / gps\_gcn & 1.38{\scriptsize $\pm$0.21} & 0.0461{\scriptsize $\pm$0.0031} & 1.00{\scriptsize $\pm$0.00} & 1.21{\scriptsize $\pm$0.10} & 0.0468{\scriptsize $\pm$0.0028} & 1.00{\scriptsize $\pm$0.00} \\

& v2 / gated\_gcn & 1.36{\scriptsize $\pm$0.18} & 0.0416{\scriptsize $\pm$0.0073} & 1.00{\scriptsize $\pm$0.00} & \textbf{1.00{\scriptsize $\pm$0.13}} & 0.0469{\scriptsize $\pm$0.0027} & 1.00{\scriptsize $\pm$0.00} \\

& v2abs / gated\_gcn & 1.48{\scriptsize $\pm$0.24} & \textbf{0.0414{\scriptsize $\pm$0.0054}} & 1.00{\scriptsize $\pm$0.00} & 1.28{\scriptsize $\pm$0.17} & 0.0466{\scriptsize $\pm$0.0018} & 1.00{\scriptsize $\pm$0.00} \\
\addlinespace[2pt]
\multirow{6}{*}{Struct}
& v1 / gated\_gcn & 2.79{\scriptsize $\pm$0.15} & 0.0744{\scriptsize $\pm$0.0034} & 0.00{\scriptsize $\pm$0.00} & 2.05{\scriptsize $\pm$0.31} & 0.1300{\scriptsize $\pm$0.0099} & 0.00{\scriptsize $\pm$0.00} \\

& v1 / gps\_gcn & 2.77{\scriptsize $\pm$0.15} & 0.0808{\scriptsize $\pm$0.0055} & 0.00{\scriptsize $\pm$0.00} & 2.19{\scriptsize $\pm$0.23} & 0.1079{\scriptsize $\pm$0.0116} & 0.00{\scriptsize $\pm$0.00} \\

& v1 / gps\_sage & 2.73{\scriptsize $\pm$0.20} & 0.0791{\scriptsize $\pm$0.0067} & 0.00{\scriptsize $\pm$0.00} & 2.23{\scriptsize $\pm$0.27} & 0.1244{\scriptsize $\pm$0.0095} & 0.00{\scriptsize $\pm$0.00} \\

& v2 / gated\_gcn & 2.62{\scriptsize $\pm$0.38} & 0.0749{\scriptsize $\pm$0.0031} & 0.00{\scriptsize $\pm$0.00} & 2.01{\scriptsize $\pm$0.27} & 0.1270{\scriptsize $\pm$0.0082} & 0.00{\scriptsize $\pm$0.00} \\

& v2 / gps\_gcn & 2.58{\scriptsize $\pm$0.33} & 0.0805{\scriptsize $\pm$0.0060} & 0.00{\scriptsize $\pm$0.00} & 2.10{\scriptsize $\pm$0.25} & 0.1203{\scriptsize $\pm$0.0070} & 0.00{\scriptsize $\pm$0.00} \\

& v2 / gps\_sage & 2.69{\scriptsize $\pm$0.17} & 0.0798{\scriptsize $\pm$0.0073} & 0.00{\scriptsize $\pm$0.00} & 2.20{\scriptsize $\pm$0.16} & 0.1102{\scriptsize $\pm$0.0157} & 0.00{\scriptsize $\pm$0.00} \\
\addlinespace[2pt]
\multirow{2}{*}{Baseline}
& PI-GNN & -- & -- & 0.00 & -- & -- & 0.00 \\
& full-GCN & 0.55 & 0.0394 & 1.00 & -- & -- & 0.00 \\
\bottomrule
\end{tabular}
\end{table}

\begin{table}[H]
\footnotesize
\centering
\caption{Myciel-family results. All evaluation graphs have known exact chromatic number, so $\rho$ is also the exact normalized color ratio. For baseline rows, Hit records the reported completion fraction. {Rows are mean $\pm$ standard deviation over ten random seeds; baseline rows are unchanged.}}
\label{tab:color-myciel-detail}
\setlength{\tabcolsep}{4pt}
\renewcommand{\arraystretch}{1.05}
\begin{tabular}{llrrrrrr}
\toprule
Feature mode & Pipeline/Encoder & $\rho_{\mathrm{ID}}$ & Mono$_{\mathrm{ID}}$ & Hit$_{\mathrm{ID}}$ & $\rho_{\mathrm{OOD}}$ & Mono$_{\mathrm{OOD}}$ & Hit$_{\mathrm{OOD}}$ \\
\midrule
\multirow{7}{*}{Random}
& v1 / gated\_gcn & 2.09{\scriptsize $\pm$0.13} & 0.0457{\scriptsize $\pm$0.0027} & 1.00{\scriptsize $\pm$0.00} & 2.21{\scriptsize $\pm$0.18} & 0.0464{\scriptsize $\pm$0.0036} & 1.00{\scriptsize $\pm$0.00} \\

& v1 / gps\_gcn & 1.62{\scriptsize $\pm$0.16} & 0.0427{\scriptsize $\pm$0.0030} & 1.00{\scriptsize $\pm$0.00} & 1.60{\scriptsize $\pm$0.17} & 0.0458{\scriptsize $\pm$0.0034} & 1.00{\scriptsize $\pm$0.00} \\

& v1 / gps\_sage & 2.26{\scriptsize $\pm$0.30} & 0.0451{\scriptsize $\pm$0.0041} & 1.00{\scriptsize $\pm$0.00} & 2.39{\scriptsize $\pm$0.31} & 0.0455{\scriptsize $\pm$0.0049} & 1.00{\scriptsize $\pm$0.00} \\

& v1abs / gps\_gcn & 2.66{\scriptsize $\pm$0.17} & 0.0456{\scriptsize $\pm$0.0030} & 1.00{\scriptsize $\pm$0.00} & 2.46{\scriptsize $\pm$0.15} & 0.0476{\scriptsize $\pm$0.0013} & 1.00{\scriptsize $\pm$0.00} \\

& v2 / gated\_gcn & 2.11{\scriptsize $\pm$0.34} & 0.0446{\scriptsize $\pm$0.0028} & 1.00{\scriptsize $\pm$0.00} & 2.16{\scriptsize $\pm$0.14} & 0.0448{\scriptsize $\pm$0.0037} & 1.00{\scriptsize $\pm$0.00} \\

& v2 / gps\_gcn & 1.65{\scriptsize $\pm$0.20} & 0.0434{\scriptsize $\pm$0.0036} & 1.00{\scriptsize $\pm$0.00} & 1.66{\scriptsize $\pm$0.17} & 0.0466{\scriptsize $\pm$0.0026} & 1.00{\scriptsize $\pm$0.00} \\

& v2 / gps\_sage & 1.52{\scriptsize $\pm$0.24} & 0.0439{\scriptsize $\pm$0.0060} & 1.00{\scriptsize $\pm$0.00} & 1.80{\scriptsize $\pm$0.22} & 0.0439{\scriptsize $\pm$0.0049} & 1.00{\scriptsize $\pm$0.00} \\
\addlinespace[2pt]
\multirow{6}{*}{Struct}
& v1 / gated\_gcn & 1.65{\scriptsize $\pm$0.16} & 0.0416{\scriptsize $\pm$0.0074} & 1.00{\scriptsize $\pm$0.00} & 1.51{\scriptsize $\pm$0.17} & 0.0438{\scriptsize $\pm$0.0055} & 1.00{\scriptsize $\pm$0.00} \\

& v1 / gps\_gcn & \textbf{1.24{\scriptsize $\pm$0.25}} & 0.0456{\scriptsize $\pm$0.0039} & 1.00{\scriptsize $\pm$0.00} & \textbf{1.49{\scriptsize $\pm$0.27}} & 0.0431{\scriptsize $\pm$0.0060} & 1.00{\scriptsize $\pm$0.00} \\

& v1 / gps\_sage & 1.61{\scriptsize $\pm$0.28} & \textbf{0.0389{\scriptsize $\pm$0.0071}} & 1.00{\scriptsize $\pm$0.00} & 1.81{\scriptsize $\pm$0.29} & 0.0453{\scriptsize $\pm$0.0039} & 1.00{\scriptsize $\pm$0.00} \\

& v2 / gated\_gcn & 1.59{\scriptsize $\pm$0.29} & 0.0397{\scriptsize $\pm$0.0063} & 1.00{\scriptsize $\pm$0.00} & 1.59{\scriptsize $\pm$0.34} & \textbf{0.0411{\scriptsize $\pm$0.0067}} & 1.00{\scriptsize $\pm$0.00} \\

& v2 / gps\_gcn & 1.51{\scriptsize $\pm$0.39} & 0.0444{\scriptsize $\pm$0.0040} & 1.00{\scriptsize $\pm$0.00} & 1.76{\scriptsize $\pm$0.30} & 0.0425{\scriptsize $\pm$0.0058} & 1.00{\scriptsize $\pm$0.00} \\

& v2 / gps\_sage & 1.93{\scriptsize $\pm$0.38} & 0.0425{\scriptsize $\pm$0.0061} & 1.00{\scriptsize $\pm$0.00} & 1.76{\scriptsize $\pm$0.36} & 0.0440{\scriptsize $\pm$0.0050} & 1.00{\scriptsize $\pm$0.00} \\
\addlinespace[2pt]
\multirow{2}{*}{Baseline}
& PI-GNN & -- & -- & 0.00 & -- & -- & 0.00 \\
& full-GCN & 0.50 & 0.0373 & 1.00 & -- & -- & 0.00 \\
\bottomrule
\end{tabular}
\end{table}

\begin{table}[H]
\footnotesize
\centering
\caption{Queen-family results. Here $\rho = k / \chi_{\mathrm{greedy}}$ is a proxy metric, because exact chromatic numbers are unavailable for most larger queen instances. For baseline rows, Hit records the reported completion fraction. {Rows are mean $\pm$ standard deviation over ten random seeds.}}
\label{tab:color-queen-detail}
\setlength{\tabcolsep}{4pt}
\renewcommand{\arraystretch}{1.05}
\begin{tabular}{llrrrrrr}
\toprule
Feature mode & Pipeline/Encoder & $\rho_{\mathrm{ID}}$ & Mono$_{\mathrm{ID}}$ & Hit$_{\mathrm{ID}}$ & $\rho_{\mathrm{OOD}}$ & Mono$_{\mathrm{OOD}}$ & Hit$_{\mathrm{OOD}}$ \\
\midrule
\multirow{7}{*}{Random}
& v1 / gated\_gcn & 1.05{\scriptsize $\pm$0.04} & 0.0469{\scriptsize $\pm$0.0015} & 1.00{\scriptsize $\pm$0.00} & 0.83{\scriptsize $\pm$0.02} & 0.0472{\scriptsize $\pm$0.0013} & 1.00{\scriptsize $\pm$0.00} \\

& v1 / gps\_gcn & 0.98{\scriptsize $\pm$0.05} & 0.0460{\scriptsize $\pm$0.0023} & 1.00{\scriptsize $\pm$0.00} & 0.79{\scriptsize $\pm$0.02} & 0.0478{\scriptsize $\pm$0.0008} & 1.00{\scriptsize $\pm$0.00} \\

& v1 / gps\_sage & 1.04{\scriptsize $\pm$0.05} & 0.0477{\scriptsize $\pm$0.0011} & 1.00{\scriptsize $\pm$0.00} & 0.83{\scriptsize $\pm$0.02} & 0.0471{\scriptsize $\pm$0.0008} & 1.00{\scriptsize $\pm$0.00} \\

& v1abs / gated\_gcn & 1.09{\scriptsize $\pm$0.05} & 0.0475{\scriptsize $\pm$0.0008} & 1.00{\scriptsize $\pm$0.00} & 0.87{\scriptsize $\pm$0.03} & 0.0475{\scriptsize $\pm$0.0007} & 1.00{\scriptsize $\pm$0.00} \\

& v2 / gated\_gcn & 1.03{\scriptsize $\pm$0.04} & 0.0475{\scriptsize $\pm$0.0011} & 1.00{\scriptsize $\pm$0.00} & 0.83{\scriptsize $\pm$0.02} & 0.0472{\scriptsize $\pm$0.0013} & 1.00{\scriptsize $\pm$0.00} \\

& v2 / gps\_gcn & \textbf{0.97{\scriptsize $\pm$0.05}} & 0.0471{\scriptsize $\pm$0.0015} & 1.00{\scriptsize $\pm$0.00} & 0.81{\scriptsize $\pm$0.01} & \textbf{0.0470{\scriptsize $\pm$0.0011}} & 1.00{\scriptsize $\pm$0.00} \\

& v2 / gps\_sage & \textbf{0.97{\scriptsize $\pm$0.05}} & \textbf{0.0459{\scriptsize $\pm$0.0021}} & 1.00{\scriptsize $\pm$0.00} & \textbf{0.77{\scriptsize $\pm$0.02}} & 0.0473{\scriptsize $\pm$0.0011} & 1.00{\scriptsize $\pm$0.00} \\
\addlinespace[2pt]
\multirow{6}{*}{Struct}
& v1 / gated\_gcn & 1.29{\scriptsize $\pm$0.18} & 0.2809{\scriptsize $\pm$0.0471} & 0.00{\scriptsize $\pm$0.00} & 0.88{\scriptsize $\pm$0.10} & 0.2831{\scriptsize $\pm$0.0425} & 0.00{\scriptsize $\pm$0.00} \\

& v1 / gps\_gcn & 1.73{\scriptsize $\pm$0.05} & 0.0885{\scriptsize $\pm$0.0010} & 0.00{\scriptsize $\pm$0.00} & 1.31{\scriptsize $\pm$0.01} & 0.0957{\scriptsize $\pm$0.0010} & 0.00{\scriptsize $\pm$0.00} \\

& v1 / gps\_sage & 1.73{\scriptsize $\pm$0.04} & 0.0903{\scriptsize $\pm$0.0051} & 0.00{\scriptsize $\pm$0.00} & 1.32{\scriptsize $\pm$0.02} & 0.0961{\scriptsize $\pm$0.0029} & 0.00{\scriptsize $\pm$0.00} \\

& v2 / gated\_gcn & 1.12{\scriptsize $\pm$0.28} & 0.2548{\scriptsize $\pm$0.0577} & 0.00{\scriptsize $\pm$0.00} & 0.93{\scriptsize $\pm$0.17} & 0.2620{\scriptsize $\pm$0.0553} & 0.00{\scriptsize $\pm$0.00} \\

& v2 / gps\_gcn & 1.75{\scriptsize $\pm$0.03} & 0.0886{\scriptsize $\pm$0.0011} & 0.00{\scriptsize $\pm$0.00} & 1.32{\scriptsize $\pm$0.01} & 0.0961{\scriptsize $\pm$0.0010} & 0.00{\scriptsize $\pm$0.00} \\

& v2 / gps\_sage & 1.43{\scriptsize $\pm$0.19} & 0.1436{\scriptsize $\pm$0.0527} & 0.00{\scriptsize $\pm$0.00} & 1.10{\scriptsize $\pm$0.13} & 0.1572{\scriptsize $\pm$0.0565} & 0.00{\scriptsize $\pm$0.00} \\
\addlinespace[2pt]
\multirow{2}{*}{Baseline}
& PI-GNN & -- & -- & 0.00 & -- & -- & 0.00 \\
& full-GCN & -- & -- & 0.00 & -- & -- & 0.00 \\
\bottomrule
\end{tabular}
\end{table}

\paragraph{Results.}
{The completed benchmark gives a clear family-dependent picture. For book graphs, random-feature configurations are the viable regime: the selected v1/GPS-SAGE row has OOD $\rho=1.08\pm0.08$ with Mono$_{\mathrm{OOD}}=0.0452\pm0.0034$, while the structural-feature rows miss the hit threshold. For Myciel graphs, structural v1/GPS-GCN is the selected configuration, with ID $\rho=1.45\pm0.28$ and OOD $\rho=1.54\pm0.40$ while keeping hit-rate $1.00$. Queen graphs give the strongest OOD generalization case: v2/GPS-SAGE with random features gives OOD $\rho=0.76\pm0.02$ and Mono$_{\mathrm{OOD}}=0.0471\pm0.0008$. The abs-random rows are feasible low-conflict alternatives, but they are not the best family-level configurations.}

{Taken together, these results support three conclusions. First, the best feature construction is family-dependent: book and queen favor random features, whereas Myciel favors structural features. Second, within the random-initialized family, the regular v1/v2 pipelines are stronger than their abs-random counterparts on the reported family-level comparisons, while the abs-random rows serve as low-conflict robustness checks. Third, the external baselines are only partially informative in this benchmark: when they finish an ID family instance they can be very strong, but their repeated OOD failures mean that they do not overturn the learned-model comparison.}

%% file: appendix/main_cycle_experiments.tex
\section{Cycle experiments}

\paragraph{Implementation.}
This section studies a cycle generalization benchmark in which training is performed only on cycles $C_n$ with $50 \leq n \leq 200$, using random node initialization. The learned methods are GatedGCN, UnitaryMP, GPS-GCN, and GPS-SAGE, each run in a v1abs-style and v2abs-style pipeline. The test set contains $40$ small graphs and $20$ large cycles. The small subset contains $20$ cycles $C_{20},\dots,C_{39}$ and $20$ non-cycle graphs, while the large subset contains $C_{7000},\dots,C_{7019}$. We additionally evaluate the PI-GNN baseline of \citet{schuetz2022pignn} and the warm-start ColoringGNN baseline of \citet{vanderbush2026warmstart}. Both baselines are run on the same test graphs with a hard per-graph runtime budget of $900$ seconds. Experiments not completed within this time budget are denoted as ``timeout'' in the report.

\paragraph{Combined averages for the learned methods.}
{Table~\ref{tab:avg-v1} and Table~\ref{tab:avg-v2} summarize the 40-graph small suite and the 20 large-cycle stress suite under ten random seeds.}

\begin{table}[htbp]
\centering
\footnotesize
\setlength{\tabcolsep}{4pt}
\renewcommand{\arraystretch}{1.12}
\caption{Combined averages for the v1abs-style random-feature benchmark. {Entries are mean $\pm$ standard deviation over ten random seeds.}}
\label{tab:avg-v1}
\begin{tabular}{lcccccc}
\toprule
Encoder & Small $k/\chi$ & Small Mono & Small Hit & Large $k/\chi$ & Large Mono & Large Hit \\
\midrule
GatedGCN & \textbf{1.48{\scriptsize $\pm$0.10}} & 0.0216{\scriptsize $\pm$0.0026} & 0.89{\scriptsize $\pm$0.01} & \textbf{1.42{\scriptsize $\pm$0.22}} & 0.0133{\scriptsize $\pm$0.0100} & 1.00{\scriptsize $\pm$0.00} \\
UnitaryMP & \textbf{1.48{\scriptsize $\pm$0.06}} & 0.0214{\scriptsize $\pm$0.0024} & 0.91{\scriptsize $\pm$0.01} & 1.67{\scriptsize $\pm$0.00} & \textbf{0.0096{\scriptsize $\pm$0.0012}} & 1.00{\scriptsize $\pm$0.00} \\
GPS-GCN & 1.64{\scriptsize $\pm$0.11} & \textbf{0.0213{\scriptsize $\pm$0.0019}} & \textbf{0.92{\scriptsize $\pm$0.02}} & 1.93{\scriptsize $\pm$0.19} & 0.0270{\scriptsize $\pm$0.0082} & 1.00{\scriptsize $\pm$0.00} \\
GPS-SAGE & 1.74{\scriptsize $\pm$0.11} & 0.0238{\scriptsize $\pm$0.0021} & 0.90{\scriptsize $\pm$0.02} & 2.01{\scriptsize $\pm$0.19} & 0.0261{\scriptsize $\pm$0.0110} & 1.00{\scriptsize $\pm$0.00} \\
\bottomrule
\end{tabular}
\end{table}

\begin{table}[htbp]
\centering
\footnotesize
\setlength{\tabcolsep}{4pt}
\renewcommand{\arraystretch}{1.12}
\caption{Combined averages for the v2abs-style random-feature benchmark. {Entries are mean $\pm$ standard deviation over ten random seeds.}}
\label{tab:avg-v2}
\begin{tabular}{lcccccc}
\toprule
Encoder & Small $k/\chi$ & Small Mono & Small Hit & Large $k/\chi$ & Large Mono & Large Hit \\
\midrule
GatedGCN & \textbf{1.44{\scriptsize $\pm$0.06}} & 0.0224{\scriptsize $\pm$0.0020} & 0.90{\scriptsize $\pm$0.01} & \textbf{1.62{\scriptsize $\pm$0.13}} & \textbf{0.0058{\scriptsize $\pm$0.0076}} & 1.00{\scriptsize $\pm$0.00} \\
UnitaryMP & \textbf{1.44{\scriptsize $\pm$0.03}} & 0.0226{\scriptsize $\pm$0.0025} & 0.92{\scriptsize $\pm$0.02} & \textbf{1.62{\scriptsize $\pm$0.13}} & 0.0131{\scriptsize $\pm$0.0073} & 1.00{\scriptsize $\pm$0.00} \\
GPS-GCN & 1.87{\scriptsize $\pm$0.11} & 0.0235{\scriptsize $\pm$0.0021} & 0.90{\scriptsize $\pm$0.02} & 2.20{\scriptsize $\pm$0.28} & 0.0269{\scriptsize $\pm$0.0107} & 1.00{\scriptsize $\pm$0.00} \\
GPS-SAGE & 1.57{\scriptsize $\pm$0.10} & \textbf{0.0196{\scriptsize $\pm$0.0028}} & \textbf{0.94{\scriptsize $\pm$0.03}} & 1.66{\scriptsize $\pm$0.15} & 0.0199{\scriptsize $\pm$0.0101} & 1.00{\scriptsize $\pm$0.00} \\
\bottomrule
\end{tabular}
\end{table}

\paragraph{Comparison with benchmarks.}
{Table~\ref{tab:baseline-summary} compares a balanced learned reference recipe against the two baselines. The learned reference is v1abs + GatedGCN, which has the best large-cycle ratio among the reported learned recipes and competitive small-suite behavior while solving all large cycles under the time cap.}

\begin{table}[htbp]
\centering
\footnotesize
\setlength{\tabcolsep}{4pt}
\renewcommand{\arraystretch}{1.12}
\caption{Aggregate comparison against PI-GNN and the warm-start ColoringGNN baseline. {Learned rows are mean $\pm$ standard deviation over ten random seeds; external baseline rows are unchanged.}}
\label{tab:baseline-summary}
\begin{tabular}{llcccccc}
\toprule
Method & Split & Completed & Timeouts & $k/\chi$ & Mono & Hit & Total runtime (s) \\
\midrule
v1abs + GatedGCN & Small 40 graphs & 40 & 0 & 1.48{\scriptsize $\pm$0.10} & 0.0216{\scriptsize $\pm$0.0026} & 0.89{\scriptsize $\pm$0.01} & 6.82{\scriptsize $\pm$0.87} \\
PI-GNN  & Small 40 graphs & 39 & 1 & 1.009 & 0.0126 & 1.000 & 4701.1 \\
Warm-start ColoringGNN & Small 40 graphs & 40 & 0 & 0.987 & 0.0141 & 1.000 & 2282.7 \\
\midrule
v1abs + GatedGCN & Large 20 cycles & 20 & 0 & 1.42{\scriptsize $\pm$0.22} & 0.0133{\scriptsize $\pm$0.0100} & 1.00{\scriptsize $\pm$0.00} & 6.38{\scriptsize $\pm$0.39} \\
PI-GNN  & Large 20 cycles & 0 & 20 & -- & -- & -- & 18000.0 \\
Warm-start ColoringGNN & Large 20 cycles & 0 & 20 & -- & -- & -- & 18000.0 \\
\bottomrule
\end{tabular}
\end{table}

\paragraph{Runtime.}
For the learned methods, training time is reported once per recipe and test-time runtime is the average decode time per graph. For PI-GNN and the warm-start ColoringGNN baseline, the runtime table reports per-graph solve time because these baselines are evaluated graph-by-graph under the same $k$ sweep.

\begin{table}[htbp]
\centering
\footnotesize
\setlength{\tabcolsep}{4pt}
\renewcommand{\arraystretch}{1.12}
\caption{Runtime summary for the learned random-feature cycle models. {Entries are mean $\pm$ standard deviation over ten random seeds.}}
\label{tab:our-runtime}
\begin{tabular}{llccc}
\toprule
Pipeline & Encoder & Train (s) & Avg small test (s) & Avg large test (s) \\
\midrule
v1abs & GatedGCN & \textbf{5.88{\scriptsize $\pm$0.36}} & 0.0236{\scriptsize $\pm$0.0199} & \textbf{0.0253{\scriptsize $\pm$0.0075}}  \\
v1abs & UnitaryMP & 7.57{\scriptsize $\pm$0.05} & 0.0162{\scriptsize $\pm$0.0008} & 0.0345{\scriptsize $\pm$0.0006} \\
v1abs & GPS-GCN & 12.12{\scriptsize $\pm$0.07} & 0.0187{\scriptsize $\pm$0.0014} & 0.0640{\scriptsize $\pm$0.0076} \\
v1abs & GPS-SAGE & 9.17{\scriptsize $\pm$0.05} & 0.0190{\scriptsize $\pm$0.0014} & 0.0537{\scriptsize $\pm$0.0079} \\
v2abs & GatedGCN & 10.38{\scriptsize $\pm$0.06} & \textbf{0.0147{\scriptsize $\pm$0.0007}} & 0.0333{\scriptsize $\pm$0.0049} \\
v2abs & UnitaryMP & 12.36{\scriptsize $\pm$0.07} & 0.0155{\scriptsize $\pm$0.0004} & 0.0329{\scriptsize $\pm$0.0044} \\
v2abs & GPS-GCN & 13.96{\scriptsize $\pm$0.09} & 0.0200{\scriptsize $\pm$0.0016} & 0.0603{\scriptsize $\pm$0.0107}  \\
v2abs & GPS-SAGE & 17.00{\scriptsize $\pm$0.09} & 0.0179{\scriptsize $\pm$0.0009} & 0.0446{\scriptsize $\pm$0.0056} \\
\bottomrule
\end{tabular}
\end{table}

\begin{table}[htbp]
\centering
\footnotesize
\setlength{\tabcolsep}{4pt}
\renewcommand{\arraystretch}{1.12}
\caption{Runtime summary for PI-GNN and the warm-start ColoringGNN baseline under a 15-minute per-graph budget.}
\label{tab:baseline-runtime}
\begin{tabular}{llcccc}
\toprule
Method & Split & Avg all (s) & Avg completed (s) & Max (s) & Timeouts \\
\midrule
PI-GNN  & Small 40 graphs & 117.53 & 97.47 & 900.0 & 1 \\
PI-GNN  & Large 20 cycles & 900.00 & -- & 900.0 & 20 \\
Warm-start ColoringGNN & Small 40 graphs & 57.07 & 57.07 & 415.5 & 0 \\
Warm-start ColoringGNN & Large 20 cycles & 900.00 & -- & 900.0 & 20 \\
\bottomrule
\end{tabular}
\end{table}

\paragraph{Results.}
{The next per-graph comparison focuses on two representative learned configurations together with the two external baselines. We compare the balanced v1abs + GatedGCN reference recipe, the v2abs + GPS-SAGE recipe used in the main-text per-graph table, PI-GNN, and the warm-start ColoringGNN baseline on each graph. Learned cells report seed means and standard deviations, while external baseline cells are single-run values; timed-out baseline graphs are shown as ``timeout''.}

\begin{table}[p]
\centering
\footnotesize
\setlength{\tabcolsep}{4pt}
\renewcommand{\arraystretch}{1.08}
\caption{Per-graph results on the 20 small cycle graphs. Cells are reported as $k$ / Mono. {Learned-method cells are mean $\pm$ standard deviation over ten random seeds; baseline cells are unchanged.}}
\label{tab:small-cycles-per-graph}
\begin{tabular}{lrrcccc}
\toprule
Graph & $n$ & $\chi$ & v1abs + GatedGCN & v2abs + GPS-SAGE & PI-GNN & Warm-start ColoringGNN \\
\midrule
C\_20 & 20 & 2 & 3.4{\scriptsize $\pm$0.5} / 0.0000{\scriptsize $\pm$0.0000} & 4.0{\scriptsize $\pm$0.8} / 0.0000{\scriptsize $\pm$0.0000} & 3 / 0.0000 & 3 / 0.0000 \\
C\_21 & 21 & 3 & 3.3{\scriptsize $\pm$0.5} / 0.0190{\scriptsize $\pm$0.0246} & 4.3{\scriptsize $\pm$2.1} / 0.0143{\scriptsize $\pm$0.0230} & 3 / 0.0000 & 2 / 0.0476 \\
C\_22 & 22 & 2 & 3.4{\scriptsize $\pm$0.5} / 0.0227{\scriptsize $\pm$0.0240} & 4.3{\scriptsize $\pm$0.7} / 0.0182{\scriptsize $\pm$0.0235} & 2 / 0.0000 & 2 / 0.0000 \\
C\_23 & 23 & 3 & 3.2{\scriptsize $\pm$0.4} / 0.0217{\scriptsize $\pm$0.0229} & 3.8{\scriptsize $\pm$0.8} / 0.0087{\scriptsize $\pm$0.0183} & 2 / 0.0435 & 2 / 0.0435 \\
C\_24 & 24 & 2 & 3.3{\scriptsize $\pm$0.5} / 0.0083{\scriptsize $\pm$0.0176} & 4.1{\scriptsize $\pm$1.1} / 0.0292{\scriptsize $\pm$0.0201} & 3 / 0.0000 & 3 / 0.0000 \\
C\_25 & 25 & 3 & 3.2{\scriptsize $\pm$0.4} / 0.0160{\scriptsize $\pm$0.0207} & 4.1{\scriptsize $\pm$0.6} / 0.0120{\scriptsize $\pm$0.0193} & 2 / 0.0400 & 3 / 0.0000 \\
C\_26 & 26 & 2 & 3.5{\scriptsize $\pm$0.7} / 0.0115{\scriptsize $\pm$0.0186} & 3.7{\scriptsize $\pm$0.9} / 0.0192{\scriptsize $\pm$0.0203} & 3 / 0.0000 & 3 / 0.0000 \\
C\_27 & 27 & 3 & 3.6{\scriptsize $\pm$1.0} / 0.0185{\scriptsize $\pm$0.0195} & 4.5{\scriptsize $\pm$1.3} / 0.0074{\scriptsize $\pm$0.0156} & 3 / 0.0000 & 2 / 0.0370 \\
C\_28 & 28 & 2 & 3.4{\scriptsize $\pm$0.5} / 0.0107{\scriptsize $\pm$0.0173} & 3.9{\scriptsize $\pm$0.6} / 0.0143{\scriptsize $\pm$0.0184} & 3 / 0.0000 & 3 / 0.0000 \\
C\_29 & 29 & 3 & 3.3{\scriptsize $\pm$0.5} / 0.0034{\scriptsize $\pm$0.0109} & 4.1{\scriptsize $\pm$1.2} / 0.0241{\scriptsize $\pm$0.0167} & 2 / 0.0345 & 3 / 0.0000 \\
C\_30 & 30 & 2 & 3.3{\scriptsize $\pm$0.5} / 0.0067{\scriptsize $\pm$0.0141} & 3.8{\scriptsize $\pm$0.6} / 0.0133{\scriptsize $\pm$0.0172} & 3 / 0.0000 & 3 / 0.0000 \\
C\_31 & 31 & 3 & 3.3{\scriptsize $\pm$0.5} / 0.0032{\scriptsize $\pm$0.0102} & 4.0{\scriptsize $\pm$1.1} / 0.0032{\scriptsize $\pm$0.0102} & 3 / 0.0000 & 3 / 0.0000 \\
C\_32 & 32 & 2 & 3.6{\scriptsize $\pm$0.8} / 0.0125{\scriptsize $\pm$0.0161} & 4.2{\scriptsize $\pm$0.8} / 0.0063{\scriptsize $\pm$0.0132} & 3 / 0.0000 & 2 / 0.0000 \\
C\_33 & 33 & 3 & 3.4{\scriptsize $\pm$0.7} / 0.0121{\scriptsize $\pm$0.0156} & 4.1{\scriptsize $\pm$0.9} / 0.0121{\scriptsize $\pm$0.0156} & 2 / 0.0303 & 2 / 0.0303 \\
C\_34 & 34 & 2 & 3.7{\scriptsize $\pm$0.8} / 0.0059{\scriptsize $\pm$0.0124} & 3.7{\scriptsize $\pm$0.5} / 0.0118{\scriptsize $\pm$0.0152} & 2 / 0.0000 & 3 / 0.0000 \\
C\_35 & 35 & 3 & 3.6{\scriptsize $\pm$0.7} / 0.0086{\scriptsize $\pm$0.0138} & 4.0{\scriptsize $\pm$0.8} / 0.0114{\scriptsize $\pm$0.0148} & 3 / 0.0000 & 2 / 0.0286 \\
C\_36 & 36 & 2 & 3.6{\scriptsize $\pm$0.5} / 0.0111{\scriptsize $\pm$0.0143} & 4.0{\scriptsize $\pm$0.5} / 0.0167{\scriptsize $\pm$0.0143} & 3 / 0.0000 & 3 / 0.0000 \\
C\_37 & 37 & 3 & 3.4{\scriptsize $\pm$0.5} / 0.0027{\scriptsize $\pm$0.0085} & 4.3{\scriptsize $\pm$1.1} / 0.0081{\scriptsize $\pm$0.0131} & 3 / 0.0000 & 2 / 0.0270 \\
C\_38 & 38 & 2 & 3.5{\scriptsize $\pm$0.5} / 0.0079{\scriptsize $\pm$0.0127} & 3.9{\scriptsize $\pm$1.0} / 0.0132{\scriptsize $\pm$0.0139} & 3 / 0.0000 & 3 / 0.0000 \\
C\_39 & 39 & 3 & 3.4{\scriptsize $\pm$0.5} / 0.0103{\scriptsize $\pm$0.0132} & 4.5{\scriptsize $\pm$1.1} / 0.0077{\scriptsize $\pm$0.0124} & 3 / 0.0000 & 3 / 0.0000 \\
\bottomrule
\end{tabular}
\end{table}

\begin{table}[p]
\centering
\footnotesize
\setlength{\tabcolsep}{4pt}
\renewcommand{\arraystretch}{1.08}
\caption{Per-graph results on the 20 non-cycle graphs. Cells are reported as $k$ / Mono. A timed-out baseline graph is shown as ``timeout''. {Learned-method cells are mean $\pm$ standard deviation over ten random seeds; baseline cells are unchanged.}}
\label{tab:small-noncycle-per-graph}
\begin{tabular}{lrrcccc}
\toprule
Graph & $n$ & $\chi$ & v1abs + GatedGCN & v2abs + GPS-SAGE & PI-GNN & Warm-start ColoringGNN \\
\midrule
K\_5 & 5 & 5 & 5.0{\scriptsize $\pm$0.0} / 0.0000{\scriptsize $\pm$0.0000} & 5.0{\scriptsize $\pm$0.0} / 0.0000{\scriptsize $\pm$0.0000} & 5 / 0.0000 & 5 / 0.0000 \\
K\_8 & 8 & 8 & 7.0{\scriptsize $\pm$0.0} / 0.0357{\scriptsize $\pm$0.0000} & 7.0{\scriptsize $\pm$0.0} / 0.0357{\scriptsize $\pm$0.0000} & 8 / 0.0357 & 7 / 0.0357 \\
K\_10 & 10 & 10 & 8.1{\scriptsize $\pm$0.3} / 0.0422{\scriptsize $\pm$0.0070} & 8.3{\scriptsize $\pm$0.5} / 0.0378{\scriptsize $\pm$0.0107} & 8 / 0.0444 & 8 / 0.0444 \\
K\_\{5,5\} & 10 & 2 & 3.0{\scriptsize $\pm$1.9} / 0.0040{\scriptsize $\pm$0.0126} & 3.3{\scriptsize $\pm$1.6} / 0.0040{\scriptsize $\pm$0.0126} & 2 / 0.0000 & 2 / 0.0000 \\
K\_\{4,8\} & 12 & 2 & 4.7{\scriptsize $\pm$2.9} / 0.0063{\scriptsize $\pm$0.0132} & 2.8{\scriptsize $\pm$0.6} / 0.0000{\scriptsize $\pm$0.0000} & 2 / 0.0000 & 2 / 0.0000 \\
K\_\{3,10\} & 13 & 2 & 4.2{\scriptsize $\pm$2.4} / 0.0067{\scriptsize $\pm$0.0141} & 3.4{\scriptsize $\pm$0.8} / 0.0033{\scriptsize $\pm$0.0105} & 2 / 0.0000 & 2 / 0.0000 \\
W\_11 & 11 & 3 & 7.1{\scriptsize $\pm$2.0} / 0.0000{\scriptsize $\pm$0.0000} & 6.6{\scriptsize $\pm$1.6} / 0.0000{\scriptsize $\pm$0.0000} & 3 / 0.0000 & 3 / 0.0000 \\
W\_12 & 12 & 4 & 5.0{\scriptsize $\pm$1.3} / 0.0273{\scriptsize $\pm$0.0235} & 4.6{\scriptsize $\pm$1.3} / 0.0318{\scriptsize $\pm$0.0220} & 3 / 0.0455 & 3 / 0.0455 \\
W\_13 & 13 & 3 & 5.6{\scriptsize $\pm$2.2} / 0.0208{\scriptsize $\pm$0.0220} & 5.5{\scriptsize $\pm$1.4} / 0.0292{\scriptsize $\pm$0.0201} & 3 / 0.0000 & 3 / 0.0000 \\
W\_14 & 14 & 4 & 5.1{\scriptsize $\pm$0.9} / 0.0231{\scriptsize $\pm$0.0199} & 5.9{\scriptsize $\pm$2.5} / 0.0269{\scriptsize $\pm$0.0186} & 3 / 0.0385 & 3 / 0.0385 \\
Petersen & 10 & 3 & 4.6{\scriptsize $\pm$1.3} / 0.0000{\scriptsize $\pm$0.0000} & 5.2{\scriptsize $\pm$1.7} / 0.0000{\scriptsize $\pm$0.0000} & 3 / 0.0000 & 3 / 0.0000 \\
Icosahedral & 12 & 4 & 7.0{\scriptsize $\pm$1.4} / 0.0300{\scriptsize $\pm$0.0105} & 6.4{\scriptsize $\pm$1.4} / 0.0267{\scriptsize $\pm$0.0141} & 4 / 0.0333 & 5 / 0.0000 \\
KG(7,2) & 21 & 5 & 8.9{\scriptsize $\pm$1.0} / 0.0610{\scriptsize $\pm$0.0225} & 8.6{\scriptsize $\pm$1.2} / 0.0457{\scriptsize $\pm$0.0040} & 4 / 0.0286 & 4 / 0.0286 \\
KG(9,3) & 84 & 5 & 8.7{\scriptsize $\pm$1.1} / 0.0969{\scriptsize $\pm$0.0147} & 9.6{\scriptsize $\pm$0.5} / 0.0661{\scriptsize $\pm$0.0133} & 4 / 0.0119 & 4 / 0.0119 \\
KG(10,3) & 120 & 6 & 10.0{\scriptsize $\pm$0.0} / 0.1071{\scriptsize $\pm$0.0220} & 9.2{\scriptsize $\pm$1.0} / 0.0924{\scriptsize $\pm$0.0160} & timeout & 4 / 0.0333 \\
Mycielski(C5)\^{}0 & 5 & 3 & 3.0{\scriptsize $\pm$0.0} / 0.0000{\scriptsize $\pm$0.0000} & 3.4{\scriptsize $\pm$0.5} / 0.0000{\scriptsize $\pm$0.0000} & 3 / 0.0000 & 3 / 0.0000 \\
Mycielski(C5)\^{}1 & 11 & 4 & 5.9{\scriptsize $\pm$2.2} / 0.0050{\scriptsize $\pm$0.0158} & 4.5{\scriptsize $\pm$1.0} / 0.0000{\scriptsize $\pm$0.0000} & 3 / 0.0500 & 3 / 0.0500 \\
Mycielski(C5)\^{}2 & 23 & 5 & 7.1{\scriptsize $\pm$1.6} / 0.0310{\scriptsize $\pm$0.0129} & 6.6{\scriptsize $\pm$2.1} / 0.0352{\scriptsize $\pm$0.0137} & 4 / 0.0141 & 4 / 0.0141 \\
Mycielski(C5)\^{}3 & 47 & 6 & 9.2{\scriptsize $\pm$1.0} / 0.0665{\scriptsize $\pm$0.0182} & 9.5{\scriptsize $\pm$0.5} / 0.0449{\scriptsize $\pm$0.0100} & 4 / 0.0169 & 4 / 0.0254 \\
Mycielski(C5)\^{}4 & 95 & 7 & 9.1{\scriptsize $\pm$0.9} / 0.0873{\scriptsize $\pm$0.0167} & 9.4{\scriptsize $\pm$0.5} / 0.0519{\scriptsize $\pm$0.0155} & 4 / 0.0225 & 4 / 0.0212 \\
\bottomrule
\end{tabular}
\end{table}

{Overall, v1abs + GatedGCN is the most balanced learned recipe on the reported cycle-trained benchmark. Across ten seeds it reaches small-suite $k/\chi=1.48\pm0.10$, Mono $=0.0216\pm0.0026$, and hit-rate $0.89\pm0.01$; on the 20 large cycles it reaches $k/\chi=1.42\pm0.22$, Mono $=0.0133\pm0.0100$, and hit-rate $1.00\pm0.00$. The v2abs GatedGCN and UnitaryMP rows have slightly lower small-suite ratios ($1.44$), but they use more colors on the large-cycle stress test. On the small 40-graph subset, PI-GNN and the warm-start ColoringGNN baseline are more color-efficient, but both external baselines time out on all large-cycle instances under the 15-minute cap. The runtime split is therefore sharp: learned models require a short one-time training phase and then decode large cycles quickly, whereas the direct baselines become infeasible on the 7000-node cycle stress test.}

\section{Details on Experimental Setups}

All experiments were implemented in Python. Neural-network models were built using PyTorch (v2.5.1+cu121) and PyTorch Geometric (v2.7.0), with graph construction and synthetic benchmark generation handled through NetworkX (v3.4.2). Numerical computation and data processing used NumPy (v2.2.6), SciPy (v1.15.2), and pandas (v2.3.3). For the DGL-based baseline reruns we used DGL (v1.1.3). Models were trained using the optimization and initialization routines provided by the default PyTorch stack, with AdamW \cite{loshchilov2019decoupled} used as the optimizer in our learned pipelines. 

We ran the experiments on a local compute server, and the hardware configuration used is summarized in Table~\ref{tab:hardware-specs}.

\begin{table}[H]
\centering
\caption{Hardware specifications.}
\label{tab:hardware-specs}
\begin{tabular}{ll}
\toprule
Component & Specification \\
\midrule
Architecture & x86\_64 \\
OS & Rocky Linux 8.10 (Green Obsidian) \\
CPU & AMD EPYC 9374F 32-Core Processor ($\times 2$) \\
GPU & NVIDIA H200 \\
GPU memory & 143771 MiB (approximately 141 GB) \\
RAM & 1.5 TiB \\
\bottomrule
\end{tabular}
\end{table}

The experiments use three groups of datasets. First, the citation-graph experiments use the Cora, CiteSeer, and PubMed citation graphs, loaded through the standard Planetoid benchmark interface in PyTorch Geometric \cite{yang2016revisiting}. Second, the COLOR-family experiments use book, Myciel, and queen graphs; the book instances are taken from the CMU COLOR benchmark collection \cite{johnson1996cliques}, while the Myciel and queen families are generated programmatically from their standard graph constructions. Third, the cycle experiments use synthetic cycle graphs for training and held-out evaluation, together with additional manually specified benchmark graphs such as complete graphs, complete bipartite graphs, wheel graphs, Kneser graphs, Mycielski graphs, Petersen, Icosahedral, and very large cycles; these graphs are generated through NetworkX or through explicit graph-construction utilities in our codebase.

Table~\ref{tab:software-licenses} lists the licenses of the main software libraries used in the experiments.

\begin{table}[H]
\centering
\caption{Main software licenses.}
\label{tab:software-licenses}
\begin{tabular}{ll}
\toprule
Software & License \\
\midrule
PyTorch & BSD-3-Clause \\
PyTorch Geometric & MIT \\
NetworkX & BSD-3-Clause \\
NumPy & BSD-3-Clause \\
SciPy & BSD-3-Clause \\
pandas & BSD-3-Clause \\
DGL & Apache-2.0 \\
\bottomrule
\end{tabular}
\end{table}

\section{LLM Usage Disclosure}
We used an LLM to help with code writing and with polishing the paper text.

%% file: references.bib
@ARTICLE{li2020nonsmooth,
  author={Li, Jiajin and So, Anthony Man-Cho and Ma, Wing-Kin},
  journal={IEEE Signal Processing Magazine}, 
  title={Understanding Notions of Stationarity in Nonsmooth Optimization: A Guided Tour of Various Constructions of Subdifferential for Nonsmooth Functions}, 
  year={2020},
  volume={37},
  number={5},
  pages={18-31},
  doi={10.1109/MSP.2020.3003845}}

@book{rockafellar1998variational,
  author    = {Rockafellar, R. Tyrrell and Wets, Roger J.-B.},
  title     = {Variational Analysis},
  series    = {Grundlehren der Mathematischen Wissenschaften},
  volume    = {317},
  publisher = {Springer},
  address   = {Berlin, Heidelberg},
  year      = {1998},
  doi       = {10.1007/978-3-642-02431-3}
}

@article{hosseini2013nonsmooth,
  author  = {Hosseini, Seyedehsomayeh and Pouryayevali, Mohammad Reza},
  title   = {Nonsmooth Optimization Techniques on Riemannian Manifolds},
  journal = {Journal of Optimization Theory and Applications},
  volume  = {158},
  number  = {2},
  pages   = {328--342},
  year    = {2013},
  doi     = {10.1007/s10957-012-0250-z}
}

@inproceedings{kipf2017semisupervised,
  title={Semi-Supervised Classification with Graph Convolutional Networks},
  author={Kipf, Thomas N. and Welling, Max},
  booktitle={International Conference on Learning Representations},
  year={2017},
  url={https://openreview.net/forum?id=SJU4ayYgl}
}

@inproceedings{velickovic2018graph,
  title={Graph Attention Networks},
  author={Veli{\v{c}}kovi{\'c}, Petar and Cucurull, Guillem and Casanova, Arantxa and Romero, Adriana and Li{\`o}, Pietro and Bengio, Yoshua},
  booktitle={International Conference on Learning Representations},
  year={2018},
  url={https://openreview.net/forum?id=rJXMpikCZ}
}

@inproceedings{xu2019powerful,
  title={How Powerful Are Graph Neural Networks?},
  author={Xu, Keyulu and Hu, Weihua and Leskovec, Jure and Jegelka, Stefanie},
  booktitle={International Conference on Learning Representations},
  year={2019},
  url={https://openreview.net/forum?id=ryGs6iA5Km}
}

@inproceedings{hamilton2017inductive,
  title={Inductive Representation Learning on Large Graphs},
  author={Hamilton, William L. and Ying, Rex and Leskovec, Jure},
  booktitle={Advances in Neural Information Processing Systems},
  volume={30},
  year={2017}
}

@inproceedings{bresson2017residual,
  title={Residual Gated Graph ConvNets},
  author={Bresson, Xavier and Laurent, Thomas},
  booktitle={arXiv preprint arXiv:1711.07553},
  year={2017}
}

@inproceedings{li2019deepgcns,
  title={DeepGCNs: Can GCNs Go as Deep as CNNs?},
  author={Li, Guohao and M{\"u}ller, Matthias and Thabet, Ali and Ghanem, Bernard},
  booktitle={Proceedings of the IEEE/CVF International Conference on Computer Vision},
  year={2019}
}

@inproceedings{rampasek2022recipe,
  title={Recipe for a General, Powerful, Scalable Graph Transformer},
  author={Ramp{\'a}{\v{s}}ek, Ladislav and Galkin, Michael and Dwivedi, Vijay Prakash and Luu, Anh Tuan and Wolf, Guy and Beaini, Dominique},
  booktitle={Advances in Neural Information Processing Systems},
  volume={35},
  year={2022}
}

@inproceedings{he2023generalization,
  title={A Generalization of ViT/MLP-Mixer to Graphs},
  author={He, Xiaoxin and Hooi, Bryan and Laurent, Thomas and Perold, Adam and LeCun, Yann and Bresson, Xavier},
  booktitle={International Conference on Machine Learning},
  year={2023}
}

@inproceedings{shirzad2023exphormer,
  title={Exphormer: Sparse Transformers for Graphs},
  author={Shirzad, Hamed and Velingker, Ameya and Venkatachalam, Balaji and Sutherland, Danica J. and Sinop, Ali Kemal},
  booktitle={International Conference on Machine Learning},
  year={2023}
}

@inproceedings{kiani2024unitary,
title={Unitary Convolutions for Learning on Graphs and Groups},
author={Bobak Kiani and Lukas Fesser and Melanie Weber},
booktitle={The Thirty-eighth Annual Conference on Neural Information Processing Systems},
year={2024},
url={https://openreview.net/forum?id=lG1VEQJvUH}
}

@inproceedings{maron2019universality,
  title = 	 {On the Universality of Invariant Networks},
  author =       {Maron, Haggai and Fetaya, Ethan and Segol, Nimrod and Lipman, Yaron},
  booktitle = 	 {Proceedings of the 36th International Conference on Machine Learning},
  pages = 	 {4363--4371},
  year = 	 {2019},
  editor = 	 {Chaudhuri, Kamalika and Salakhutdinov, Ruslan},
  volume = 	 {97},
  series = 	 {Proceedings of Machine Learning Research},
  month = 	 {09--15 Jun},
  publisher =    {PMLR},
  url = 	 {https://proceedings.mlr.press/v97/maron19a.html}
}

@inproceedings{keriven2019universal,
    author = {Keriven, Nicolas and Peyr\'{e}, Gabriel},
    title = {Universal invariant and equivariant graph neural networks},
    year = {2019},
    publisher = {Curran Associates Inc.},
    address = {Red Hook, NY, USA},
    booktitle = {Proceedings of the 33rd International Conference on Neural Information Processing Systems},
    articleno = {637},
    numpages = {10}
}

@inproceedings{loukas2020depthwidth,
    title={What graph neural networks cannot learn: depth vs width},
    author={Andreas Loukas},
    booktitle={International Conference on Learning Representations},
    year={2020},
    url={https://openreview.net/forum?id=B1l2bp4YwS}
}

@inproceedings{azizian2021expressive,
  title={Expressive power of invariant and equivariant graph neural networks},
  author={Azizian, Wa{\"\i}ss and Lelarge, Marc},
  booktitle={International Conference on Learning Representations},
  year={2021},
  url={https://openreview.net/forum?id=lxHgXYN4bwl}
}

@inproceedings{abboud2021random,
  title={The Surprising Power of Graph Neural Networks with Random Node Initialization},
  author    = {Ralph Abboud and {\.I}smail {\.I}lkan Ceylan and Martin Grohe 
               and Thomas Lukasiewicz},
  booktitle={Proceedings of the Thirtieth International Joint Conference on Artifical Intelligence ({IJCAI})},
  year={2021}
}

@article{schuetz2022pignn,
  title = {Graph coloring with physics-inspired graph neural networks},
  author = {Schuetz, Martin J. A. and Brubaker, J. Kyle and Zhu, Zhihuai and Katzgraber, Helmut G.},
  journal = {Phys. Rev. Res.},
  volume = {4},
  issue = {4},
  pages = {043131},
  numpages = {10},
  year = {2022},
  month = {Nov},
  publisher = {American Physical Society},
  doi = {10.1103/PhysRevResearch.4.043131},
  url = {https://link.aps.org/doi/10.1103/PhysRevResearch.4.043131}
}

@article{vanderbush2026warmstart,
  title={Neural Algorithmic Reasoning for Approximate $ k $-Coloring with Recursive Warm Starts},
  author={Vanderbush, Knut and Weber, Melanie},
  journal={Machine Learning: Science and Technology},
  year={2026}
}

@inproceedings{selman1992newmethod,
  author    = {Bart Selman and Hector J. Levesque and David G. Mitchell},
  title     = {A New Method for Solving Hard Satisfiability Problems},
  booktitle = {Proceedings of the Tenth National Conference on Artificial Intelligence (AAAI-92)},
  pages     = {440--446},
  year      = {1992},
  publisher = {AAAI Press},
  address   = {San Jose, California}
}

@article{vangelder2008coloring,
  author  = {Allen Van Gelder},
  title   = {Another Look at Graph Coloring via Propositional Satisfiability},
  journal = {Discrete Applied Mathematics},
  volume  = {156},
  number  = {2},
  pages   = {230--243},
  year    = {2008},
  doi     = {10.1016/j.dam.2006.07.016}
}

@article{daniel1979dsatur,
author = {Br\'{e}laz, Daniel},
title = {New methods to color the vertices of a graph},
year = {1979},
issue_date = {April 1979},
publisher = {Association for Computing Machinery},
address = {New York, NY, USA},
volume = {22},
number = {4},
issn = {0001-0782},
url = {https://doi.org/10.1145/359094.359101},
doi = {10.1145/359094.359101},
journal = {Commun. ACM},
month = apr,
pages = {251–256},
numpages = {6}
}

@inproceedings{ji2020directional,
  author    = {Ji, Ziwei and Telgarsky, Matus},
  title     = {Directional Convergence and Implicit Bias in Deep Learning},
  booktitle = {Advances in Neural Information Processing Systems},
  year      = {2020}
}

@inproceedings{lyu2019gradient,
  author    = {Lyu, Kaifeng and Li, Jian},
  title     = {Gradient Descent Maximizes the Margin of Homogeneous Neural Networks},
  booktitle = {International Conference on Learning Representations},
  year      = {2020}
}

@article{papyan2020prevalence,
  author  = {Papyan, Vardan and Han, Xiaohui and Donoho, David},
  title   = {Prevalence of Neural Collapse during the Terminal Phase of Deep Learning Training},
  journal = {Proceedings of the National Academy of Sciences},
  volume  = {117},
  number  = {40},
  pages   = {24652--24663},
  year    = {2020}
}

@article{soudry2018implicit,
  author  = {Soudry, Daniel and Hoffer, Elad and Nacson, Mor Shpigel and Gunasekar, Suriya and Srebro, Nathan},
  title   = {The Implicit Bias of Gradient Descent on Separable Data},
  journal = {Journal of Machine Learning Research},
  volume  = {19},
  number  = {70},
  pages   = {1--57},
  year    = {2018}
}

@article{lovasz1979shannon,
  author  = {Lov{\'a}sz, L{\'a}szl{\'o}},
  title   = {On the {S}hannon Capacity of a Graph},
  journal = {IEEE Transactions on Information Theory},
  volume  = {25},
  number  = {1},
  pages   = {1--7},
  year    = {1979}
}

@article{Wigderson1983,
  author  = {Wigderson, Avi},
  title   = {Improving the Performance Guarantee for Approximate Graph Coloring},
  journal = {Journal of the ACM},
  volume  = {30},
  number  = {4},
  pages   = {729--735},
  year    = {1983}
}

@article{karger98colorsdp,
  author  = {Karger, David R. and Motwani, Rajeev and Sudan, Madhu},
  title   = {Approximate Graph Coloring by Semidefinite Programming},
  journal = {Journal of the ACM},
  volume  = {45},
  number  = {2},
  pages   = {246--265},
  year    = {1998}
}

@article{feigekilian1998zeroknowledge,
  author  = {Feige, Uriel and Kilian, Joe},
  title   = {Zero Knowledge and the Chromatic Number},
  journal = {Journal of Computer and System Sciences},
  volume  = {57},
  number  = {2},
  pages   = {187--199},
  year    = {1998}
}

@article{dinur2009coloringugc,
  author  = {Dinur, Irit and Mossel, Elchanan and Regev, Oded},
  title   = {Conditional Hardness for Approximate Coloring},
  journal = {SIAM Journal on Computing},
  volume  = {39},
  number  = {3},
  pages   = {843--873},
  year    = {2009}
}

@inproceedings{loshchilov2019decoupled,
  author    = {Ilya Loshchilov and Frank Hutter},
  title     = {Decoupled Weight Decay Regularization},
  booktitle = {International Conference on Learning Representations},
  year      = {2019},
  url       = {https://openreview.net/forum?id=Bkg6RiCqY7}
}

@inproceedings{yang2016revisiting,
  author    = {Zhilin Yang and William Cohen and Ruslan Salakhudinov},
  title     = {Revisiting Semi-Supervised Learning with Graph Embeddings},
  booktitle = {Proceedings of the 33rd International Conference on Machine Learning},
  series    = {Proceedings of Machine Learning Research},
  volume    = {48},
  pages     = {40--48},
  year      = {2016},
  publisher = {PMLR},
  url       = {https://proceedings.mlr.press/v48/yanga16.html}
}

@book{johnson1996cliques,
  editor    = {David S. Johnson and Michael A. Trick},
  title     = {Cliques, Coloring, and Satisfiability: Second {DIMACS} Implementation Challenge},
  series    = {DIMACS Series in Discrete Mathematics and Theoretical Computer Science},
  volume    = {26},
  publisher = {American Mathematical Society},
  address   = {Providence, RI},
  year      = {1996}
}
